\def\eqref#1{equation~\ref{#1}}
\def\Eqref#1{Equation~\ref{#1}}
\def\1{\bm{1}}
\DeclareMathAlphabet{\mathsfit}{\encodingdefault}{\sfdefault}{m}{sl}
\SetMathAlphabet{\mathsfit}{bold}{\encodingdefault}{\sfdefault}{bx}{n}
\def\x{\bm x}
\def\w{\bm w}
\def\W{\bm W}
\def\d{\partial}
\newtheorem{lemma}{Lemma}
\newtheorem{theorem}{Theorem}
\title{Neural Networks as Kernel Learners: The Silent Alignment Effect}
\author{Alexander Atanasov\thanks{These authors contributed equally.}\text{  \,}$^{\S}$$^{\ddagger}$, \text{ } Blake Bordelon$^{*}$$^{\dagger}$$^{\ddagger}$ \& Cengiz Pehlevan$^{\dagger}$$^{\ddagger}$  \\
\S Department of Physics
\\
$^{\dagger}$John A. Paulson School of Engineering and Applied Sciences\\
$^{\ddagger}$Center for Brain Science\\
\, Harvard University\\
\, Cambridge, MA 02138, USA \\
\,\texttt{\{atanasov,blake\_bordelon,cpehlevan\}@g.harvard.edu} \\
}
\begin{document}

\maketitle

\begin{abstract}

Neural networks in the lazy training regime converge to kernel machines. Can neural networks in the rich feature learning regime learn a kernel machine with a data-dependent kernel? We demonstrate that this can indeed happen due to a phenomenon we term \textit{silent alignment}, which requires that the tangent kernel of a network evolves in eigenstructure while small  and before the loss appreciably decreases, and grows only in overall scale afterwards. 
We empirically show that such an effect takes place in homogenous neural networks with small initialization and whitened data. We provide an analytical treatment of this effect in the fully connected linear network case. In general, we find that the kernel develops a low-rank contribution in the early phase of training, and then evolves in overall scale, yielding a function equivalent to a kernel regression solution with the final network's tangent kernel. The early spectral learning of the kernel depends on the depth.  We also demonstrate that non-whitened data can weaken the silent alignment effect.

\end{abstract}

\section{Introduction}

Despite the numerous empirical successes of deep learning, much of the underlying theory remains poorly understood. One promising direction forward to an interpretable account of deep learning is in the study of the relationship between deep neural networks and kernel machines. Several studies in recent years have shown that gradient flow on infinitely wide neural networks with a certain parameterization gives rise to linearized dynamics in parameter space \citep{Lee2019WideNN, liu_belkin} and consequently a kernel regression solution with a kernel known as the neural tangent kernel (NTK) in function space \citep{Jacot2018NeuralTK, Arora2019OnEC}. Kernel machines enjoy firmer theoretical footing than deep neural networks, which allows one to accurately study their training and generalization \citep{RasmussenW06, Scholkopf2002}. Moreover, they share many of the phenomena that overparameterized neural networks exhibit, such as interpolating the training data \citep{zhang_rethinking, liang_rahklin_interpolate, belkin_deep_kernel_understand}. However, the exact equivalence between neural networks and kernel machines breaks for finite width networks. 
Further, the regime with approximately static kernel, also referred to as the lazy training regime \citep{Chizat2019OnLT}, cannot account for the ability of deep networks to adapt their internal representations to the structure of the data, a phenomenon widely believed to be crucial to their success. 


In this present study, we pursue an alternative perspective on the NTK, and ask whether a neural network with an NTK that changes significantly during training can ever be a kernel machine for a {\it data-dependent} kernel: i.e. does there exist a kernel function $K$ for which the final neural network function $f$ is $f(\x) \approx \sum_{\mu=1}^P \alpha^\mu K(\x,\x^\mu)$ with coefficients $\alpha^\mu$ that depend only on the training data? We answer in the affirmative: that a large class of neural networks at small initialization trained on approximately whitened data are accurately approximated as kernel regression solutions with their final, data-dependent NTKs up to an error dependent on initialization scale. 
Hence, our results provide a further concrete link between kernel machines and deep learning which, unlike the infinite width limit, allows for the kernel to be shaped by the data. 

The phenomenon we study consists of two training phases. In the first phase, the kernel starts off small in overall scale and quickly aligns its eigenvectors toward task-relevant directions. In the second phase, the kernel increases in overall scale, causing the network to learn a kernel regression solution with the final NTK. We call this phenomenon the \textit{silent alignment effect} because the feature learning happens before the loss appreciably decreases. 
%
%
Our contributions are the following
\begin{enumerate}[leftmargin=*]
    \item In Section \ref{sec:silent_align_theory}, we demonstrate the silent alignment effect by considering a simplified model where the kernel evolves while small and then subsequently increases only in scale. We theoretically show that if these conditions are met, the final neural network is a kernel machine that uses the final, data-dependent NTK. A proof is provided in Appendix \ref{app:kernel_ev_scale}.
    \item In Section \ref{sec:two_layer_linear}, we provide an analysis of the NTK evolution of two layer linear MLPs with scalar target function with small initialization. If the input training data is whitened, the kernel aligns its eigenvectors towards the direction of the optimal linear function early on during training while the loss does not decrease appreciably. After this, the kernel changes in scale only, showing this setup satisfies the requirements for silent alignment discussed in Section \ref{sec:silent_align_theory}.
    \item In Section \ref{sec:deep_linear}, we extend our analysis to deep MLPs by showing that the time required for alignment scales with initialization the same way as the time for the loss to decrease appreciably. Still, these time scales can be sufficiently separated to lead to the silent alignment effect for which we provide empirical evidence. We further present an explicit formula for the final kernel in linear networks of any depth and width when trained from small initialization, showing that the final NTK aligns to task-relevant directions.
    
    \item In Section \ref{sec:nonlinear_anisotropy}, we show empirically that the silent alignment phenomenon carries over to nonlinear networks trained with ReLU and Tanh activations on isotropic data, as well as linear and nonlinear networks with multiple output classes. For anisotropic data, we show that the NTK must necessarily change its eigenvectors when the loss is significantly decreasing, destroying the silent alignment phenomenon. In these cases, the final neural network output deviates from a kernel machine that uses the final NTK.
\end{enumerate}

\subsection{Related Works}

\cite{Jacot2018NeuralTK} demonstrated that infinitely wide neural networks initialized with an appropriate parameterization and trained on mean square error loss evolve their predictions as a linear dynamical system with the NTK at initalization. 
A limitation of this kernel regime is that the neural network internal representations and the kernel function do not evolve during training. Conditions under which such lazy training can happen is studied further in \citep{Chizat2019OnLT,liu_belkin}. \citet{domingos2020model} recently 
showed that every model, including neural networks, trained with gradient descent leads to a kernel model with a path kernel and coefficients $\alpha^\mu$ that depend on the test point $\x$. This dependence on $\x$ makes the construction not a kernel method in the traditional sense that we pursue here (see Remark 1 in \citep{domingos2020model}).


Phenomenological studies and models of kernel evolution have been recently invoked to gain insight into the difference between lazy and feature learning regimes of neural networks. These include analysis of NTK dynamics which revealed that the NTK in the feature learning regime aligns its eigenvectors to the labels throughout training, causing non-linear prediction dynamics \citep{fort2020deep, Baratin2021ImplicitRV, shan2021rapid, woodworth, chen2020labelaware, GEIGER20211, bai2020taylorized}.  Experiments have shown that lazy learning can be faster but less robust than feature learning \citep{Flesch} and that the generalization advantage that feature learning provides to the final predictor is heavily task and architecture dependent \citep{lee2020finite}. \cite{fort2020deep} found that networks can undergo a rapid change of kernel early on in training after which the network's output function is well-approximated by a kernel method with a data-dependent NTK. Our findings are consistent with these results.

\cite{stoger2021small} recently obtained a similar multiple-phase training dynamics involving an early alignment phase followed by spectral learning and refinement phases in the setting of low-rank matrix recovery. Their results share qualitative similarities with our analysis of deep linear networks. The second phase after alignment, where the kernel's eigenspectrum grows, was studied in linear networks in \citep{jacot2021deep}, where it is referred to as the saddle-to-saddle regime. 

Unlike prior works \citep{Dyer2020Asymptotics, Aitken2020OnTA, Andreassen2020AsymptoticsOW}, our results do not rely on perturbative expansions in network width. Also unlike the work of \cite{Saxe14exactsolutions}, our solutions for the evolution of the kernel do not depend on choosing a specific set of initial conditions, but rather follow only from assumptions of small initialization and whitened data.

\section{The Silent Alignment Effect and Approximate Kernel Solution}\label{sec:silent_align_theory}

Neural networks in the overparameterized regime can find many interpolators: the precise function that the network converges to is controlled by the time evolution of the NTK. As a concrete example, we will consider learning a scalar target function with mean square error loss through gradient flow. Let $\x \in \mathbb{R}^D$ represent an arbitrary input to the network $f(\x)$ and let $\{ \x^\mu, y^\mu\}_{\mu=1}^P$ be a supervised learning training set. Under gradient flow the parameters $\bm \theta$ of the neural network will evolve, so the output function is time-dependent and we write this as $f(\x,t)$. The evolution for the predictions of the network on a test point can be written in terms of the NTK $K(\x,\x',t) = \frac{\partial f(\x,t)}{\partial \bm \theta} \cdot \frac{\partial f(\x',t)}{\partial \bm\theta}$ as  
\begin{align}
    \frac{d}{dt} f(\x,t) = \eta \sum_{\mu} K(\x,\x^\mu, t) (y^\mu - f(\x^\mu,t)),
\end{align}
where $\eta$ is the learning rate.
If one had access to the dynamics of $K(\x,\x^\mu,t)$ throughout all $t$, one could solve for the final learned function $f^*$ with integrating factors under conditions discussed in Appendix \ref{app:integrating_factor}
\begin{align}\label{eq:integ_factor}
    f^*(\x) = f_0(\x) +  \sum_{\mu \nu} \int_{0}^{\infty} dt \ \bm k_t(\bm x)^\mu \left[\exp\left( - \eta \int_{0}^{t} \bm K_{t'}\, dt'  \right) \right]_{\mu \nu} \left( y^\nu - f_0(\x^\nu) \right).
\end{align}
Here, $\bm k_t(\x)^{\mu} = K(\bm x,\x^\mu,t)$, $[\bm K_t]_{\mu,\nu} = K(\bm x^\mu,\bm x^{\nu},t )$, and $y^\mu - f_0(\x^\mu)$ is the initial error on point $\bm x^\mu$.  We see that the final function has contributions throughout the full training interval $t \in (0,\infty)$.  The seminal work by \cite{Jacot2018NeuralTK} considers an infinite-width limit of neural networks, where the kernel function $K_t(\x,\x')$ stays constant throughout training time. In this setting where the kernel is constant and $f_0(\x^\mu) \approx 0$, then we obtain a true kernel regression solution $f(\x) = \sum_{\mu, \nu} \bm k(\x)^\mu \bm K^{-1}_{\mu \nu} y^\nu$ for a kernel $K(\x,\x')$ which does not depend on the training data.

Much less is known about what happens in the rich, feature learning regime of neural networks, where the kernel evolves significantly during time in a data-dependent manner. In this paper, we consider a setting where the initial kernel is small in scale, aligns its eigenfunctions early on during gradient descent, and then increases only in scale monotonically. As a concrete phenomenological model, consider depth $L$ networks with homogenous activation functions with weights initialized with variance $\sigma^2$. At initialization $K_0(\x,\x') \sim O(\sigma^{2L-2}) , f_0(\x) \sim O(\sigma^L)$ (see Appendix \ref{app:kernel_ev_scale}). We further assume that after time $\tau$, the kernel only evolves in scale in a constant direction
\begin{align}
    K(\x,\x',t) = \begin{cases} 
      \sigma^{2L-2} \tilde K(\x,\x',t) & t \leq \tau \\
      g(t) K_{\infty}(\x,\x') & t > \tau
   \end{cases},
\end{align}
where $\tilde K(\x,\x',t)$ evolves from an initial kernel at time $t=0$ to $K_{\infty}(\x,\x')$ by $t = \tau$ and $g(t)$ increases monotonically from $\sigma^{2L-2}$ to $1$. In this model, one also obtains a kernel regression solution in the limit where $\sigma \to 0$ with the final, rather than the initial kernel: $f(\x) = \bm k_{\infty}(\x) \cdot \bm K_{\infty}^{-1} \bm y + O(\sigma^{L})$. We provide a proof of this in the Appendix \ref{app:kernel_ev_scale}. 


The assumption that the kernel evolves early on in gradient descent before increasing only in scale may seem overly strict as a model of kernel evolution. However, we analytically show in Sections \ref{sec:two_layer_linear} and \ref{sec:deep_linear}  that this can happen in deep linear networks initialized with small weights, and consequently that the final learned function is a kernel regression with the final NTK. Moreover, we show that for a linear network with small weight initialization, the final NTK depends on the training data in a universal and predictable way.

We show empirically that our results carry over to nonlinear networks with ReLU and tanh activations under the condition that the data is whitened. For example, see Figure \ref{fig:relu_demo_silent_alignment}, where we show the silent alignment effect on ReLU networks with whitened MNIST and CIFAR-10 images. We define alignment as the overlap between the kernel and the target function $\frac{\bm y^\top \bm K \bm y}{\|\bm K\|_F |\bm y|^2}$, where $\bm y \in \mathbb{R}^P$ is a vector of the target values, quantifying the projection of the labels onto the kernel, as discussed in \citep{cortes2012algorithms}. This quantity increases early in training but quickly stabilizes around its asymptotic value before the loss decreases. Though \Eqref{eq:integ_factor} was derived under assumption of gradient flow with constant learning rate, the underlying conclusions can hold in more realistic settings as well. In Figure \ref{fig:relu_demo_silent_alignment} (d) and (e) we show learning dynamics and network predictions for Wide-ResNet \citep{zagoruyko2017wide} on whitened CIFAR-10 trained with the Adam optimizer \citep{kingma2014adam} with learning rate $10^{-5}$, which exhibits silent alignment and strong correlation with the final NTK predictor. In the unwhitened setting, this effect is partially degraded, as we discuss in Section \ref{sec:nonlinear_anisotropy} and Appendix \ref{app:res_net_experiment}. Our results suggest that the final NTK may be useful for analyzing generalization and transfer as we discuss for the linear case in Appendix \ref{app:transfer_theory}.

\begin{figure}[t]
    \centering
    \subfigure[Whitened Data MLP Dynamics]{\includegraphics[width=0.32\linewidth]{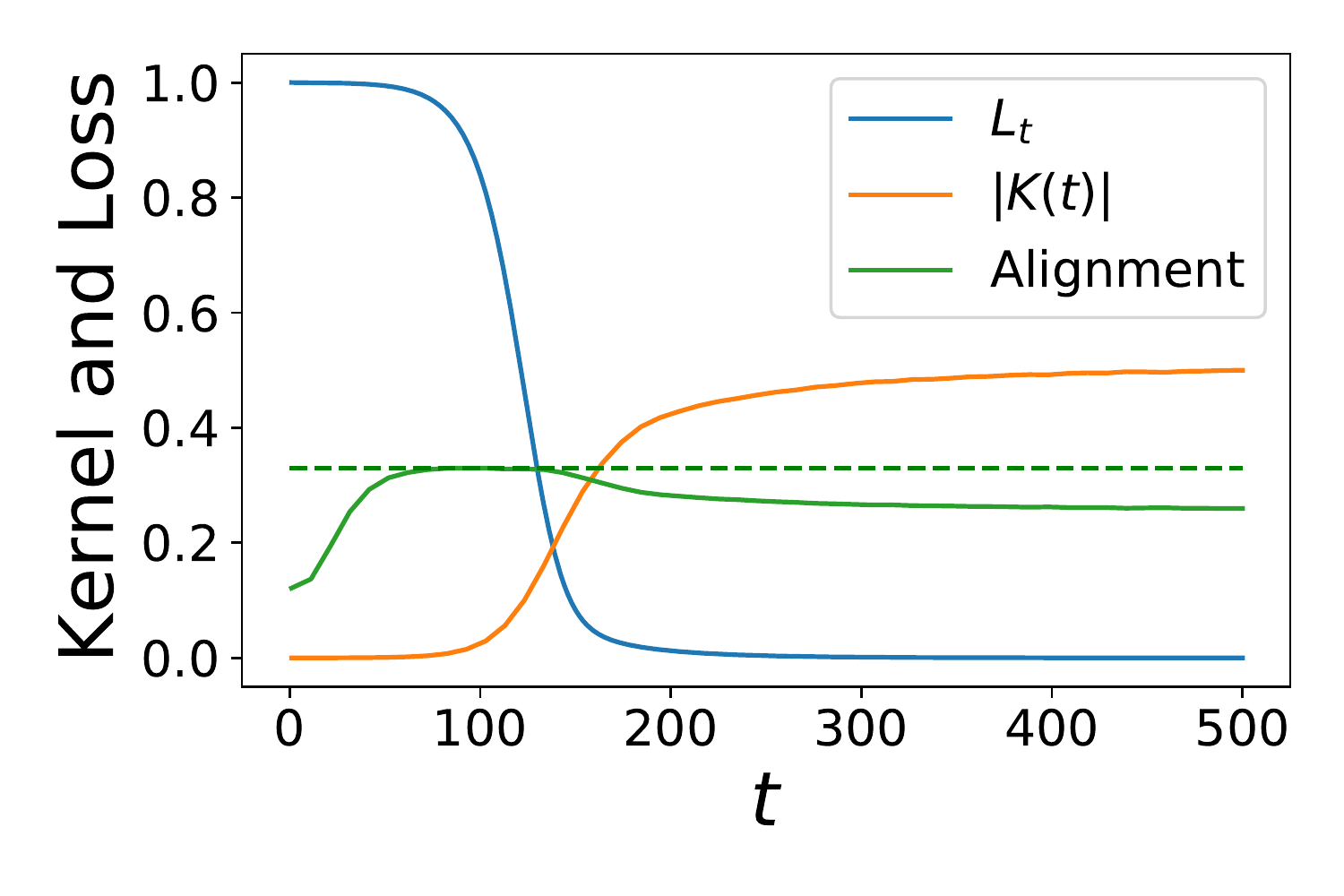}}
    \subfigure[Prediction MNIST]{\includegraphics[width=0.32\linewidth]{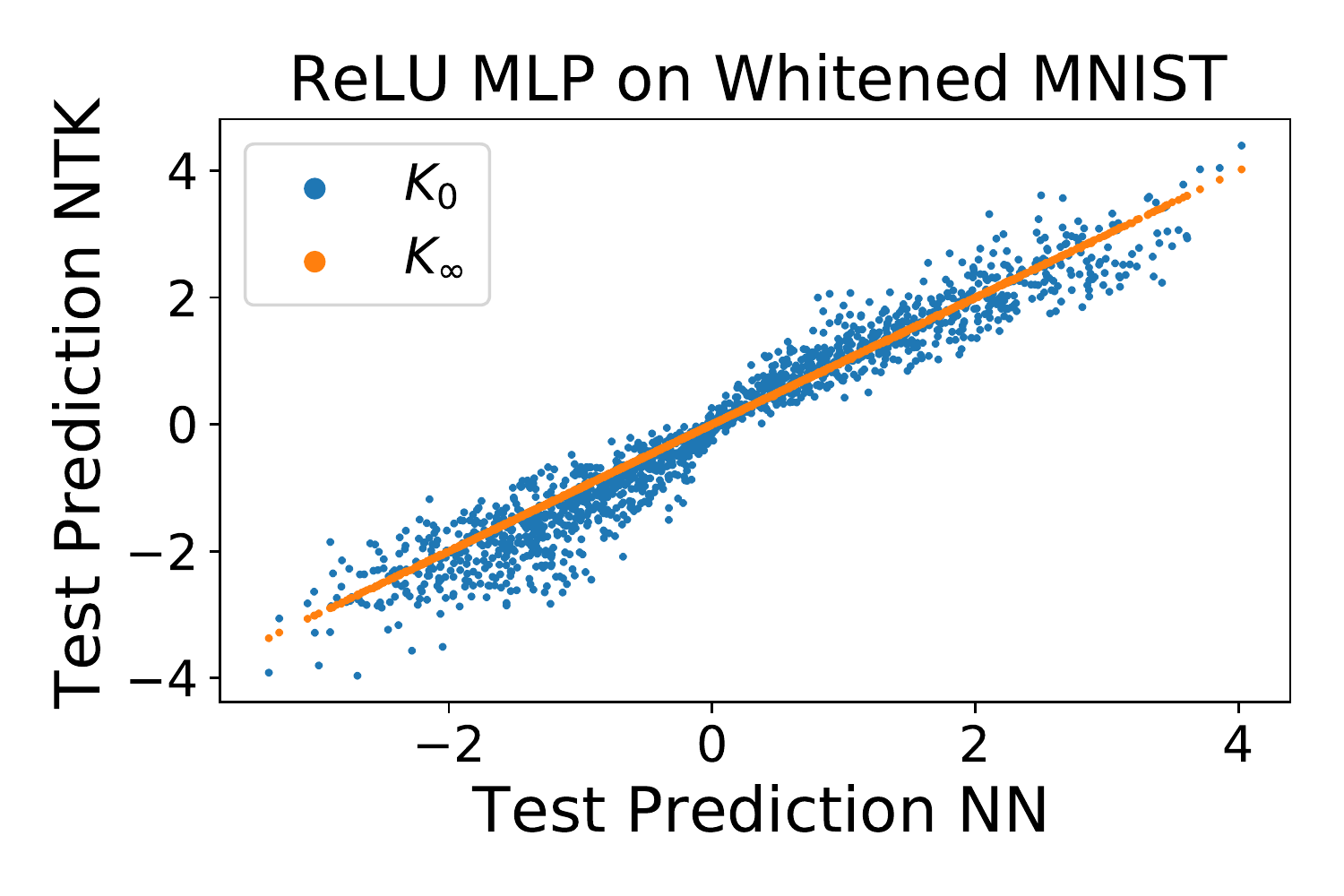}}
    \subfigure[Prediction CIFAR-10]{\includegraphics[width=0.32\linewidth]{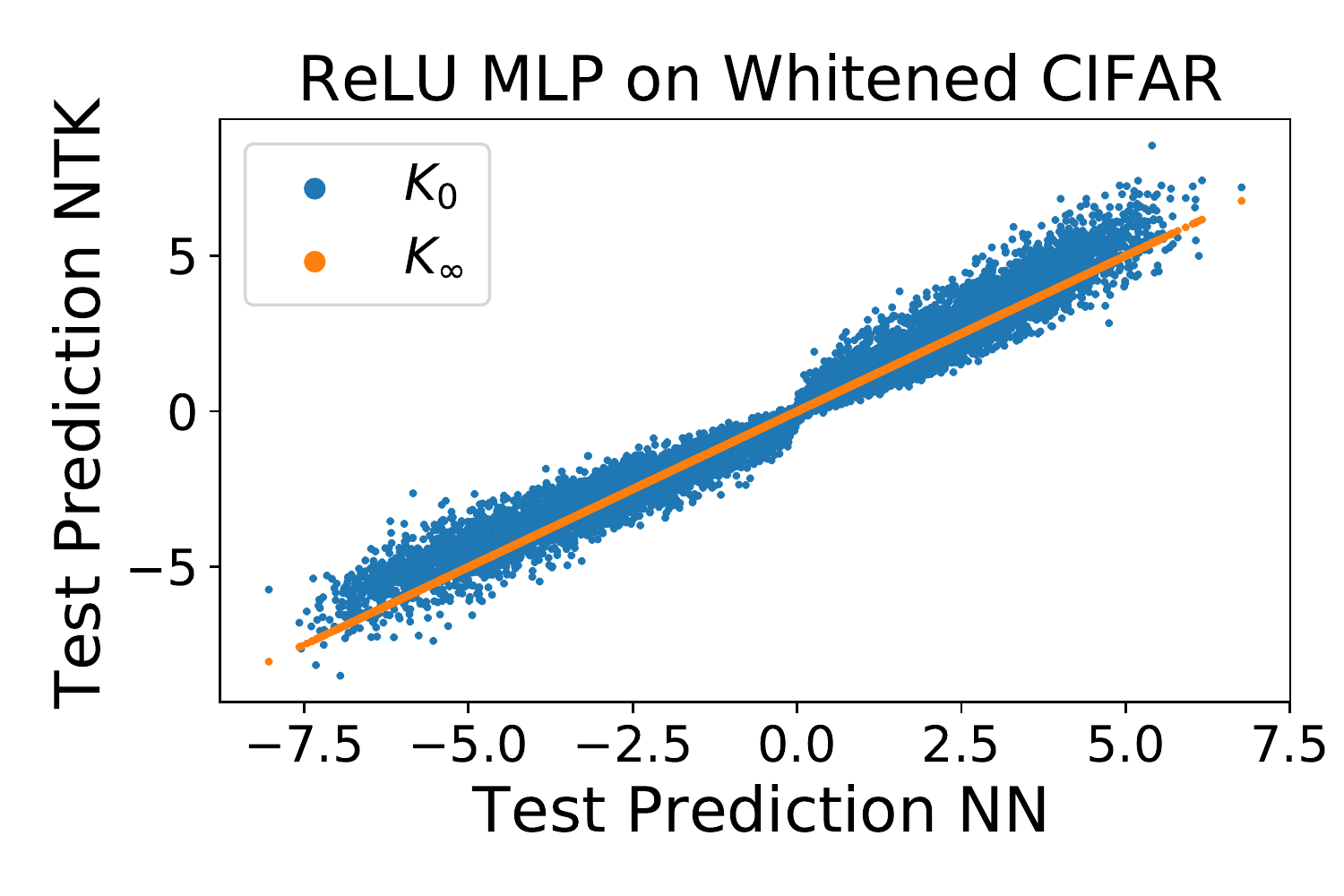}}
    \subfigure[Wide Res-Net Dynamics]{\includegraphics[width=0.32\linewidth]{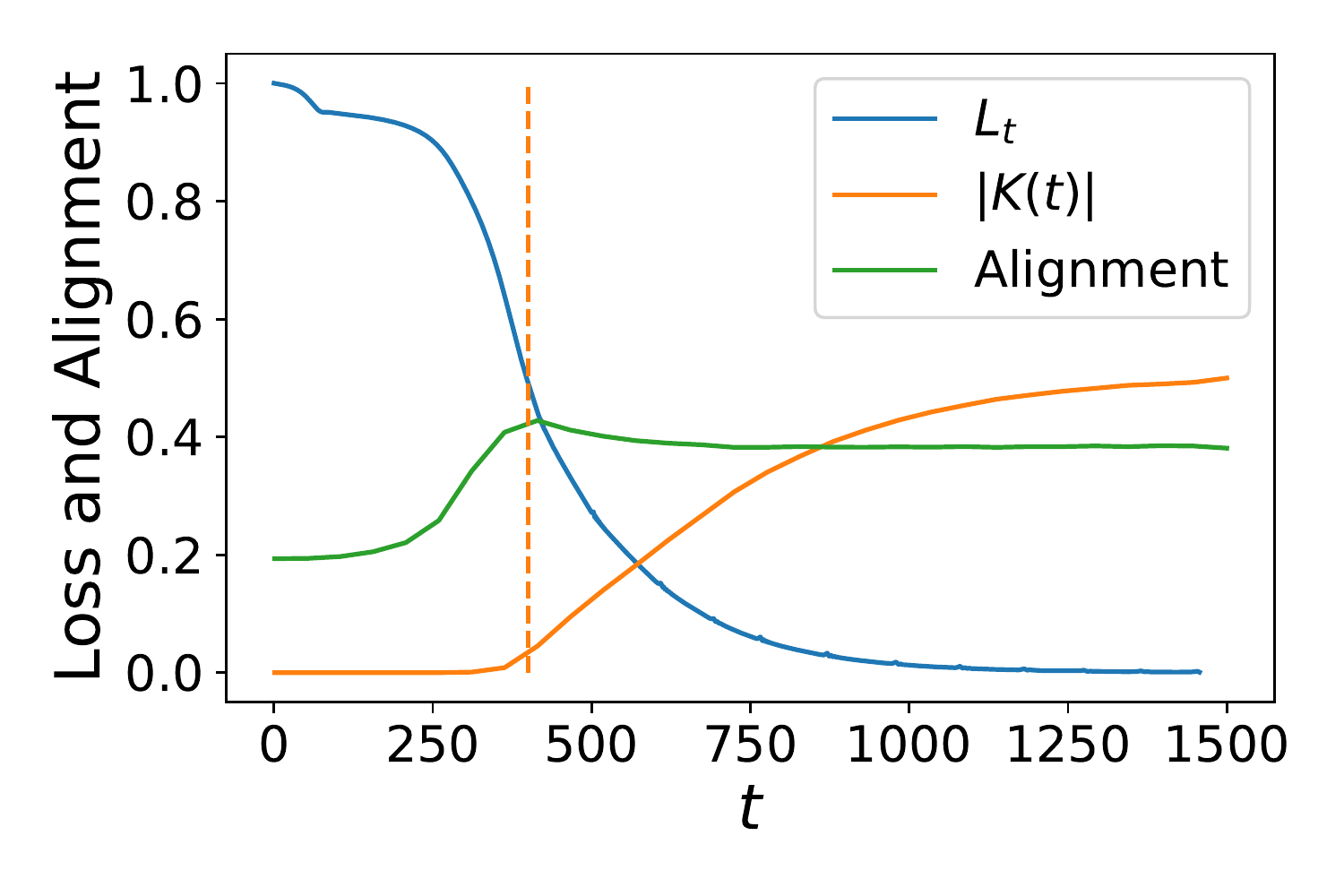}}
    \subfigure[Prediction Res-Net]{\includegraphics[width=0.32\linewidth]{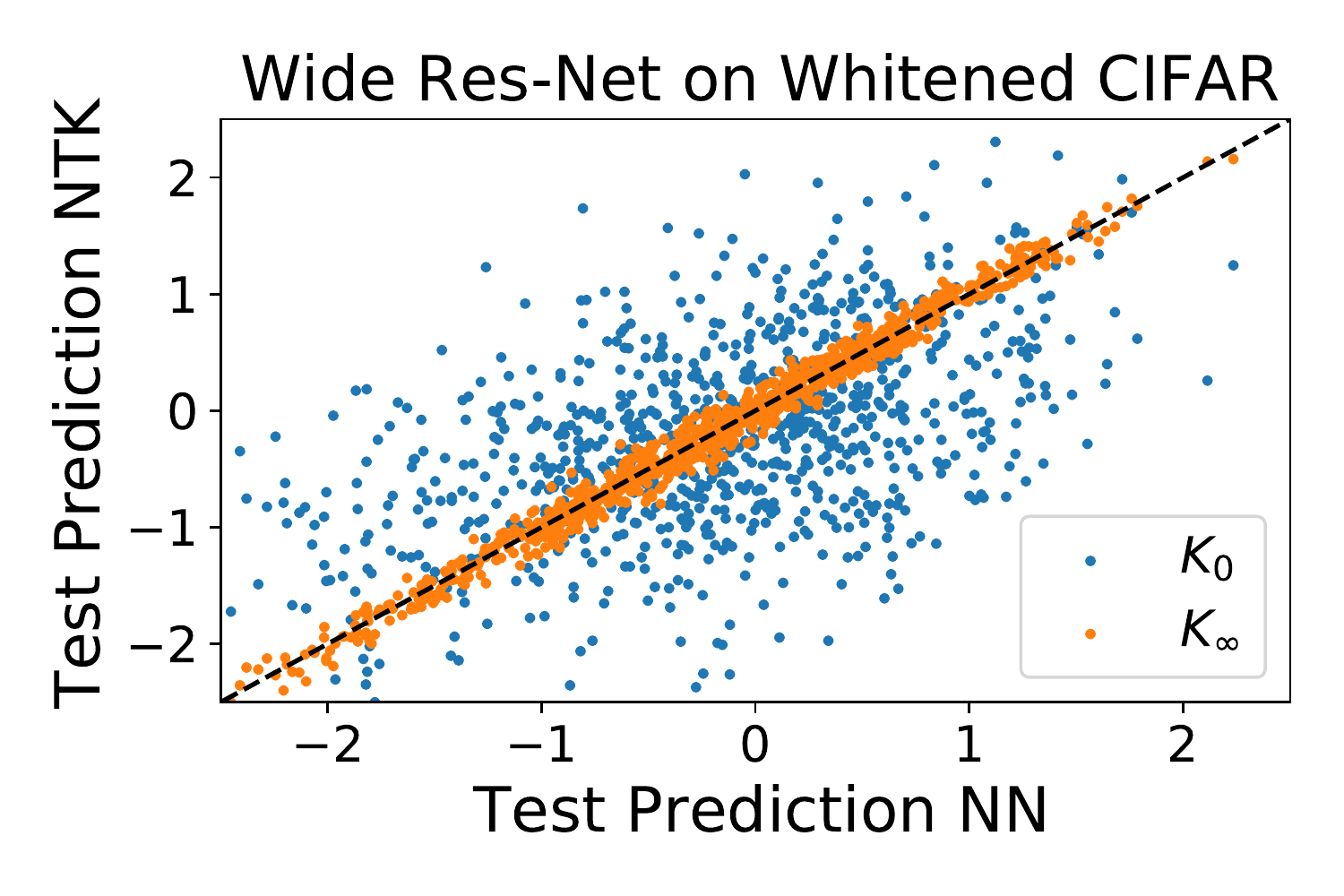}}
    \caption{A demonstration of the Silent Alignment effect. (a) We trained a 2-layer ReLU MLP on $P = 1000$ MNIST images of handwritten $0$'s and $1$'s which were whitened. Early in training, around $t \approx 50$, the NTK aligns to the target function and stay fixed (green). The kernel's overall scale (orange) and the loss (blue) begin to move at around $t=300$. The analytic solution for the maximal final alignment value in linear networks is overlayed (dashed green), see Appendix \ref{app:NTK_final_deep_linear}. (b)  We compare the predictions of the NTK and the trained network on MNIST test points. Due to silent alignment, the final learned function is well described as a kernel regression solution with the final NTK $K_{\infty}$. However, regression with the initial NTK is not a good model of the network's predictions. (c) The same experiment on $P =1000$ whitened CIFAR-10 images from the first two classes. Here we use MSE loss on a width $100$ network with initialization scale $\sigma=0.1$. (d) Wide-ResNet with width multiplier $k=4$ and blocksize of $b=1$ trained with $P=100$ training points from the first two classes of CIFAR-10. The dashed orange line marks when the kernel starts growing significantly, by which point the alignment has already finished. (e) Predictions of the final NTK are strongly correlated with the final NN function.  }
    \label{fig:relu_demo_silent_alignment}
\end{figure}

\section{Kernel Evolution in 2 Layer Linear Networks}\label{sec:two_layer_linear}

We will first study shallow linear networks trained with small initialization before providing analysis for deeper networks in Section \ref{sec:deep_linear}. We will focus our discussion in this section on the scalar output case but we will provide similar analysis in the multiple output channel case in a subsequent section.  We stress, though we have systematically tracked the error incurred at each step, we have focused on transparency over rigor in the following derivation. We demonstrate that our analytic solutions match empirical simulations in Appendix \ref{subsec:solns_to_full_training}. 

We assume the $P$ data points $\bm x^\mu \in \mathbb{R}^{D}, \mu = 1, \dots, P$ of zero mean with correlation matrix $\bm\Sigma = \frac1P \sum_{\mu=1}^P \bm x^\mu {\bm x^\mu}^\top$. Further, we  assume that the target values are generated by a linear teacher function $y^\mu = s \bm { \beta}_T \cdot \bm x^{\mu}$ for a unit vector $\bm{ \beta}_T$. The scalar $s$ merely quantifies the size of the supervised learning signal: the variance of $|\bm y|^2 = s^2 \bm\beta^\top_T \bm\Sigma \bm \beta_T$. We define the two-layer linear neural network with $N$ hidden units as $f(\x) = \bm a^\top \bm W \bm x$. Concretely, we initialize the weights with standard parameterization $a_i \sim\mathcal{N}(0,\sigma^2/N), W_{ij} \sim\mathcal{N}(0,\sigma^2/D)$. Understanding the role of $\sigma$ in the dynamics will be crucial to our study. We analyze gradient flow dynamics on MSE cost $L = \frac{1}{2 P} \sum_{\mu} \left( f(\bm x^\mu) - y^\mu  \right)^2$. 

Under gradient flow with learning rate $\eta = 1$, the weight matrices in each layer evolve as
\begin{align}
    \frac{d}{dt} \bm a = - \frac{\d L}{\d \bm a} = \bm W \bm \Sigma \left( s \bm\beta_T - \bm W^\top \bm a \right), \quad
    \frac{d}{dt} \W = - \frac{\d L}{\d \bm W} =  \bm a \left( s \bm \beta_T - \bm W^\top \bm a \right)^\top \bm \Sigma.
\end{align}
The NTK takes the following form throughout training.
\begin{align}\label{eq:kernel_evolution}
    K(\bm x, \bm x'; t) = \bm x^\top \bm W^\top  \bm W \bm x' + |\bm a|^2 \bm x^\top \bm x'.
\end{align}
Note that while the second term, a simple isotropic linear kernel, does not reflect the nature of the learning task, the first term $\bm x^\top \bm W^\top \bm W \bm x'$ can evolve to yield an anisotropic kernel that has learned a representation from the data.  

\subsection{Phases of Training in Two Layer Linear Network}

We next show that there are essentially two phases of training when training a two-layer linear network from small initialization on whitened-input data. 
\begin{itemize}[leftmargin=*]
    \item Phase I: An alignment phase which occurs for $t \sim \frac{1}{s}$. In this phase the weights align to their low rank structure and the kernel picks up a rank-one term of the form $\bm x^\top \bm \beta \bm \beta^\top \bm x'$. In this setting, since the network is initialized near $\bm W, \bm a = \bm 0$, which is a saddle point of the loss function, the gradient of the loss is small. Consequently, the magnitudes of the weights and kernel evolve slowly.
    \item Phase II: A data fitting phase which begins around $t \sim \frac{1}{s} \log(s  \sigma^{- 2} )$. In this phase, the system escapes the initial saddle point $\bm W, \bm a =  0$ and loss decreases to zero. In this setting both the kernel's overall scale and the scale of the function $f(\x,t)$ increase substantially. 
\end{itemize}
If Phase I and Phase II are well separated in time, which can be guaranteed by making $\sigma$ small, then the final function solves a kernel interpolation problem for the NTK which is only sensitive to the geometry of gradients in the final basin of attraction. In fact, in the linear case, the kernel interpolation at every point along the gradient descent trajectory would give the final solution as we show in Appendix \ref{app:all_ntk_learn_same}. A visual summary of these phases is provided in Figure \ref{fig:phase_one_two_visual}.

\begin{figure}
    \centering
    \subfigure[ Initialization]{\includegraphics[width=0.23\linewidth]{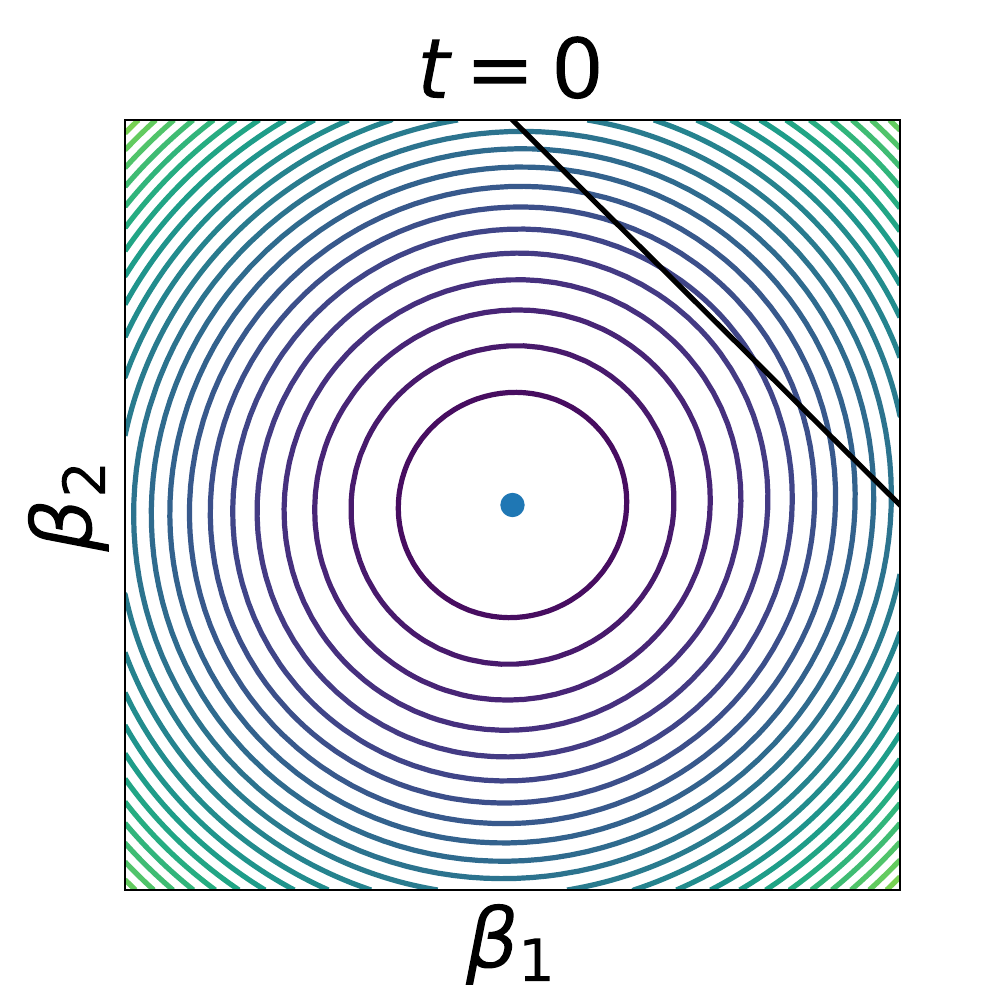}}
    \subfigure[Phase 1]{\includegraphics[width=0.23\linewidth]{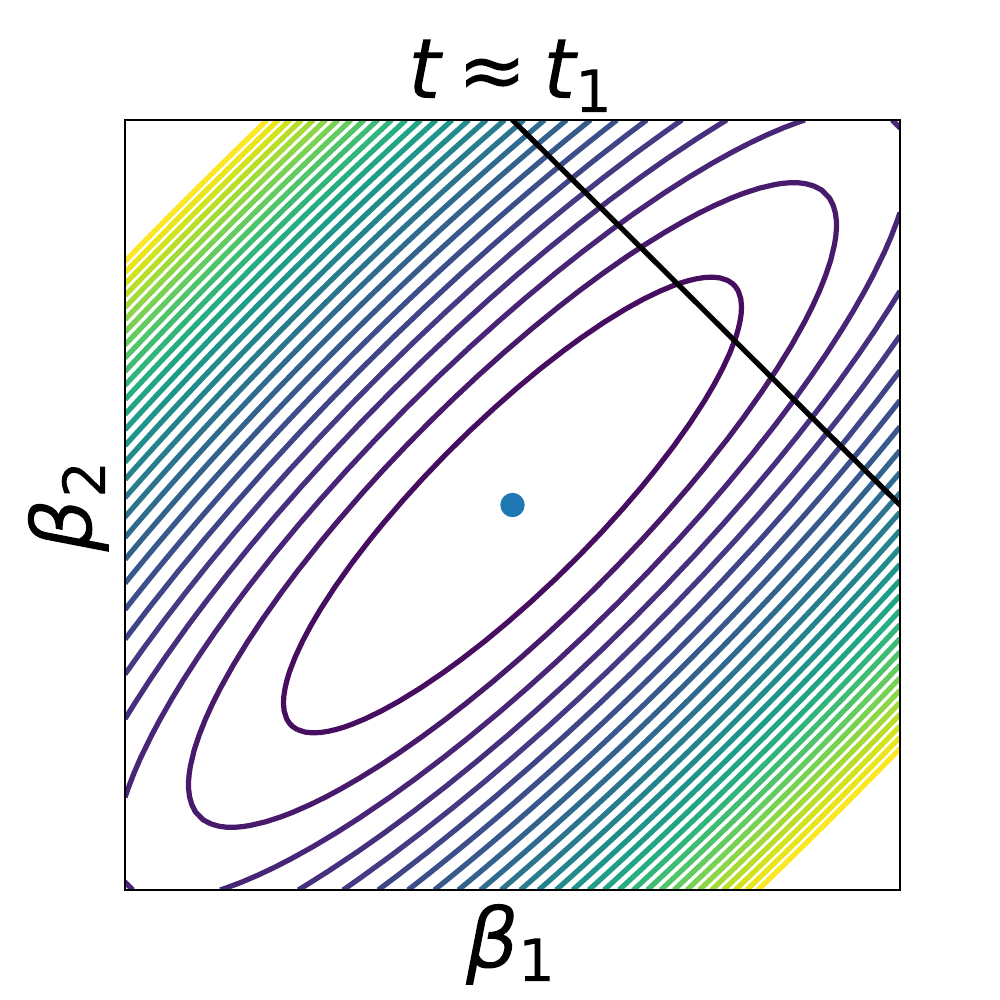}}
    \subfigure[Phase 2]{\includegraphics[width=0.23\linewidth]{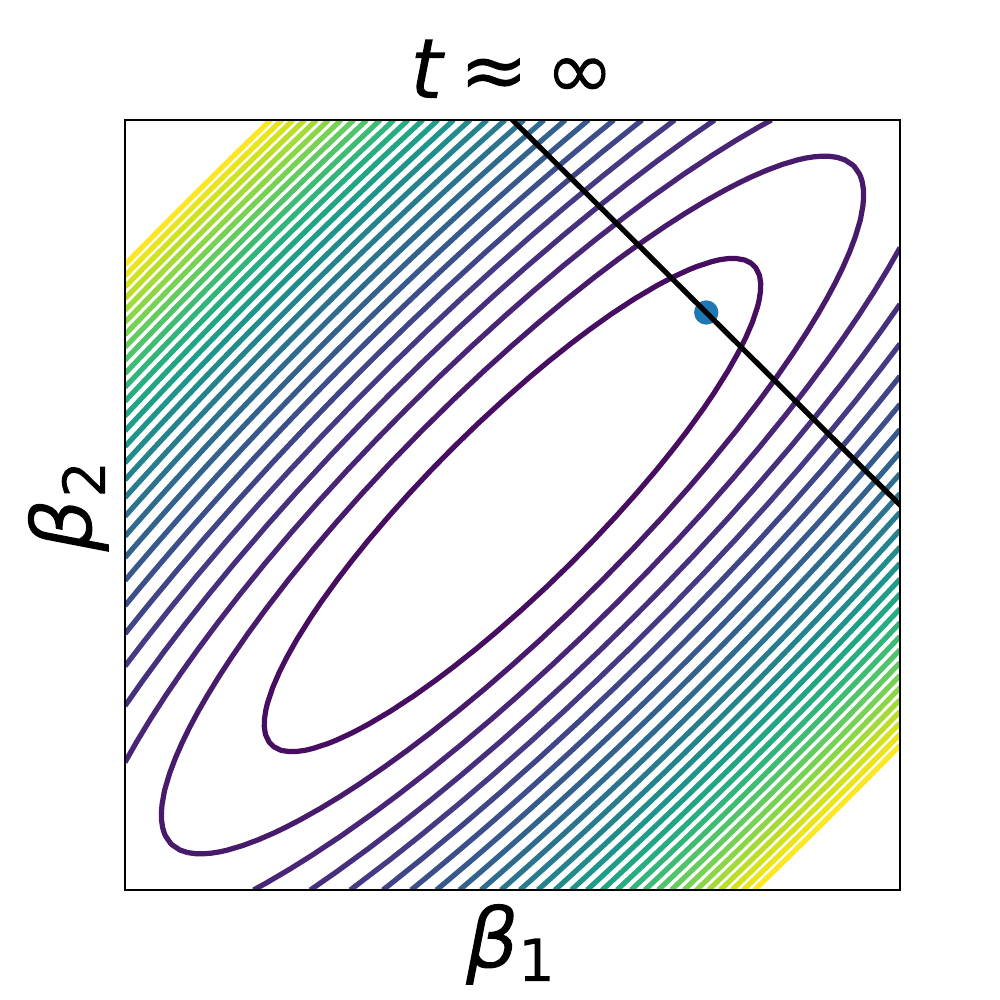}}
    \caption{The evolution of the kernel's eigenfunctions happens during the early alignment phase for $t_1 \approx \frac{1}{s}$, but significant evolution in the network predictions happens for $t > t_2 = \frac{1}{2}\log( s \sigma^{-2})$. (a) Contour plot of kernel's norm for linear functions $f(\x) = \bm\beta \cdot \x$. The black line represents the space of weights which interpolate the training set, ie $\bm X^\top \bm\beta = \bm y$. At initialization, the kernel is isotropic, resulting in spherically symmetric level sets of RKHS norm. The network function is represented as a blue dot. (b) During Phase I, the kernel's eigenfunctions have evolved, enhancing power in the direction of the min-norm interpolator, but the network function has not moved far from the origin. (c) In Phase II, the network function $\bm W^\top \bm a$ moves from the origin to the final solution.} 
    \label{fig:phase_one_two_visual}
\end{figure}

\subsubsection{Phase I: Early Alignment for Small Initialization}

In this section we show how the kernel aligns to the correct eigenspace early in training. 
We focus on the whitened setting, where the data matrix $\bm X$ has all of its nonzero singular values equal. We let $\bm\beta$ represent the normalized component of $\bm\beta_T$ in the span of the training data $\{ \x^\mu \}$. We will discuss general $\bm\Sigma$ in section \ref{sec:unwhite_two_layer}.  We approximate the dynamics early in training by recognizing that the network output is small due to the small initialization.  Early on, the dynamics are given by:
\begin{align}
    \frac{d}{dt} \bm  a = s \bm W \bm\beta + O(\sigma^3)  \ , \quad
    \frac{d}{dt} \bm  W =  s \bm a \bm \beta^\top + O(\sigma^3).
\end{align}
Truncating terms order $\sigma^3$ and higher, we can solve for the kernel's dynamics early on in training
\begin{align}\label{eq:approx_align}
    K(\x,\x';t) = q_0 \cosh(2\eta st) \ \x^\top \left[ \bm\beta \bm\beta^\top +  \bm I \right] \x' +  O(\sigma^2), \quad t \ll s^{-1} \log(s/\sigma^2).
\end{align}
where $q_0$ is an initialization dependent quantity, see Appendix \ref{app:two_layer_phase_one}. The bound on the error is obtained in Appendix \ref{sec:phase_one_error}. We see that the kernel picks up a rank one-correction $\bm\beta \bm\beta^\top$ which points in the direction of the task vector $\bm \beta$, indicating that the kernel evolves in a direction sensitive to the target function $y = s \bm\beta_T \cdot \bm x$. This term grows exponentially during the early stages of training, and overwhelms the original kernel $K_0$ with timescale $1/s$. Though the neural network has not yet achieved low loss in this phase, the alignment of the kernel and learned representation has consequences for the transfer ability of the network on correlated tasks as we show in Appendix \ref{app:transfer_theory}.

\subsubsection{Phase II: Spectral Learning}

We now assume that the weights have approached their low rank structure, as predicted from the previous analysis of Phase I dynamics, and study the subsequent NTK evolution. We will show that, under the assumption of whitening, the kernel only evolves in overall scale.

First, following \citep{fukumizu1998effect, arora_cohen_linear_acc,Du2018AlgorithmicRI}, we note the following conservation law $\frac{d}{dt} \left[ \bm a(t) \bm a(t)^\top - \bm W(t) \bm W(t)^\top \right] = 0$ which holds for all time. If we assume small initial weight variance $\sigma^2$, $\bm a \bm a^\top - \bm W \bm  W^\top = O(\sigma^2) \approx 0$ at initialization, and stays that way during the training due to the conservation law. This condition is surprisingly informative, since it indicates that $\bm W$ is rank-one up to $O(\sigma)$ corrections. From the analysis of the alignment phase, we also have that $\bm W^\top \bm W \propto \bm \beta \bm \beta^\top$. These two observations uniquely determine the rank one structure of $\bm W$ to be $\bm a \bm \beta^\top+O(\sigma)$. Thus, from \eqref{eq:kernel_evolution} it follows that in Phase II, the kernel evolution takes the form 
\begin{align}
    K(\x,\x';t) = u(t)^2 \x^\top  \left[ \bm\beta \bm\beta^\top + \bm I \right] \x' + O(\sigma),
\end{align}
where $u(t)^2 = |\bm a|^2$. This demonstrates that the kernel only changes in overall scale during Phase II. 

Once the weights are aligned with this scheme, we can get an expression for the evolution of $u(t)^2$ analytically, $u(t)^2 = s e^{2st} (e^{2st} - 1 + s / u_0^2)^{-1}$, using the results of \citep{fukumizu1998effect,Saxe14exactsolutions} as we discuss in \ref{subsec:phase2}. This is a sigmoidal curve which starts at $u_0^2$ and approaches $s$. The transition time where active learning begins occurs when $e^{st} \approx s/u_0^2 \implies t \approx s^{-1} \log( s/\sigma^2)$. This analysis demonstrates that the kernel only evolves in scale during this second phase in training from the small initial value $u_0^2 \sim O(\sigma^2)$ to its asymptote. 

Hence, kernel evolution in this scenario is equivalent to the assumptions discussed in Section \ref{sec:silent_align_theory}, with $g(t) = u(t)^2$, showing that the final solution is well approximated by kernel regression with the final NTK. We stress that the timescale for the first phase $t_1 \sim 1/s$, where eigenvectors evolve, is independent of the scale of the initialization $\sigma^2$, whereas the second phase occurs around $t_2 \approx t_1 \log(s/\sigma^2)$. This separation of timescales $t_1 \ll t_2$ for small $\sigma$ guarantees the silent alignment effect.   We illustrate these learning curves and for varying $\sigma$ in Figure \ref{fig:two_layer_theory_expt}.




\subsection{Unwhitened data}\label{sec:unwhite_two_layer}
When data is unwhitened, the right singular vector of $\bm W$ aligns with $\bm\Sigma \bm \beta$ early in training, as we show in Appendix \ref{app:two_layer_unwhitened}. This happens since, early on, the dynamics for the first layer are $\frac{d}{dt} \bm W \sim \bm a(t) \bm\beta^\top \bm\Sigma$. Thus the early time kernel will have a rank-one spike in the $\bm \Sigma \bm \beta$ direction. However, this configuration is not stable as the network outputs grow. In fact, at late time $\bm W$ must realign to converge to $\bm W \propto \bm a \bm \beta^\top$ since the network function converges to the optimum and $f = \bm a^\top \bm W \x = s \bm\beta \cdot \x$, which is the minimum $\ell_2$ norm solution (Appendix \ref{app:inductive_deep_linear}). Thus, the final kernel will always look like $K_{\infty}(\x,\x') = s \x^\top \left[\bm \beta \bm\beta^\top +  \bm I \right] \x'$. However, since the realignment of $\bm W$'s singular vectors happens \textit{during the Phase II spectral learning}, the kernel is not constant up to overall scale, violating the conditions for silent alignment. We note that the learned function still is a kernel regression solution of the final NTK, which is a peculiarity of the linear network case, but this is not achieved through the silent alignment phenomenon as we explain in Appendix \ref{app:two_layer_unwhitened}.

\section{Extension to Deep Linear Networks}\label{sec:deep_linear}
We next consider scalar target functions approximated by deep linear neural networks and show that many of the insights from the two layer network carry over. The neural network function $f : \mathbb{R}^{D} \to \mathbb{R}$ takes the form $f(\x) = {\bm w^{L}}^\top \bm W^{L-1} ... \bm W^{1} \x$. The gradient flow dynamics under mean squared error (MSE) loss become
\begin{align}
    \frac{d}{dt} \bm W^{\ell} = - \eta \frac{\partial L}{\partial \bm W^{\ell}} = \eta \left( \prod_{\ell' > \ell} \bm W^{\ell'} \right)^{\top} \left( s \bm \beta - \bm{\tilde{w}} \right)^\top \bm \Sigma \left( \prod_{\ell' < \ell} \bm W^{\ell' } \right)^\top,
\end{align}
where $\bm{\tilde{w}} = \bm W^{1 \top} \bm W^{2 \top} ... \bm w^{L } \in \mathbb{R}^{D}$ is shorthand for the effective one-layer linear network weights. 
Inspired by observations made in prior works \citep{fukumizu1998effect,arora_cohen_linear_acc, Du2018AlgorithmicRI}, we again note that the following set of conservation laws hold during the dynamics of gradient descent $\frac{d}{dt} \left[  \bm W^{\ell} \bm W^{\ell \top} -  \bm W^{\ell+1 \top} \bm W^{\ell + 1} \right] = 0$. This condition indicates a balance in the size of weight updates in adjacent layers and simplifies the analysis of linear networks. This balancing condition between weights of adjacent layers is not specific to MSE loss, but will also hold for any loss function, see Appendix \ref{app:other_loss_fns}. We will use this condition to characterize the NTK's evolution.

\subsection{NTK Under Small Initialization}\label{sec:small_init}
We now consider the effects of small initialization. When the initial weight variance $\sigma^2$ is sufficiently small, $\bm W^{\ell} \bm W^{\ell \top} - \bm W^{\ell+1 \top} \bm W^{\ell + 1} = O(\sigma^2 ) \approx 0$ at initialization.\footnote{Though we focus on neglecting the $O(\sigma^2)$ initial weight matrices in the main text, an approximate analysis for wide networks at finite $\sigma^2$ and large width is provided in Appendix \ref{app:refined_balance}, which reveals additional dependence on relative layer widths.} 
This conservation law implies that these matrices remain approximately equal throughout training. Performing an SVD on each matrix and inductively using the above formula from the last layer to the first, we find that all matrices will be approximately rank-one $\bm w^{L} =  u(t) \bm r_{L}(t) \ , \ \bm W^{\ell} = u(t) \bm r_{\ell+1}(t) \bm r_{\ell}(t)^\top$, where $\bm r_{\ell}(t)$ are unit vectors. Using only this balancing condition and expanding to leading order in $\sigma$, we find that the NTK's dynamics look like
\begin{align}
    K(\bm x, \bm x', t) &=  u(t)^{2(L-1)} \x^\top \left[ (L-1) \bm r_1(t) \bm r_1(t)^\top +  \bm I \right] \x'  + O(\sigma).
\end{align}
We derive this formula in the Appendix \ref{app:ntk_formula_deep_linear}. We observe that the NTK consists of a rank-$1$ correction to the isotropic linear kernel $\x \cdot \x'$ with the rank-one spike pointing along the $\bm r_{1}(t)$ direction. This is true dynamically throughout training under the assumption of small $\sigma$. At convergence $\bm r(t) \to \bm\beta$, which is the unique fixed point reachable through gradient descent. We discuss evolution of $u(t)$ below. The alignment of the NTK with the direction $\bm\beta$ increases with depth $L$. 


\subsubsection{Whitened Data vs Anisotropic Data}\label{sec:white_vs_unwhite}

We now argue that in the case where the input data is whitened, the trained network function is again a kernel machine that uses the final NTK. The unit vector $\bm r_1(t)$ quickly aligns to $\bm\beta$ since the first layer weight matrix evolves in the rank-one direction $\frac{d}{dt} \bm W^{1} = \bm v(t) \bm \beta^\top$ throughout training for a time dependent vector function $\bm v(t)$. As a consequence, early in training the top eigenvector of the NTK aligns to $\bm\beta$. Due to gradient descent dynamics,  $\bm W^{1 \top} \bm W^1$ grows only in the $\bm\beta \bm\beta^\top$ direction. Since the $\bm r_1$ quickly aligns to $\bm\beta$ due to $\bm W^1$ growing only along the $\bm\beta$ direction, then the global scalar function $c(t) = u(t)^L$ satisfies the dynamics $\dot{c}(t)= c(t)^{2-2/L} \left[ s- c(t) \right]$ in the whitened data case, which is consistent with the dynamics obtained when starting from the orthogonal initialization scheme of \citet{Saxe14exactsolutions}. We show in the Appendix \ref{app:deep_linear_dynamics_ct} that spectral learning occurs over a timescale on the order of $t_{1/2} \approx \frac{L}{s (L-2)} \sigma^{-L+2}$, where $t_{1/2}$ is the time required to reach half the value of the initial loss. We discuss this scaling in detail in Figure \ref{fig:time_to_align_learn_deep}, showing that although the timescale of alignment shares the same scaling with $\sigma$ for $L > 2$, empirically alignment in deep networks occurs faster than spectral learning. Hence, the silent alignment conditions of Section 2 are satisfied. In the case where the data is unwhitened, the $\bm r_1(t)$ vector aligns with $\bm\Sigma \bm \beta$ early in training. This happens since, early on, the dynamics for the first layer are $\frac{d}{dt} \bm W^1 \sim \bm v(t) \bm\beta^\top \bm\Sigma$ for time dependent vector $\bm v(t)$. However, for the same reasons we discussed in Section \ref{sec:unwhite_two_layer} the kernel must realign at late times, violating the conditions for silent alignment.

\begin{figure}
    \centering
    \subfigure[ODE Time to Learn]{\includegraphics[width=0.29\linewidth]{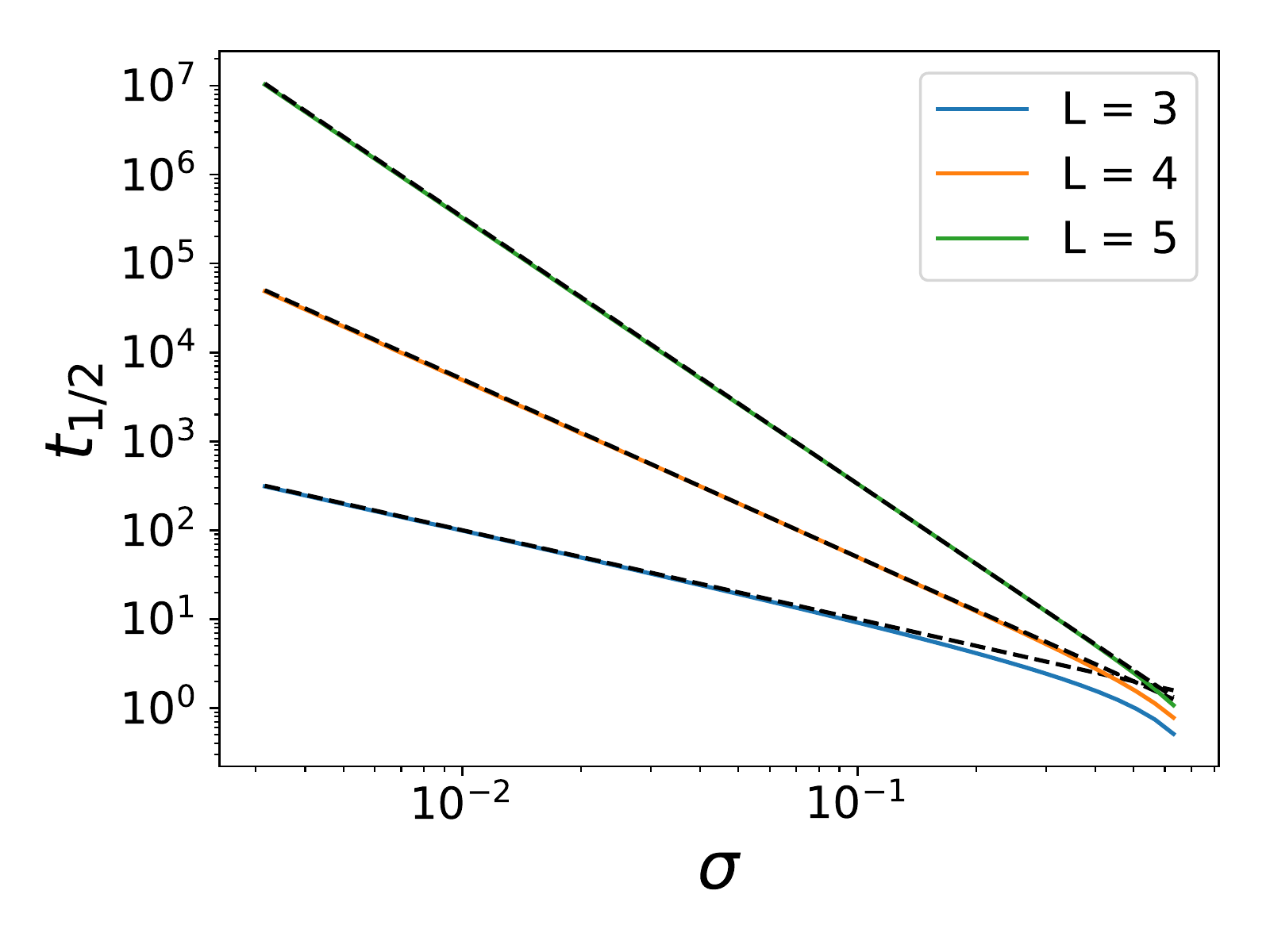}}
    \subfigure[$L=3$ Dynamics]{\includegraphics[width=0.32\linewidth]{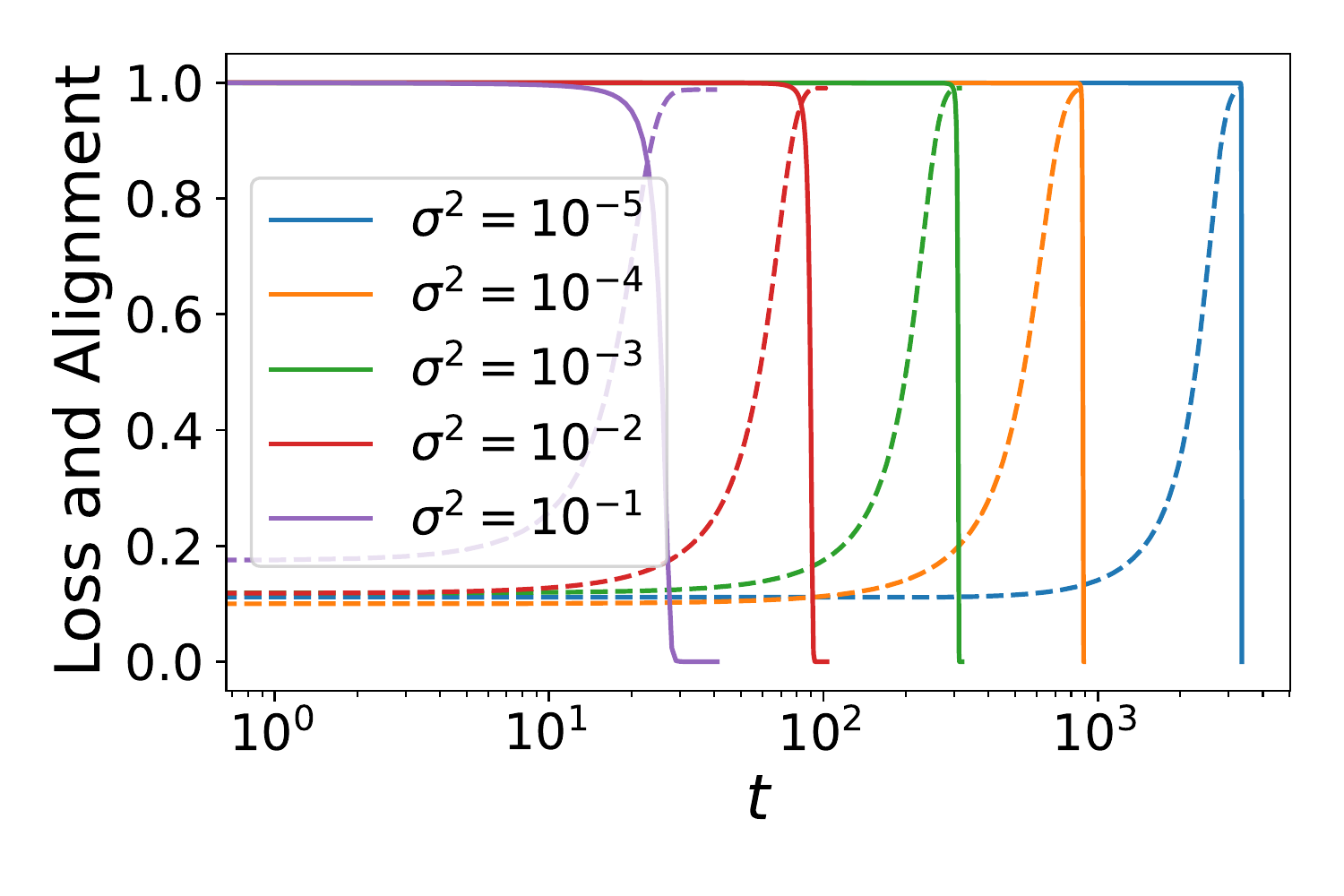}}
    \subfigure[Time To Learn $L=3$]{\includegraphics[width=0.32\linewidth]{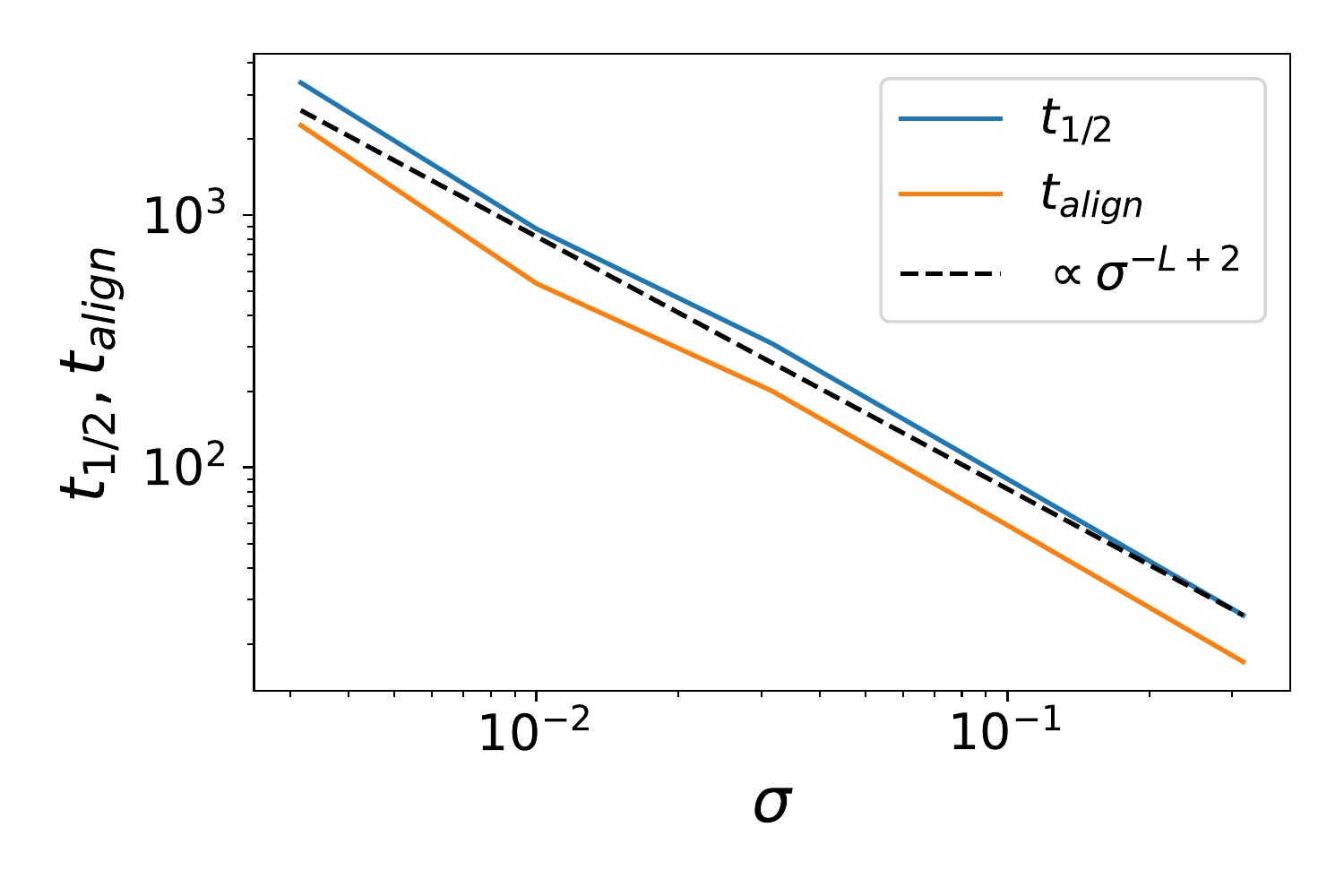}}
    \caption{Time scales are not strongly separated for deep networks ($L \geq 3$), but alignment is consistently achieved  before loss decreases. (a) Time to half loss scales in a power law with $\sigma$ for networks with $L \geq 3$: $t_{1/2} \sim \frac{L}{(L-2)} \sigma^{-L+2}$ (black dashed) is compared with numerically integrating the dynamics $\dot{c}(t) = c^{2-2/L}(s-c)$ (solid). The power law scaling of $t_{1/2}$ with $\sigma$ is qualitatively different than what happens for $L =2$, where we identified logarithmic scaling $t_{1/2} \sim \log(\sigma^{-2})$. (b) Linear networks with $D=30$ inputs and $N=50$ hidden units trained on synthetic whitened data with $|\bm\beta|=1$. We show for a $L=3$ linear network the cosine similarity of $\bm W^{1 \top} \bm W^1$ with $\bm \beta \bm \beta^\top$ (dashed) and the loss (solid) for different initialization scales. (c) The time to get to $1/2$ the initial loss and the time for the cosine similarity of $\bm W^{1\top} \bm W^1$ with $\bm \beta \bm\beta^\top$ to reach $1/2$ both scale as $\sigma^{-L+2}$, however one can see that alignment occurs before half loss is achieved.  }
    \label{fig:time_to_align_learn_deep}
\end{figure}



\subsection{Multiple Output Channels}

We next discuss the case where the network has multiple $C$ output channels. Each network output, we denote as $f_c(\x')$ resulting in $C^2$ kernel sub-blocks $K_{c,c'}(\x,\x') = \nabla f_c(\x) \cdot \nabla f_{c'}(\x')$. In this context, the balanced condition $\bm W^{\ell} \bm W^{\ell \top} \approx \bm W^{\ell +1 \top} \bm W^{\ell+1}$ implies that each of the weight matrices is rank-$C$, implying a rank-$C$ kernel. We give an explicit formula for this kernel in Appendix \ref{app:multi-class}. For concreteness, consider whitened input data $\bm\Sigma = \bm I$ and a teacher with weights $\bm\beta \in \mathbb{R}^{C \times D}$. The singular value decomposition of the teacher weights $\bm\beta = \sum_{\alpha} s_{\alpha} \bm z_{\alpha} \bm v_{\alpha}^\top$ determines the evolution of each mode \citep{Saxe14exactsolutions}. Each singular mode begins to be learned at $t_{\alpha} = \frac{1}{s_\alpha} \log\left( s_{\alpha} u_0^{-2} \right)$. To guarantee silent alignment, we need all of the Phase I time constants to be smaller than all of the Phase II time constants. In the case of a two layer network, this is equivalent to the condition $\frac{1}{s_{min}} \ll \frac{1}{s_{max}} \log\left( s_{max} u_0^{-2} \right)$ so that the kernel alignment timescales are well separated from the timescales of spectral learning. We see that alignment precedes learning in Figure \ref{fig:multi-class} (a). For deeper networks, as discussed in \ref{sec:white_vs_unwhite}, alignment scales in the same way as the time for learning.

\section{Silent Alignment on Real Data and ReLU nets}\label{sec:nonlinear_anisotropy}

In this section, we empirically demonstrate that many of the phenomena described in the previous sections carry over to the nonlinear homogenous networks with small initialization provided that the data is not highly anisotropic. A similar separation in timescales is expected in the nonlinear $L$-homogenous case since, early in training, the kernel evolves more quickly than the network predictions. This argument is based on a phenomenon discussed by  \citet{Chizat2019OnLT}. Consider an initial scaling of the parameters by $\sigma$. We find that the relative change in the loss compared to the relative change in the features has the form $\frac{|\frac{d}{dt} \nabla f| }{|\nabla f|} \frac{\mathcal L}{|\frac{d}{dt} \mathcal L|} \approx O(\sigma^{-L})$ which becomes very large for small initialization $\sigma$ as we show in Appendix \ref{app:laziness}. This indicates, that from small initialization, the parameter gradients and NTK evolve much more quickly than the loss. This is a necessary, but not sufficient condition for the silent alignment effect. To guarantee the silent alignment, the gradients must be finished evolving except for overall scale by the time the loss appreciably decreases. However, we showed that for whitened data that nonlinear ReLU networks do in fact enjoy the separation of timescales necessary for the silent alignment effect in Figure \ref{fig:relu_demo_silent_alignment}. In even more realistic settings, like ResNet in Figure \ref{fig:relu_demo_silent_alignment} (d), we also see signatures of the silent alignment effect since the kernel does not grow in magnitude until the alignment has stabilized. 

We now explore how anisotropic data can interfere with silent alignment. We consider the partial whitening transformation: let the singular value decomposition of the data matrix be $\bm X = \bm U \bm S \bm V^\top$ and construct a new partially whitened dataset $\bm X_{\gamma} = \bm U \bm S^{\gamma} \bm V^\top$, where $\gamma \in (0,1)$. As $\gamma \to 0$ the dataset becomes closer to perfectly whitened. We compute loss and kernel aligment for depth 2 ReLU MLPs on a subset of CIFAR-10 and show results in Figure \ref{fig:cifar_anisotropy_partial_whiten}. As $\gamma \to 0$ the agreement between the final NTK and the learned neural network function becomes much closer, since the kernel alignment curve is stable after a smaller number of training steps. As the data becomes more anisotropic, the kernel's dynamics become less trivial at later time: rather than evolving only in scale, the alignment with the target function varies in a non-trivial way while the loss is decreasing. As a consequence, the NN function deviates from a kernel machine with the final NTK.

\begin{figure}[ht]
    \centering
    \subfigure[Input Spectra]{\includegraphics[width=0.3\linewidth]{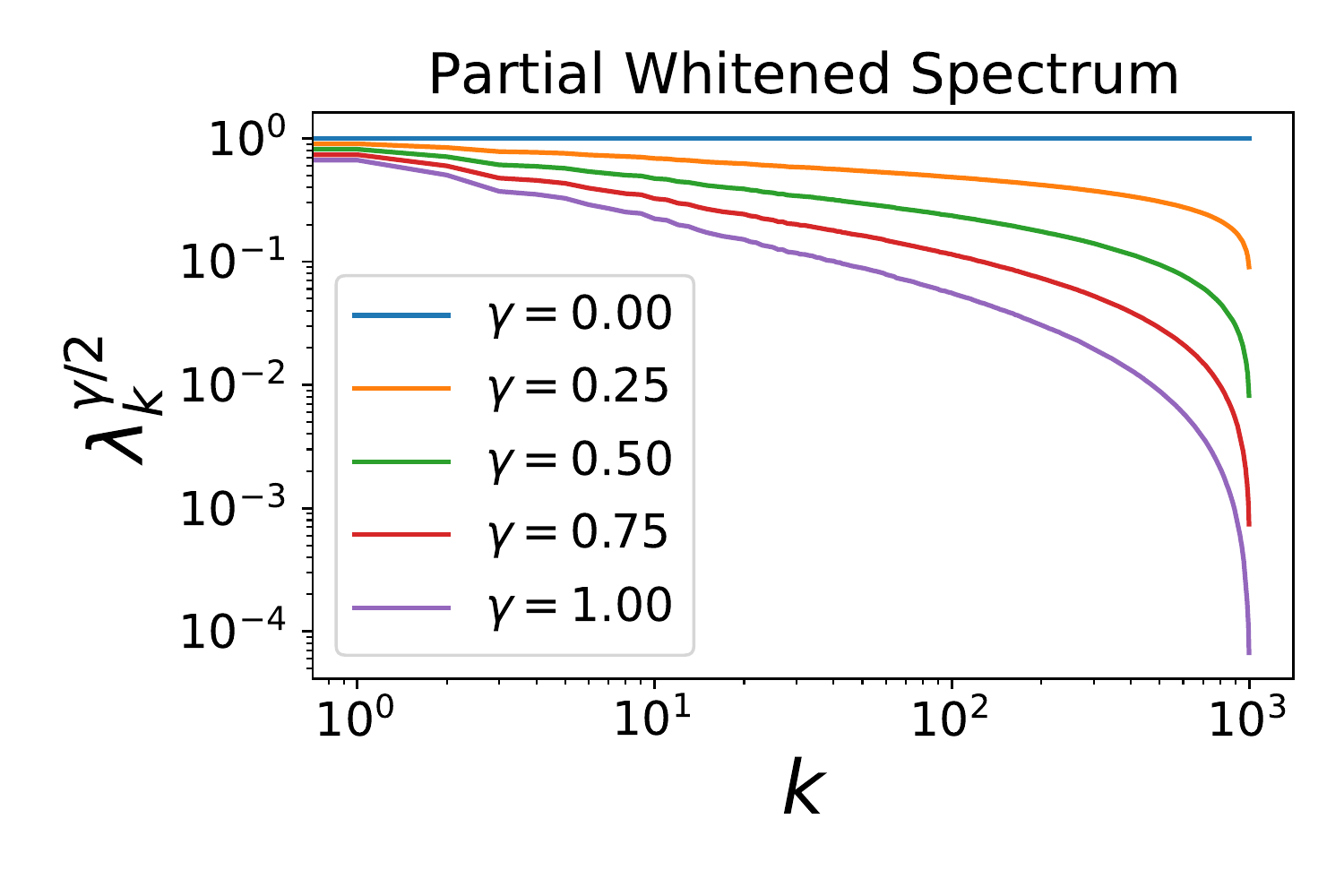}}
    \subfigure[Train Loss]{\includegraphics[width=0.3\linewidth]{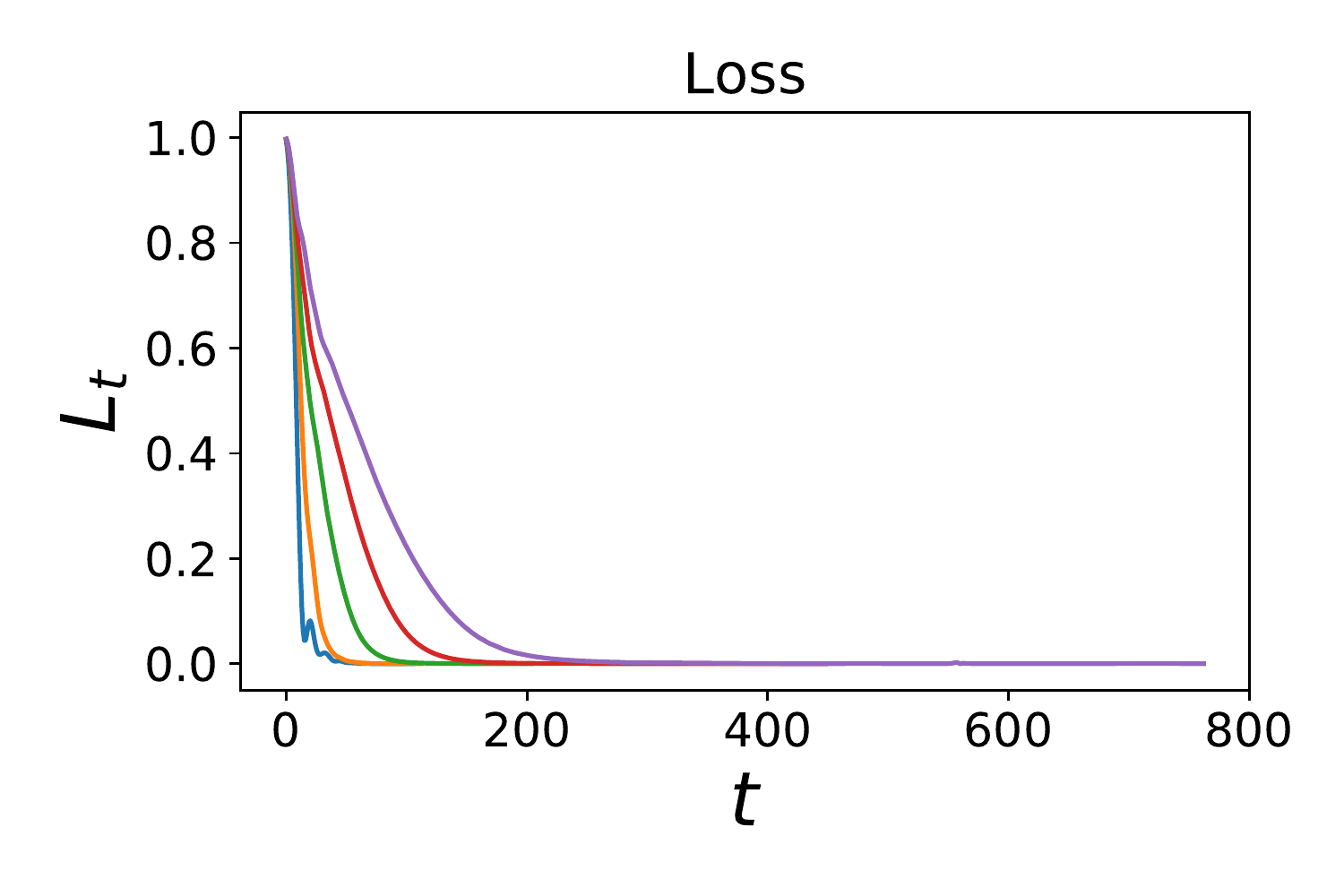}}
    \subfigure[Test Error]{\includegraphics[width=0.3\linewidth]{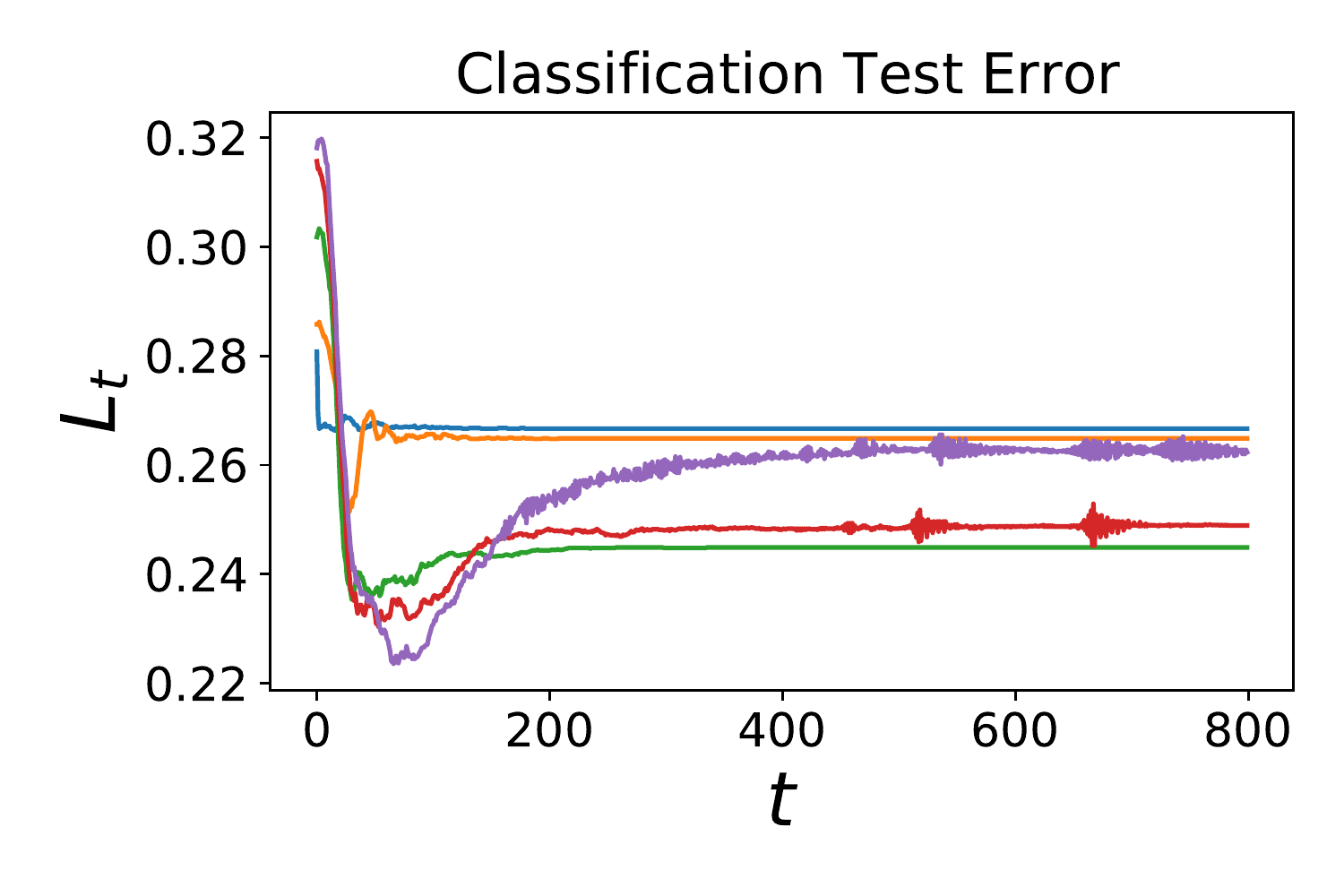}}
    \subfigure[Kernel Norm]{\includegraphics[width=0.3\linewidth]{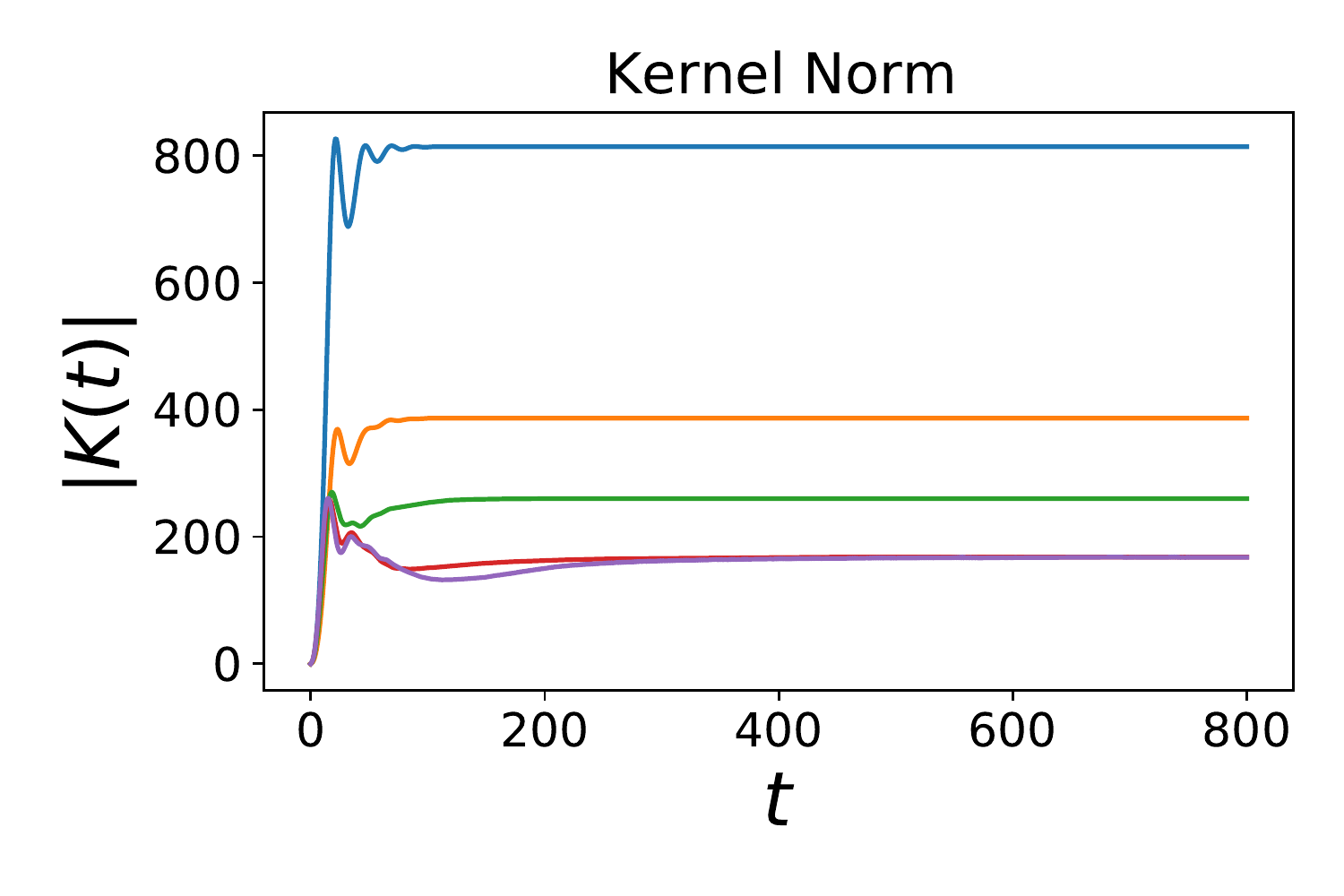}}
    \subfigure[Phase I alignment]{\includegraphics[width=0.3\linewidth]{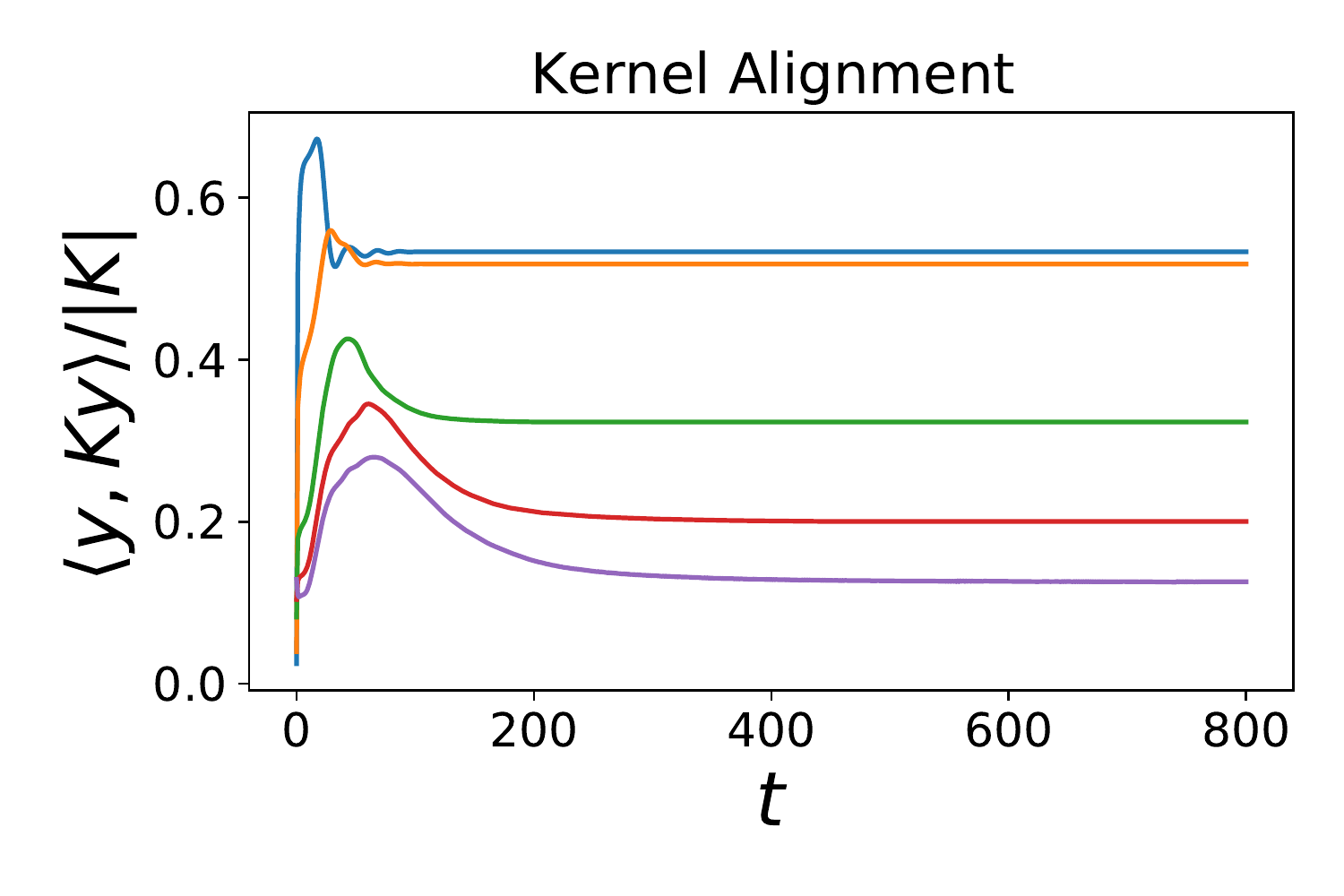}}
    \subfigure[Predictor Comparison]{\includegraphics[width=0.3\linewidth]{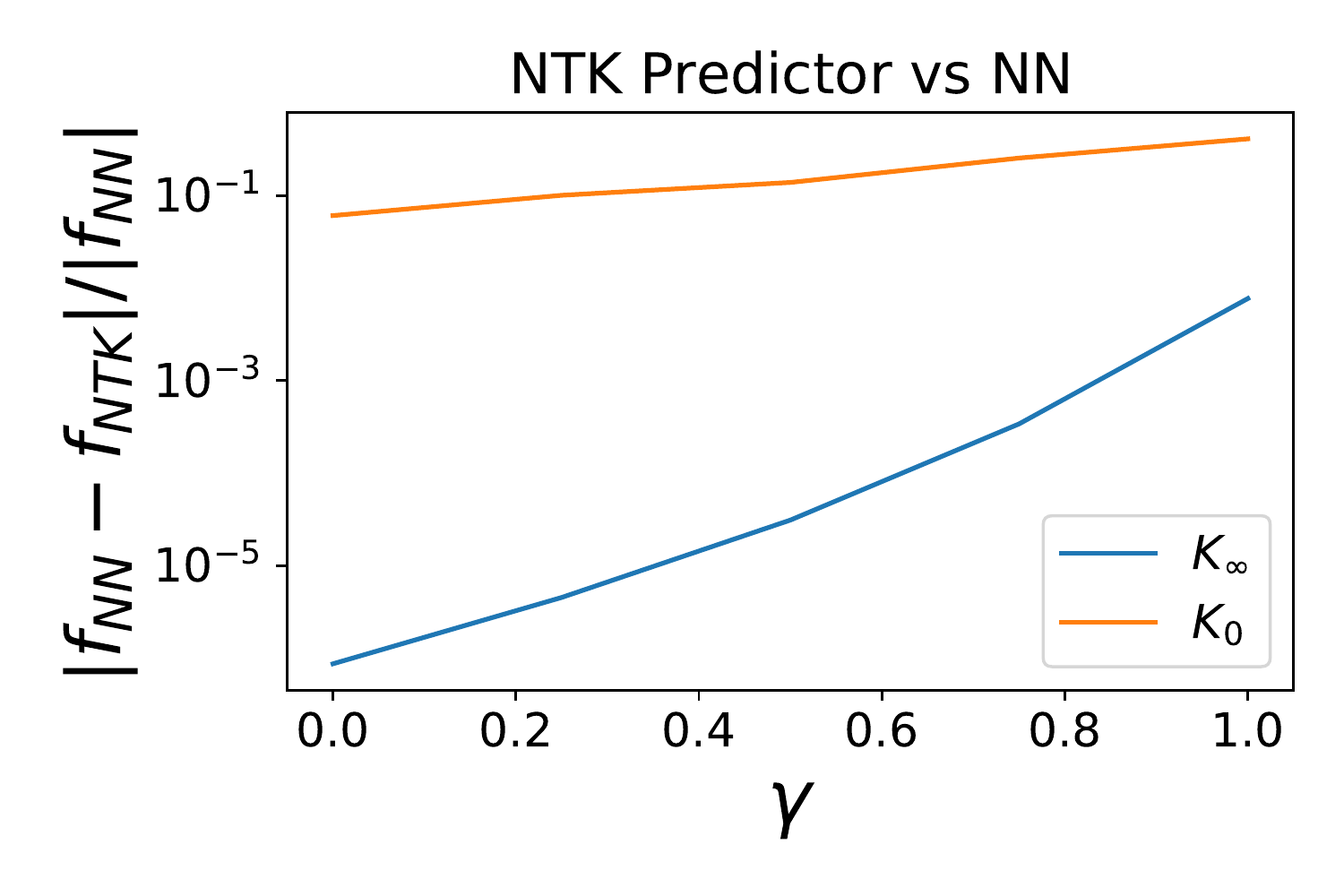}}
    \caption{Anisotropy in the data introduces multiple timescales which can interfere with the silent alignment effect in a ReLU network. Here we train an MLP to do two-class regression using Adam at learning rate $5\times 10^{-3}$. (a) We consider the partial whitening transformation on the 1000 CIFAR-10 images $\lambda_k \to \lambda_k^{\gamma}$ for $\gamma \in (0,1)$ for covariance eigenvalues $\bm\Sigma \bm v_k = \lambda_k \bm v_k$. (b) The loss dynamics for unwhitened data have a multitude of timescales rather than a single sigmoidal learning curve. As a consequence, kernel alignment does not happen all at once before the loss decreases and the final solution is not a kernel machine with the final NTK. (c) The test error on classification.  (d) Anisotropic data gives a slower evolution in the kernel's Frobenius norm. (e) The kernel alignment very rapidly approaches an asymptote for whitened data but exhibits a longer timescale for the anisotropic data. (f) The final NTK predictor gives a better predictor for the neural network when the data is whitened, but still substantially outperforms the initial kernel even in the anisotropic case.  }
    \label{fig:cifar_anisotropy_partial_whiten}
\end{figure}


\section{Conclusion}

We provided an example of a case where neural networks can learn a kernel regression solution while in the rich regime. Our silent alignment phenomenon requires a separation of timescales between the evolution of the NTK's eigenfunctions and relative eigenvalues and a separate phase where the NTK grows only in scale. We demonstrate that, if these conditions are satisfied, then the final neural network function satisfies a representer theorem for the final NTK. We show analytically that these assumptions are realized in linear neural networks with small initialization trained on approximately whitened data and observe that the results hold for nonlinear networks and networks with multiple outputs. We demonstrate that silent alignment is highly sensitive to anisotropy in the input data.

Our results demonstrate that representation learning is not necessarily at odds with the learned neural network function being a kernel regression solution; i.e. a superposition of a kernel function on the training data. While we provide one mechanism for a richly trained neural network to learn a kernel regression solution through the silent alignment effect, perhaps other temporal dynamics of the NTK could also give rise to the neural network learning a kernel machine for a data-dependent kernel. Further, by asking whether neural networks behave as kernel machine for some data-dependent kernel function, one can hopefully shed light on their generalization properties and transfer learning capabilities \citep{ bordelon_icml_learning_curve, Canatar2021SpectralBA,loureiro_lenka_feature_maps, GEIGER20211} and see Appendix \ref{app:transfer_theory}.


\subsubsection*{Acknowledgments}
CP acknowledges support from the Harvard Data Science Initiative. AA acknowledges support from an NDSEG Fellowship and a Hertz Fellowship. BB acknowledges the support of the NSF-Simons Center for Mathematical and Statistical Analysis of Biology at Harvard (award \#1764269) and the Harvard Q-Bio Initiative. We thank Jacob Zavatone-Veth and Abdul Canatar for helpful discussions and feedback.

\bibliography{iclr2022_conference}
\bibliographystyle{iclr2022_conference}

\newpage 
\begin{appendix}

{\Large Appendix}

\counterwithin{figure}{section}

\section{Derivation of Equation \ref{eq:integ_factor}}\label{app:integrating_factor}

\subsection{Training Point Predictions with Time Varying Kernel}\label{app:integrating_factor1}
Given a training set of $P$ data points $\{ (\x^\mu, y^\mu) \}_{\mu=1}^P$, the dynamics of the network training errors $[\bm \Delta_t]^\mu := f(\bm x^\mu, t) - y^\mu$ close in terms of a time-varying neural tangent kernel $[\bm K_t]_{\mu \nu} = K(\bm x^\mu, \bm x^\nu, t)$
\begin{align}
    \frac{d}{dt} \bm\Delta_t = - \bm K_t \bm \Delta_t .
\end{align}
We introduce the transition matrix $\bm\Phi_t \in \mathbb{R}^{P \times P}$ which has the property that $\bm\Delta_t = \bm \Phi_t \bm\Delta_0$ and $\bm\Phi_0 = \bm I$, we obtain the matrix evolution equation $\dot{\bm\Phi}_t = - \bm K_t \bm \Phi_t$. This equation can be solved formally in terms of the Peano-Baker series \citep{baake2011peano, brockett2015finite}
\begin{align}
    \bm\Phi_t = & \bm I - \int_0^t ds_1 \bm K_{s_1}  + \int_0^t ds_1 \bm K_{s_1} \int_0^{s_1} ds_2   \bm K_{s_2}
    \\
    &- \int_0^t ds_1 \bm K_{s_1}\int_0^{s_1} ds_2 \bm K_{s_2} \int_0^{s_2} ds_3 \bm K_{s_3}+ ... 
\end{align}
which can easily be verified to solve $\frac{d}{dt} \bm \Phi(t) = - \bm K(t) \bm \Phi(t)$ with initial condition $\bm\Phi(0) = \bm I$. Under the condition that $\int_0^t \bm K(t)$ commutes with $\bm K(t)$, which is true in the settings of interest in this paper, specifically the setting discussed in Appendix \ref{app:kernel_ev_scale}, we can simplify the Peano-Baker series into a simple matrix exponential
\begin{align}
    \bm\Phi_t &= \bm I - \left[ \int_0^t ds \bm K_s \right] + \frac{1}{2} \left[ \int_0^t ds \bm K_s \right]^2 - \frac{1}{6} \left[ \int_0^t ds \bm K_s \right]^3  ... + \frac{(-1)^k}{k!} \left[ \int_0^t ds \bm K_s \right]^k + ... \nonumber
    \\
    &= \exp\left( - \int_0^t \bm K_s ds \right) .
\end{align}

Thus, under the condition that $\bm K_t $ commutes with $\int_0^t \bm K_s ds$ we can exactly solve for the training error dynamics in terms of integrating factors
\begin{align}
    \bm\Delta_t = \bm\Phi_t \bm\Delta_0 = \exp\left( - \int_0^t \bm K_s ds \right) \bm\Delta_0 .
\end{align}
We expect this formula to hold approximately whenever the eigenvectors of $\bm K$ are approximately equal to the eigenvectors of $\int_0^t \bm K_s ds$.  

\subsection{Test Point Predictions with Time Varying Kernel}
Given access to the value of the function on training points, one can evaluate the function on test points. We have that the evolution of the function on a test point $f(\bm x)$ is given by
\begin{equation}
    \frac{d}{dt} f_t(\bm x) = - \bm k_t (\bm x) \bm \Delta_t,
\end{equation}
where $[\bm k_t (\bm x)]^\mu := K(\bm x, \bm x^\mu, t)$. This gives the final value of $f$ to be
\begin{equation}
    f^*(\bm x) := f_{\infty}(\bm x) = f_0(\bm x) + \int_0^\infty dt \, \bm k_t(\bm x)^\mu \left[\exp\left(- \eta \int_0^t \bm K_{t'} dt' \right) \right] (\bm y - \bm f_0).
\end{equation}
This is exactly equation \ref{eq:integ_factor}.



\section{Kernel Evolution in Scale Only}\label{app:kernel_ev_scale}

We consider the model of kernel evolution introduced in Section \ref{sec:silent_align_theory} where the kernel evolves only in scale for $t > \tau(\epsilon)$ and is of small overall size for $t < \tau(\epsilon)$,
\begin{equation}\label{eq:cases}
    K(x,x',t) = \begin{cases} 
      \epsilon K_0(x,x',t) & t \leq \tau(\epsilon) \\
      g(t) K_{\infty}(x,x') & t > \tau(\epsilon)
   \end{cases}.
\end{equation}
This model allows for alignment of the kernel while small in the time window $t \in (0,\tau)$, followed by scale growth only for $t \in (\tau,\infty)$. The time threshold $\tau$ will generally depend on the initial kernel scale $\epsilon$. For example, in depth $L$ linear MLPs, $\epsilon \sim \sigma^{2L-2}$ and $\tau \sim \sigma^{-L+2}$ with initialization scale $\sigma$ as we show in Figure \ref{fig:align_vs_depth}. We will define a differentiable function $h(t) = \int_{\tau}^t  g(t') dt'$ so that $h'(t) = g(t)$, $h(\tau) = 0$, and $\lim_{t\to\infty} h(t) = \infty$. This last condition follows from $g$'s continuity and the assumption that $\lim_{t\to\infty} g(t) = 1$. We will first show that the final neural network has the form $f(\x) = f_0(\x) +  \bm k_{\infty}(x) \cdot \bm K_{\infty}^{-1} (\bm y - \bm f_0) + O(\epsilon \tau(\epsilon))$. First, we need to calculate the errors $\bm\Delta(t) = \bm y - \bm f(t) \in \mathbb{R}^P$ made on the $P$ training examples. These satisfy the dynamics
\begin{equation}
    \frac{d}{dt} \bm \Delta(t)  = - \begin{cases} 
      \epsilon \bm K_0(t) \bm \Delta(t)  & t \leq \tau \\
      h'(t) \bm K_{\infty} \bm\Delta(t) & t > \tau
   \end{cases},
\end{equation}
where $\bm K_0(t), \bm K_{\infty} \in \mathbb{R}^{P \times P}$ are $P\times P$ gram matrices; e.g. $[\bm K_{0}(t)]_{\mu \nu} = K_0(\x^\mu,\x^\nu,t)$. The vector $\bm k(\x)$ has entries given by $[\bm k(\x)]_{\mu} = K(\x,\x^\mu)$. For $t \in (0,\tau)$, the error vector follows the dynamics
\begin{align}\label{eq:Delta_change}
    \frac{d}{dt} \bm\Delta(t) = - \epsilon \bm K_0(t) \bm\Delta(t).
\end{align}
Introducing operator norm of a matrix, $|\bm \Phi|_{op} = \max_{|\bm v|_2=1} |\bm \Phi \bm v|_2$, we will now bound the operator norm of the change in the transition matrix $\bm \Phi(t)$ introduced in section \ref{app:integrating_factor1}.
\begin{lemma}\label{lemma:phase1}
    Let $k_0 = \max_{t \in (0,\tau)} |\bm K_0(t)|_{op}$ represent the maximum operator norm of $\bm K_0$ achieved on the interval $(0,\tau)$. Let $\bm \Phi(t) \in \mathbb{R}^{P\times P}$ be the transition matrix for the linear dynamics of equation \ref{eq:Delta_change} so that $\frac{d}{dt} \bm \Phi(t) = - \epsilon \bm K_0(t) \bm\Phi(t)$ and $\bm\Phi(0) = \bm I$. Then,
    \begin{equation}
        |\bm \Phi(\tau) - \bm \Phi(0)|_{op} < \epsilon \tau(\epsilon) k_0.
    \end{equation}
\end{lemma}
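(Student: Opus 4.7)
My plan is to convert the ODE $\dot{\bm\Phi}(t) = -\epsilon \bm K_0(t)\bm\Phi(t)$ with $\bm\Phi(0)=\bm I$ into an integral equation, then bound the operator norm of the resulting integral by the triangle inequality and the submultiplicativity of $|\cdot|_{op}$. Concretely, integrating from $0$ to $\tau$ gives
\begin{equation}
\bm\Phi(\tau)-\bm\Phi(0) \;=\; -\epsilon\int_0^{\tau} \bm K_0(t)\,\bm\Phi(t)\,dt,
\end{equation}
so that
\begin{equation}
\bigl|\bm\Phi(\tau)-\bm\Phi(0)\bigr|_{op} \;\le\; \epsilon \int_0^{\tau} \bigl|\bm K_0(t)\bigr|_{op}\,\bigl|\bm\Phi(t)\bigr|_{op}\,dt \;\le\; \epsilon\, k_0 \int_0^{\tau} \bigl|\bm\Phi(t)\bigr|_{op}\,dt .
\end{equation}
The last step uses the definition $k_0 = \max_{t\in(0,\tau)}|\bm K_0(t)|_{op}$. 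The entire proof then reduces to showing $|\bm\Phi(t)|_{op}\le 1$ for all $t\in[0,\tau]$.

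The key step, and the only place where structure beyond a generic Grönwall estimate is used, is that $\bm K_0(t)$ is a neural tangent kernel Gram matrix, hence symmetric and positive semidefinite for every $t$. I would show $|\bm\Phi(t)|_{op}\le 1$ by picking an arbitrary $\bm v\in\mathbb{R}^P$ and computing
\begin{equation}
\frac{d}{dt}\bigl|\bm\Phi(t)\bm v\bigr|_2^{\,2} \;=\; -2\epsilon\,\bm v^{\top}\bm\Phi(t)^{\top}\bm K_0(t)\,\bm\Phi(t)\bm v \;\le\; 0,
\end{equation}
since the quadratic form of a PSD matrix is nonnegative. Because $\bm\Phi(0)=\bm I$, this gives $|\bm\Phi(t)\bm v|_2\le|\bm v|_2$ for every $t\in[0,\tau]$, hence $|\bm\Phi(t)|_{op}\le 1$.

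Combining the two steps yields
\begin{equation}
\bigl|\bm\Phi(\tau)-\bm\Phi(0)\bigr|_{op} \;\le\; \epsilon\,k_0 \int_0^{\tau} 1\,dt \;=\; \epsilon\,\tau(\epsilon)\,k_0,
\end{equation}
which is the claimed bound. The strict inequality stated in the lemma follows because the triangle inequality for the integral cannot be saturated unless $\bm K_0(t)\bm\Phi(t)$ points in a single direction of maximal norm for a.e.\ $t$, which is generically violated once any nontrivial alignment dynamics occurs during $(0,\tau)$; if one prefers, the inequality can be stated non-strictly without affecting the downstream use in Section \ref{sec:silent_align_theory}.

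I do not anticipate a genuine obstacle: the only substantive input is the positive semidefiniteness of the NTK Gram matrix, which holds by construction. The one subtlety worth being explicit about is that passing the operator norm inside the time integral requires only that $t\mapsto \bm K_0(t)$ is, say, continuous on $[0,\tau]$, which is implicit in the phenomenological model of equation \ref{eq:cases}; this is needed to make $k_0$ a genuine maximum (as opposed to a supremum) and to apply the fundamental theorem of calculus to $\bm\Phi$.
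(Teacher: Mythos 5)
Your proof is correct and follows essentially the same argument as the paper's: integrate the ODE, bound the integral via the triangle inequality and submultiplicativity, and establish $|\bm\Phi(t)|_{op}\le 1$ by differentiating $|\bm\Phi(t)\bm v|_2^2$ and invoking positive semidefiniteness of the NTK Gram matrix. Your observation that the argument really delivers a non-strict inequality (and that the paper's proof does too) is a fair point, though it has no effect on the downstream estimates.
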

\begin{proof}
    We begin by noting that, due to the triangle inequality, 
    \begin{equation}
    \begin{aligned}
        |\bm\Phi(\tau) - \bm \Phi(0)|_{op} = \epsilon \left| \int_0^\tau  \bm K_0(t) \bm \Phi(t) dt \right|_{op} \leq \epsilon \int_0^\tau  \left| \bm K_0(t) \bm\Phi(t) \right|_{op} dt
        \\
        \leq \epsilon \int_0^\tau  \left| \bm K_0(t) \right| \left| \bm\Phi(t) \right|_{op} dt \leq \epsilon k_0 \int_0^\tau  \left| \bm\Phi(t) \right|_{op} dt.
    \end{aligned}
    \end{equation}
We will now establish that $|\bm\Phi(t)|_{op} \leq 1$. Note that for any vector $\bm v \in \mathbb{R}^P$ that
\begin{align}
    \frac{1}{2} \frac{d}{dt} |\bm\Phi(t) \bm v |_2^2 = \bm v^\top \bm \Phi^\top \dot{\bm\Phi}(t) \bm v =- \epsilon \bm v^\top \bm \Phi(t)^\top \bm K(t) \bm\Phi(t) \bm v \leq 0,
\end{align}
where the final inequality follows from the fact that $\bm K(t)$ is positive semidefinite for all $t$. Therefore we have shown that $|\bm \Phi(t)|_{op} \leq |\bm\Phi(0)|_{op} = |\bm I|_{op} = 1$. Using this inequality, we find that
\begin{align}
    |\bm\Phi(\tau) - \bm \Phi(0)|_{op} \leq \epsilon k_0 \int_0^\tau  |\bm\Phi(t)|_{op} \ dt \leq \epsilon k_0 \tau(\epsilon).
\end{align}
\end{proof}
With the above Lemma \ref{lemma:phase1}, we can bound the discrepancy $\bm\Delta(\tau)$ and $\bm\Delta(0)$, namely
\begin{equation}
\begin{aligned}
    |\bm\Delta(\tau) - \bm\Delta(0)|_2 &= |(\bm\Phi(\tau) - \bm\Phi(0) ) \bm\Delta(0)|_2\\
    &\leq |\bm \Phi(\tau) - \bm\Phi(0)|_{op} |\bm\Delta(0)|_2 \leq \epsilon k_0 \tau(\epsilon) |\bm\Delta(0)|_2.
\end{aligned}
\end{equation}
This inequality must therefore hold entry-wise as well, so that
\begin{align}\label{eq:Delta_tau}
    \bm\Delta(\tau)  = \bm\Delta(0) + O(\epsilon k_0 \tau(\epsilon)) .
\end{align} 
We will now establish how the training predictions $\bm\Delta(t)$ evolve for the second interval $t \in (\tau,\infty)$.
\begin{lemma}\label{lemma:phase2}
    Suppose that from $t\in(\tau,\infty)$ that $\bm\Delta(t)$ obeys the dynamics $\frac{d}{dt} \bm\Delta(t) = - h'(t) \bm K_{\infty} \bm\Delta(t)$ where $\bm \Delta(\tau)$ is as in equation \ref{eq:Delta_tau}. Then, for all $t \in (\tau,\infty)$, 
    \begin{equation}
\begin{aligned}
    \bm\Delta(t) &=   \exp\left( - h(t) \bm K_{\infty}  \right) \left[  (\bm y - \bm f_0) +  O(\epsilon \tau(\epsilon) k_0) \right] .
\end{aligned}
\end{equation}
\end{lemma}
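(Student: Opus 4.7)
}

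The plan is to exploit the fact that, on the interval $(\tau,\infty)$, the dynamics for $\bm\Delta(t)$ is linear with a coefficient matrix of the separable form $h'(t)\bm K_\infty$ where the direction $\bm K_\infty$ is constant in time. Because $\bm K_\infty$ trivially commutes with itself at all times, the Peano--Baker series from Appendix \ref{app:integrating_factor1} collapses to an ordinary matrix exponential, and no subtle non-commutativity issues need to be handled. This is the key simplification that distinguishes Phase II from Phase I.

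First, I will write down the transition matrix $\bm\Psi(t,\tau)$ satisfying $\frac{d}{dt}\bm\Psi(t,\tau) = -h'(t)\bm K_\infty \bm\Psi(t,\tau)$ with $\bm\Psi(\tau,\tau) = \bm I$. Since $\bm K_\infty$ is constant, one checks directly that
\begin{equation}
    \bm\Psi(t,\tau) \;=\; \exp\!\left(-\bm K_\infty \int_\tau^t h'(s)\,ds\right) \;=\; \exp\!\left(-h(t)\bm K_\infty\right),
\end{equation}
using $h(\tau)=0$. Hence $\bm\Delta(t) = \exp(-h(t)\bm K_\infty)\,\bm\Delta(\tau)$ for all $t \in (\tau,\infty)$.

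Next, I will insert the Phase I result. By the definition $\bm\Delta(0) = \bm y - \bm f_0$ and Lemma \ref{lemma:phase1} (in the form of equation~\ref{eq:Delta_tau}), we have $\bm\Delta(\tau) = (\bm y - \bm f_0) + \bm r$ where $|\bm r|_2 = O(\epsilon\,\tau(\epsilon)\,k_0)$. Substituting,
\begin{equation}
    \bm\Delta(t) \;=\; \exp\!\left(-h(t)\bm K_\infty\right)\bigl[(\bm y - \bm f_0) + \bm r\bigr],
\end{equation}
which is the claimed form once we justify absorbing $\bm r$ into the $O(\epsilon\tau k_0)$ slot inside the brackets.

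Finally, I need to verify that the error term does not get amplified by the exponential. The key observation is that $\bm K_\infty$ is positive semidefinite and $h(t) \geq 0$ for $t \geq \tau$, so every eigenvalue of $\exp(-h(t)\bm K_\infty)$ lies in $(0,1]$, giving $|\exp(-h(t)\bm K_\infty)|_{op} \leq 1$. Therefore $|\exp(-h(t)\bm K_\infty)\bm r|_2 \leq |\bm r|_2 = O(\epsilon\tau(\epsilon)k_0)$ uniformly in $t$, and the $O(\epsilon\tau k_0)$ bookkeeping survives the matrix exponential. I expect essentially no real obstacle here: the only point that requires a line of justification is this operator-norm contractivity, which is the counterpart of the bound $|\bm\Phi(t)|_{op}\leq 1$ used in Lemma \ref{lemma:phase1}. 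Once this is in place, sending $t\to\infty$ and using that $h(t)\to\infty$ and $\bm K_\infty$ is invertible (on the relevant subspace) will be what yields the final representer-theorem form $f^*(\x) = f_0(\x) + \bm k_\infty(\x)\cdot \bm K_\infty^{-1}(\bm y - \bm f_0) + O(\epsilon\tau(\epsilon))$ claimed immediately above the lemma.
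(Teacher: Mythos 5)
Your proof is correct and takes essentially the same route as the paper: both reduce to observing that the coefficient matrix $-h'(t)\bm K_\infty$ has a constant direction, so the time-ordered solution collapses to the ordinary matrix exponential $\exp(-h(t)\bm K_\infty)$ (the paper phrases this via diagonalizing $\bm K_\infty$ and solving scalar integrating-factor ODEs, you phrase it via the Peano--Baker collapse, which are the same thing), after which one substitutes the Phase I bound on $\bm\Delta(\tau)$. Your extra paragraph on the operator-norm contractivity $|\exp(-h(t)\bm K_\infty)|_{op}\le 1$ is not actually needed for the lemma as stated, since the $O(\epsilon\tau k_0)$ term sits inside the brackets, but it is a correct and useful observation for the downstream theorem.
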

\begin{proof}
    The differential equation can be solved through eigendecomposition and integrating factors. Let $\Delta_k(t)$ represent the $k$-th component of $\bm\Delta(t)$ in the eigenbasis of $\bm K_{\infty}$ which is static for $t\in(\tau,\infty)$. Let the corresponding eigenvalue of $\bm K_{\infty}$ be $\lambda_k$. The scalar variable $\Delta_k(t)$ obeys the dynamics
    \begin{align}
        \frac{d}{dt} \Delta_k(t) = - \lambda_k h'(t)  \Delta_k(t).
    \end{align}
    This can be solved with integrating factors, noting that $\frac{d}{dt} \left[ e^{\lambda_k h(t) } \Delta_k(t) \right] = 0$. This implies that $\Delta_k(t) = e^{-\lambda_k h(t)} \Delta_k(\tau)$. Written as a vector, $\bm\Delta(t) = \exp\left( - h(t) \bm K_{\infty} \right) \bm\Delta(\tau)$. Since by Lemma \ref{lemma:phase1} we have $\bm\Delta(\tau) = \bm\Delta(0) + O(\epsilon k_0 \tau(\epsilon))$, we obtain the desired result.
\end{proof}
We will now combine the results of the previous two lemmas which analyze the evolution of the network predictions on the training set to give our main silent alignment result, which specifies what the neural network function predicts for an arbitrary test point $\x$.
\begin{theorem}
Let the kernel have dynamics of Equation \ref{eq:cases} where $g(t)$ is a continuous, integrable function with $\lim_{t \to \infty} g(t) = 1$. The function learned by the neural network is
\begin{align}
    f(\x) - f_0(\x)  = \bm k_{\infty}(\x) \cdot \bm K_{\infty}^{-1} \bm y + O_{\epsilon}(\epsilon \tau(\epsilon) ).
\end{align}
\end{theorem}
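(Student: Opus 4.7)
The plan is to apply the integrating-factor representation from Appendix~\ref{app:integrating_factor}, namely
\begin{equation}
f^*(\x) - f_0(\x) = \int_0^\infty \bm k_t(\x)^\top \exp\!\left(-\int_0^t \bm K_{t'}\,dt'\right)(\bm y - \bm f_0)\,dt,
\end{equation}
and to evaluate it by splitting the time integral at $t=\tau(\epsilon)$ using the two cases defining the kernel. A preliminary check is that the commutativity condition needed for the matrix-exponential form of the transition matrix holds; this is automatic on $(\tau,\infty)$, where $\bm K_t = h'(t)\bm K_\infty$ has a fixed eigenbasis, and on $(0,\tau)$ no closed form is needed because the whole segment will be bounded crudely.

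For the Phase~I segment $t\in(0,\tau)$, the plan is to bound the integrand in norm. Since $\bm k_t(\x)=\epsilon\,\bm k_0(\x,t)$ is $O(\epsilon)$ pointwise, since the transition matrix has operator norm at most $1$ by the argument of Lemma~\ref{lemma:phase1}, and since the interval length is $\tau(\epsilon)$, this segment contributes $O(\epsilon\,\tau(\epsilon))$ to the final prediction. The same estimate also produces the drift $\bm\Delta(\tau) = (\bm f_0 - \bm y) + O(\epsilon\tau(\epsilon))$ used by Lemma~\ref{lemma:phase2} on the Phase~II interval.

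For the Phase~II segment $t\in(\tau,\infty)$, I would substitute $\bm k_t(\x)=h'(t)\bm k_\infty(\x)$ and $\int_0^t \bm K_{t'}\,dt' = h(t)\bm K_\infty + O(\epsilon\tau)$ into the integrating-factor formula. The resulting integral
\begin{equation}
\bm k_\infty(\x)^\top \int_\tau^\infty h'(t)\,\exp\!\left(-h(t)\bm K_\infty + O(\epsilon\tau)\right)(\bm y - \bm f_0)\,dt
\end{equation}
collapses, under the change of variables $u=h(t)$ with $h(\tau)=0$ and $h(t)\to\infty$, to $\bm k_\infty(\x)^\top \bm K_\infty^{-1}(\bm y - \bm f_0) + O(\epsilon\tau(\epsilon))$, provided $\bm K_\infty$ is strictly positive definite on the training data so that $\int_0^\infty e^{-u\bm K_\infty}\,du = \bm K_\infty^{-1}$. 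Absorbing $\bm k_\infty(\x)^\top \bm K_\infty^{-1}\bm f_0$ into the error term, which is consistent with the $O(\sigma^L)$ heuristic of Section~\ref{sec:silent_align_theory} for homogeneous networks at small initialization, yields the stated identity.

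The main obstacle I anticipate is keeping the error propagation tight. The Phase~I contribution perturbs $\int_0^t \bm K_{t'}\,dt'$ by an operator of norm $O(\epsilon\tau)$, and naively pulling this perturbation through the Phase~II matrix exponential could produce spurious growth as $t\to\infty$. The clean route, already used in Lemma~\ref{lemma:phase2}, is to absorb the perturbation additively into the initial condition $\bm\Delta(\tau)$ rather than leave it inside the exponential, which holds the final error at the stated $O(\epsilon\tau(\epsilon))$ rate. A secondary point is that strict positive-definiteness of $\bm K_\infty$ on the training set, needed so that $\int_0^\infty e^{-u\bm K_\infty}\,du$ converges, must be recorded explicitly as part of the hypotheses.
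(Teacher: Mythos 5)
Your proposal matches the paper's proof in all essentials: split the time integral at $\tau$, bound the Phase~I contribution by $O(\epsilon\tau(\epsilon))$ using $|\bm\Phi(t)|_{op}\leq 1$ from Lemma~\ref{lemma:phase1}, evaluate the Phase~II integral via the substitution $u=h(t)$ to obtain $\bm K_\infty^{-1}$ as in Lemma~\ref{lemma:phase2}, and absorb the Phase~I drift additively into $\bm\Delta(\tau)$ rather than leaving it inside the exponential. Your observation that strict positive-definiteness of $\bm K_\infty$ on the training set is needed for $\int_0^\infty e^{-u\bm K_\infty}\,du$ to converge is correct and is indeed left implicit in the paper's proof.
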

\begin{proof}
Using Lemma \ref{lemma:phase1} and \ref{lemma:phase2}, we know the full dynamics for training predictions $\bm\Delta(t)$. Using $\bm\Delta(t)$, we can solve for the final predictor $f(\x)$ by integrating dynamics $\dot{f}(\x,t) = \bm k(\x,t) \cdot \bm\Delta(t)$.
\begin{align}
    f(\x) - f_0(\x) &=  \epsilon \int_0^{\tau} \bm k_{0}(\x,t) \cdot \bm\Delta(t) dt + \bm k_{\infty}(\x) \cdot  \int_{\tau}^{\infty}  \ h'(t) \exp\left( -h(t) \bm K_{\infty} \right)  \bm\Delta(\tau) dt 
\end{align}
We will now bound the first term. Taking $\tilde{k}_0 = \max_{t\in(0,\tau) ,\x \in \mathbb{R}^D} |\bm k_0(\x, t)|_2$, we get that
\begin{equation}
\begin{aligned}
   \left|\epsilon \int_0^\tau \bm k_0(\x,t) \cdot \bm\Delta(t) \right| &\leq \epsilon \int_0^\tau |\bm k(\x)| |\bm\Delta(t)| dt \leq \epsilon (1+\epsilon \tau(\epsilon) k_0 )|\bm\Delta_0| \int_0^\tau |\bm k_0(\x,t)| dt \nonumber \\
   &\leq \epsilon \tau(\epsilon) \tilde{k}_0 (1+\epsilon \tau(\epsilon) k_0 )  |\bm\Delta(0)| = O_{\epsilon}(\epsilon \tau(\epsilon)).
\end{aligned}
\end{equation}
We can now integrate the matrix exponential in the second term, using the fact that
\begin{align}
    \int_{\tau}^{\infty} h'(t) \exp\left( -h(t) \bm K_{\infty} \right) dt = \int_0^\infty \exp\left( -h \bm K_{\infty} \right) dh = \bm K_{\infty}^{-1}.
\end{align}
Using the fact that $\bm\Delta(\tau) = \bm\Delta(0) + O(\tau(\epsilon) \epsilon)$ from Lemma \ref{lemma:phase1}, we arrive at the desired result
\begin{align}
    f(\x) - f_0(\x) = \bm k_{\infty}(\x) \bm K_{\infty}^{-1} \bm\Delta_0 + O_{\epsilon}(\epsilon \tau(\epsilon)) .
\end{align}
\end{proof}
We have now established that, given the kernel dynamics in \Eqref{eq:cases}, $f(\x) - f_0(\x)$ converges to the kernel regression solution with final NTK as $\epsilon \to 0$ provided $\lim_{\epsilon \to 0} \epsilon \tau(\epsilon) = 0$. This is generic in the settings we consider in this paper for networks with small initialization. In this small initialization setting, $f_0$ is also negligible so that $f(\x)$ itself is a kernel regression solution. For example, in a linear depth $L$ neural network with initial weight scale $\sigma$, the initial scale of the kernel is $\epsilon \sim \sigma^{2L-2}$ while the time to alignment scales as $\tau \sim \sigma^{2-L}$ thus $\epsilon \tau \sim \sigma^{L}$ can be made arbitrarily small by taking $\sigma \to 0$. Lastly, the initial network outputs $f_0(\x) \sim \sigma^L$ can also be made arbitrarily small.

\section{Phases of Learning at Small Initialization}

\subsection{Phase I: Two layer Network and Kernel Alignment}\label{app:two_layer_phase_one}\label{subsec:phase1}

We now present an analysis distinct from that of the previous subsection to go beyond the first step of gradient descent. The NTK for the two layer linear network has the form $K = \x^\top \bm M \x'$ with $\bm M = \bm W^\top \bm W + |\bm a|^2 \bm I$. Our goal is to determine the eigendecomposition of $\bm M$. Introduce the variables $q(t) = \frac{1}{2} \bm\beta^\top \bm M \bm\beta = \frac{1}{2} \left[ |\bm a|^2 + \bm\beta^\top \bm W^\top \bm W \bm\beta \right]$ and $r(t) = \bm a^\top \bm W \bm \beta $. These dynamics form a closed two dimensional linear system early in training
\begin{equation}
\begin{aligned}
   \frac{d}{dt} \begin{bmatrix}
    q(t) \\ r(t)
    \end{bmatrix} 
    &= 2 \eta s \begin{bmatrix}
        0 & 1 \\
        1 & 0 
        \end{bmatrix}\begin{bmatrix}
    q(t) \\ r(t)
    \end{bmatrix} + O(\sigma^3) \ , \quad   t,\sigma \to 0 
    \\  \implies \begin{bmatrix}
    q(t) \\ r(t)
    \end{bmatrix} &=
    \frac{1}{2}(q_0+r_0) \begin{bmatrix} 1  \\ 1 \end{bmatrix} e^{2 \eta st} + \frac{1}{2}(q_0-r_0) \begin{bmatrix} 1  \\ -1 \end{bmatrix} e^{-2 \eta s t} + O(\sigma^3)
\end{aligned}
\end{equation}
The variable $q(t)$ represents the alignment of the NTK with the optimal direction $\bm \beta$ while $r(t)$ defines the alignment of the network with the teacher. We see that this alignment increases exponentially with timescale $t \sim \eta^{-1} s^{-1}$. While the above equations hold for early time and small initialization for any initial condition $q_0, r_0$, we can further estimate these initial values under random initialization provided the input dimension is large. We stress that this limit is not necessary for the silent alignment, but allows for a nice simplification. For Gaussian initialization $a_i \sim \mathcal N(0,\sigma^2/N), W_{ij} \sim \mathcal{N}(0,\sigma^2/D)$ with large $D$, we have
\begin{align}\label{eq:qr}
    \left<q_0\right> = \frac{\sigma^2}{2} \left( 1 + \frac{N}{D}  \right) \ , \ \left<r_0\right>= 0 \ , \ \left<r_0^2\right> = \frac{\sigma^4}{D}.
\end{align}
In the large $D$ limit, we have with high probability $q_0 \gg r_0$ and thus $r(t) = q_0 \sinh(2 \eta st)$ and $q = q_0 \cosh(2 \eta st)$.  Note that this gives the quantities $q(t) = \frac{1}{2} \bm\beta^\top \bm M(t) \bm\beta, r(t) = \bm\beta^\top \bm W^\top \bm a$ early in training. Now consider a unit vector $\bm v$ which is orthogonal to the solution $\bm \beta^\top \bm v = 0$. We find that the projection of $\mathbf M$ along this direction evolves dynamically as: 
\begin{equation}
\begin{aligned}
        \frac{d}{d t} \bm{v}^\top \bm M(t) \bm v &= 2 \bm v^\top \left[ \bm \beta \bm a^\top \bm W  + \bm \beta^\top \bm W^\top \bm a  \bm I   \right] \bm v
    \\
    &= 2 r(t).
\end{aligned}
\end{equation}
We can conclude that $\bm v^\top \bm M \bm v$ is equal to $q(t)$ up to an additive initialization constant. We see that this is evolving half as quickly as $\bm \beta^\top \bm M \bm \beta = 2 q(t)$. Since $\bm v^\top \bm M_0 \bm v \sim O(\sigma^2)$ is small compared to the exponentially growing $\bm M(t)$, the only matrix that satisfies these two conditions must necessarily take the form
\begin{align}
    \bm M(t) = q_0 \cosh(2\eta s t) \left[ \bm\beta \bm\beta^\top + \bm I \right] + \bm M_0.
\end{align}
The first term, which is growing exponentially in $t$ will eventually overwhelm the randomly initialized kernel $\bm M_0$, which is $O(\sigma^2)$.

\subsection{Phase I: Error in the Leading Order Approximation}\label{sec:phase_one_error}

In solving the equations of the previous section, we truncated the full gradient descent equations at order $\sigma^3$. It is important to confirm that the error generated by this truncation remains bounded. We will argue by self-consistency. The full equations are
\begin{align}
    \frac{d}{dt} \bm a =  \bm W \left( s \bm\beta - \bm W^\top \bm a \right), \quad
    \frac{d}{dt} \W =  \bm a \left( s \bm \beta - \bm W^\top  \bm a  \right)^\top.
\end{align}
One can use these equations to solve for the dynamics of the $r, q$ variables:
    \begin{align}
    \frac{d}{dt} q(t) &= 2 s r - \bm \beta^\top \left[\bm W^\top \bm a \bm a^\top \bm W + \bm a^\top \bm W \bm W^\top \bm a \right] \bm \beta = 2 s r - [2 r^2 + \sum_{\bm v_i \perp \bm \beta}  (\bm a^\top \bm W \bm v_i)^2 ] \label{eq:align1} \\
    \frac{d}{dt} r(t) &= 2 s q - \bm a^\top [ \bm W \bm W^\top + \bm a \bm a^\top] \bm W \bm \beta = 2 s q - 2 |\bm a|^2 r + \bm a^\top [\bm a \bm a^\top - \bm W \bm W^\top] \bm W \beta\label{eq:align2}
    \end{align}
The second equality in \eqref{eq:align1} comes from inserting a complete basis of states including $\bm \beta$ and $\bm v_i \perp \bm \beta$ into the last term of the left-hand side. The second equality in \eqref{eq:align2} comes from writing $\bm W \bm W^\top + \bm a \bm a^\top = 2 \bm a \bm a^\top + (\bm W \bm W^\top - \bm a \bm a^\top)$. Note that the final term in brackets on the right hand side is a conserved quantity for linear networks, and so is always of order $O(\sigma^2)$.

Assuming the solutions for $q, r$ are valid to order $\sigma^2$, we get that $\frac{d}{dt} \bm a^\top \bm W \bm v_i = O(\sigma^4)$.
\begin{align}
    \frac{d}{dt} \bm a^\top \bm W \bm v_i = (\bm\beta-\hat{\bm\beta})^\top \bm W^\top \bm W \bm v_i + |\bm a|^2 (\bm\beta - \bm{\hat \beta})^\top \bm v_i 
\end{align}

Further noting that because of the conservation law $\bm a \bm a^\top - \bm W \bm W^\top = O(\sigma^2)$ is also constant in time. This gives us that
\begin{equation}
    | \bm a^\top [\bm a \bm a^\top - \bm W \bm W^\top] \bm W \bm \beta | \leq \sigma^2 | \bm a^\top\bm W \bm \beta | = \sigma^2 r.
\end{equation}
We now note that $r, |\bm a|^2$ both grow as a ($\sigma, s$-independent) constant times  times $\sigma^2 e^{2 s t}$. The correction to the dynamics of both equations is then bounded by a constant times $\sigma^4 e^{4 s t}$.
This will be less than $\sigma^2$ as long as $t$ satisfies
\begin{equation}
    t \ll \frac{1}{4 s} \log \left( \frac{s}{\sigma^2} \right).
\end{equation}
For $\sigma^2 \ll s$, the alignment time $t=1/s$ falls within this range and we are guaranteed alignment to the Ganguli-Saxe configuration.

The error of the full solution at time $t$ can be bounded by the integral of this error bound from $0$ to $t$, namely a constant times $\sigma^4/ s$. As long as $s \gg \sigma^4$, we are guaranteed that the error of the kernel is $O(\sigma^2)$ as given in \eqref{eq:approx_align}.

\subsection{Phase I: Two Layer Analysis with Unwhitened Data}\label{app:two_layer_unwhitened} 

We now study the same linearization around the initial fixed point used in the main text but for the two layer network with unwhitened data. In this case,
\begin{align}
    \frac{d}{dt} \bm a  &\sim s \bm W \bm\Sigma \bm\beta,
    \\
    \frac{d}{dt} \bm W &\sim s \bm a \bm \beta^\top \bm\Sigma.
\end{align}
which holds asymptotically as $t / \log(\sigma^{-1})  \to 0$.
We introduce the following variables which form a closed linear dynamical system
\begin{equation}
\begin{aligned}
    r_1(t) &= \bm \beta^\top \bm W^\top \bm a ,
    \\
    r_2(t) &= |\bm a|^2 ,
    \\
    r_3(t) &= \bm \beta^\top \bm W^\top \bm W \bm \Sigma \bm \beta  ,
    \\
    r_4(t) &= \bm \beta^\top \bm\Sigma \bm W^\top \bm a ,
    \\
    r_5(t) &= \bm\beta^\top \bm \Sigma \bm W^\top \bm W \bm \Sigma \bm \beta.
\end{aligned}
\end{equation}
Introduce the constants $a = \bm\beta^\top \bm \Sigma \bm\beta \ , \ b = \bm\beta^\top \bm\Sigma^2 \bm\beta$. Using the weight dynamics, it is straightforward to show that
\begin{align}
    \dot{\bm r}(t) \sim s \begin{bmatrix}
    0 & a & 1 & 0 & 0 \\
    0 & 0 & 0 & 2 & 0 \\
    b & 0 & 0 & a & 0 \\
    0 & b & 0 & 0 & 1 \\
    0 & 0 & 0 & 2b & 0
    \end{bmatrix} \bm r(t) \ , \ t \ll \log( \sigma^{-2} ).
\end{align}

This matrix has eigenvalues $\lambda \in \{0, -\sqrt{b}, \sqrt{b}, - 2\sqrt{b}, 2\sqrt{b}\}$. Since there are only two positive eigenvalues $\sqrt{b}, 2\sqrt{b}$, it suffices to consider evolution along those two eigendirections, where the kernel and neural network function will be amplified. Evolution along these direcions give
\begin{align}
    \bm r(t) \sim c_1 e^{s \sqrt{b} t} \begin{bmatrix}
    1 \\
    0 \\
    \sqrt{b} \\
    0 \\
    0
    \end{bmatrix} + c_2 e^{2 s  \sqrt{b} t}
    \begin{bmatrix}
    a
    \\
    \sqrt{b}
    \\
    a\sqrt{b}
    \\
    b
    \\
    b\sqrt{b}
    \end{bmatrix} ,
\end{align}
where $c_1,c_2$ are constants determined by intialization. At large time, the large eigenvalue mode $\lambda = 2\sqrt{b}$ will dominate. Decomposing $\bm W = \bm W_0 + \bm a(t) \left[ v_1(t) \bm\beta + v_2(t) \bm \Sigma \bm\beta \right]^{\top}$ we find that the only self consistent solution is $v_1(t) = 0, v_2(t) = b^{-1/2}$. This implies that the kernel evolution will take the form
\begin{equation}
\begin{aligned}
    K(\x,\x',t) &\sim K(\x,\x',0) + |\bm a(t)|^2  \x^\top \bm M \x',
    \\
    \bm M &= \left[ \frac{1}{\sqrt{\bm\beta^\top \bm\Sigma^2 \bm \beta}} \bm \Sigma \bm \beta\bm\beta^\top \bm\Sigma +  \bm I  \right].
\end{aligned}
\end{equation}
We see that the kernel evolves along the directions $\bm\Sigma\bm\beta$ early in training for unwhitened data. We visualize the two stages of learning for unwhitened data in Figure \ref{fig:anisotropic_visual}. 
\begin{figure}
    \centering
    \subfigure[Initialization]{\includegraphics[width=0.32\linewidth]{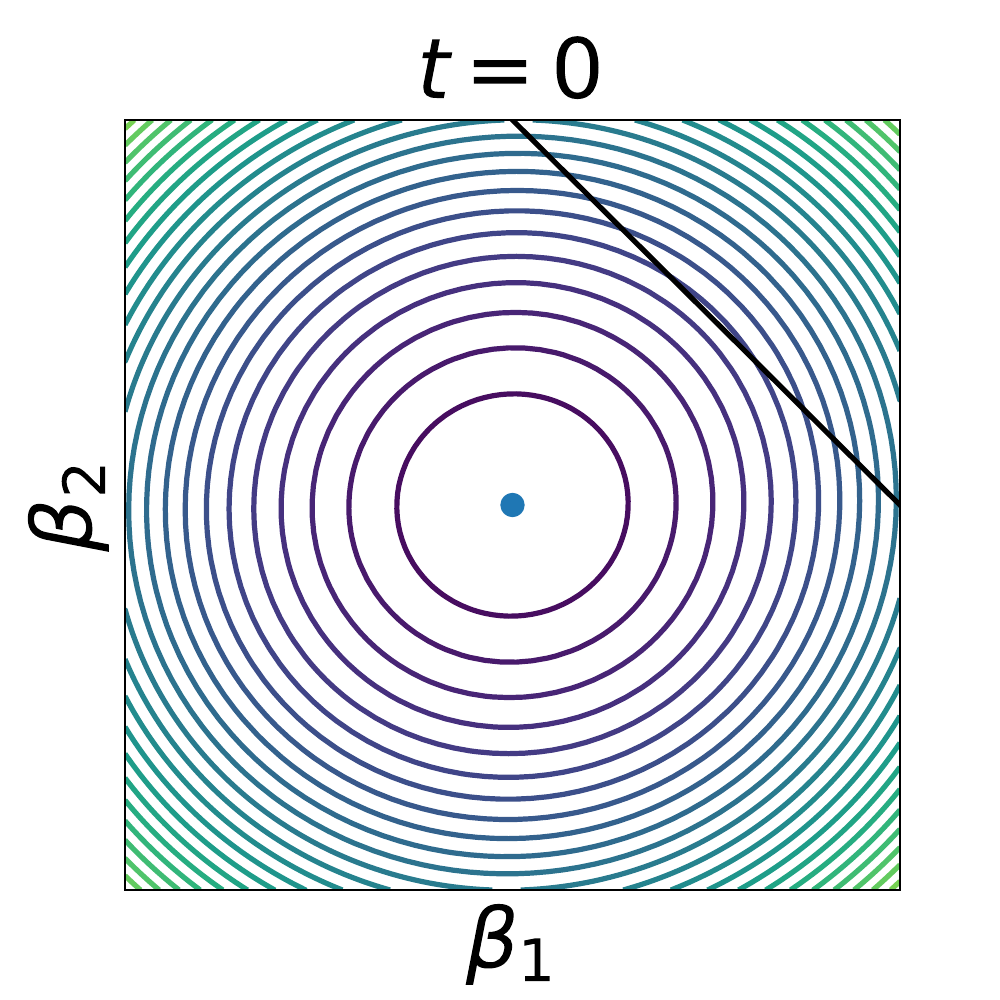}}
    \subfigure[Transition Time]{\includegraphics[width=0.32\linewidth]{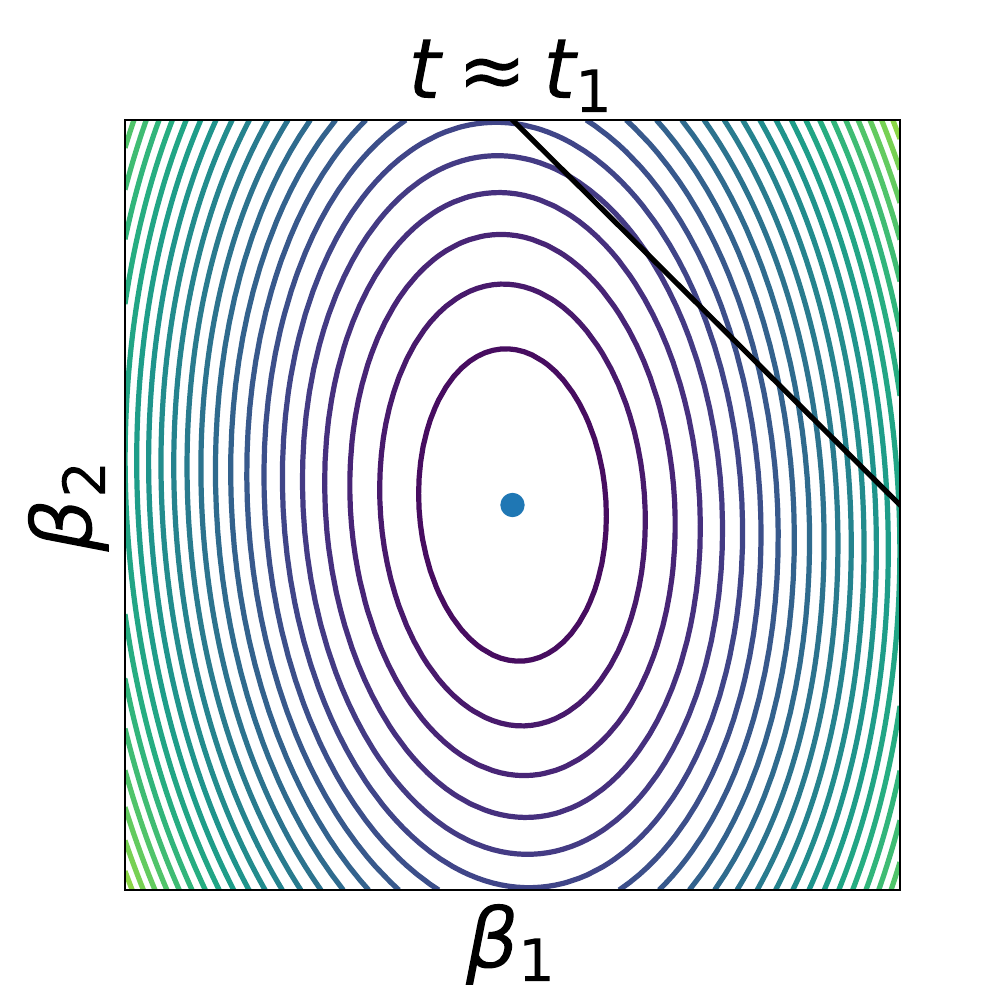}}
    \subfigure[Min $\ell_2$ norm solution]{\includegraphics[width=0.32\linewidth]{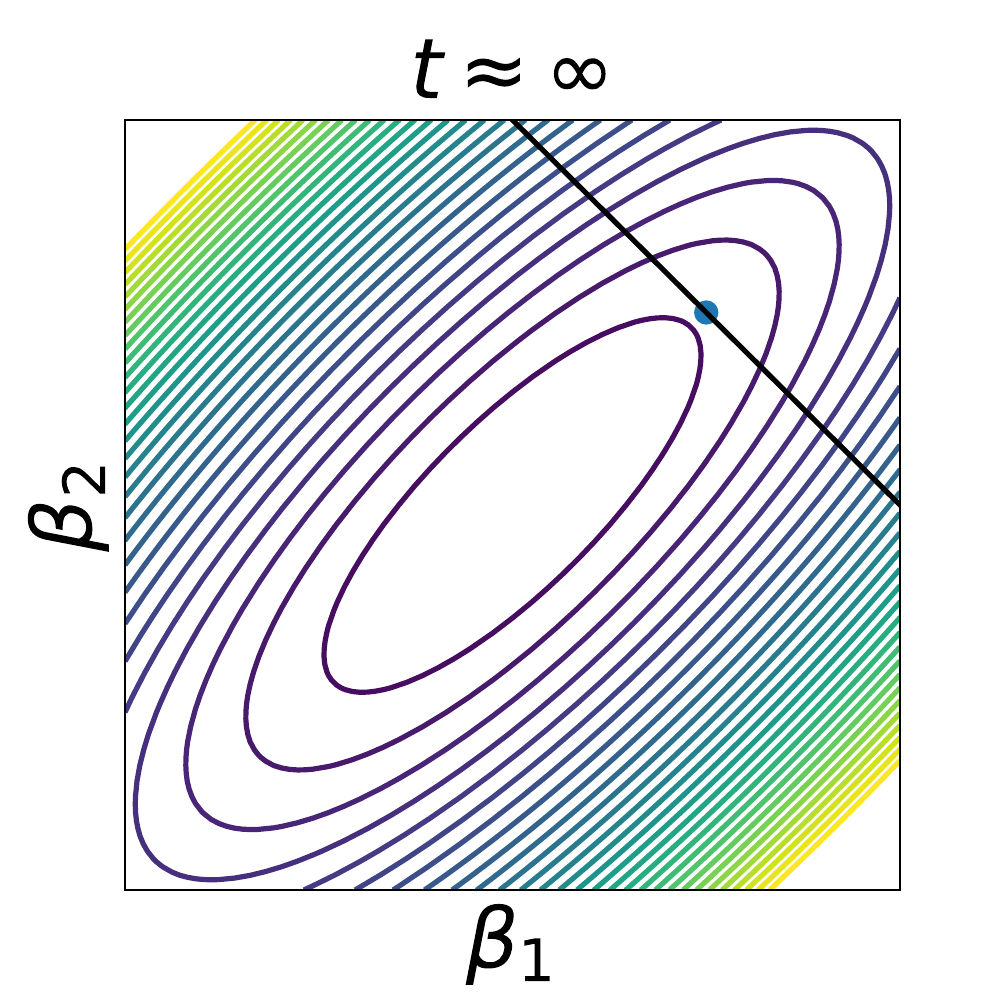}}
    \caption{Kernel evolution on anisotropic data consists of two alignment phases. (a) At initialization the level curves of $\bm\beta^\top \bm M^{-1} \bm\beta$ exhibit spherical symmetry. (b) After the initial phase I alignment, the matrix $\bm M$ exhibits a spike in the $\bm\Sigma \bm \beta$ direction. (c) At long times, the network function and kernel's spiked direction need to converge to the minimum $\ell_2$ norm solution as we explain in \ref{app:inductive_deep_linear}. This requires realignment of the kernel at late times, eliminating the preconditions for the silent alignment effect.  }
    \label{fig:anisotropic_visual}
\end{figure}

\subsection{Phase II: Whitened Data}\label{subsec:phase2}

Consider a two layer network $f = \bm a^\top \bm W \x$ where balance has been achieved $\bm W = u(t) \bm{\hat a} \bm\beta^\top$ and $\bm a(t) = u(t) \bm{\hat{a}}$. Once this balance condition is stable for fixed $\bm{\hat a}$, we can calculate the time derivative of $u(t)$
\begin{align}
    \frac{d}{dt} \bm a(t) = \dot{u}(t) \bm{\hat a} = u(t) \left[ s - u(t)^2 \right] \bm{\hat a}.
\end{align}
Letting $c(t) = u(t)^2$, we find that $\dot{c}(t) = 2 u(t)^2 \left[ s- u(t)^2  \right] = 2 c(t) \left[ s-c(t) \right]$, which is the two layer dynamics derived in \cite{Saxe14exactsolutions}. This dynamics has solution $c(t) = \frac{s e^{2st}}{e^{2st} -1 + s/c_0}$.

\subsection{Solutions to the full training dynamics of linear networks at small initialization}\label{subsec:solns_to_full_training}

By combining the analyses of the subsection \ref{subsec:phase1} with the exact solutions discussed in \ref{subsec:phase2} we can match both solutions to obtain formulas for $r(t)$ and $q(t)$ for the entire network's training path that are exact up to $O(\sigma^2)$ corrections. Up to $O(\sigma^2)$ we then have that
\begin{align}
    r(t) &= s \frac{2 \sinh(2 s t)}{(e^{2 s t } - 1) + 2s/q_0 },\\
    q(t) &= s \frac{2 \cosh(2 s t)}{(e^{2 s t } - 1) + 2s/q_0 }.
\end{align}
This yields that the initialization constant $q_0/2$ plays the effective role of $c_0$ in the Ganguli-Saxe solution for phase II. Equation \ref{eq:qr} yields the expected value of this initialization constant. We have empirically verified that these exact equations hold to high accuracy across a variety network sizes, initialization scales, and whitened datasets. We illustrate some of these in figure \ref{fig:qr}.

\begin{figure}
    \centering
    \subfigure[Loss Dynamics]{\includegraphics[width=0.3\linewidth]{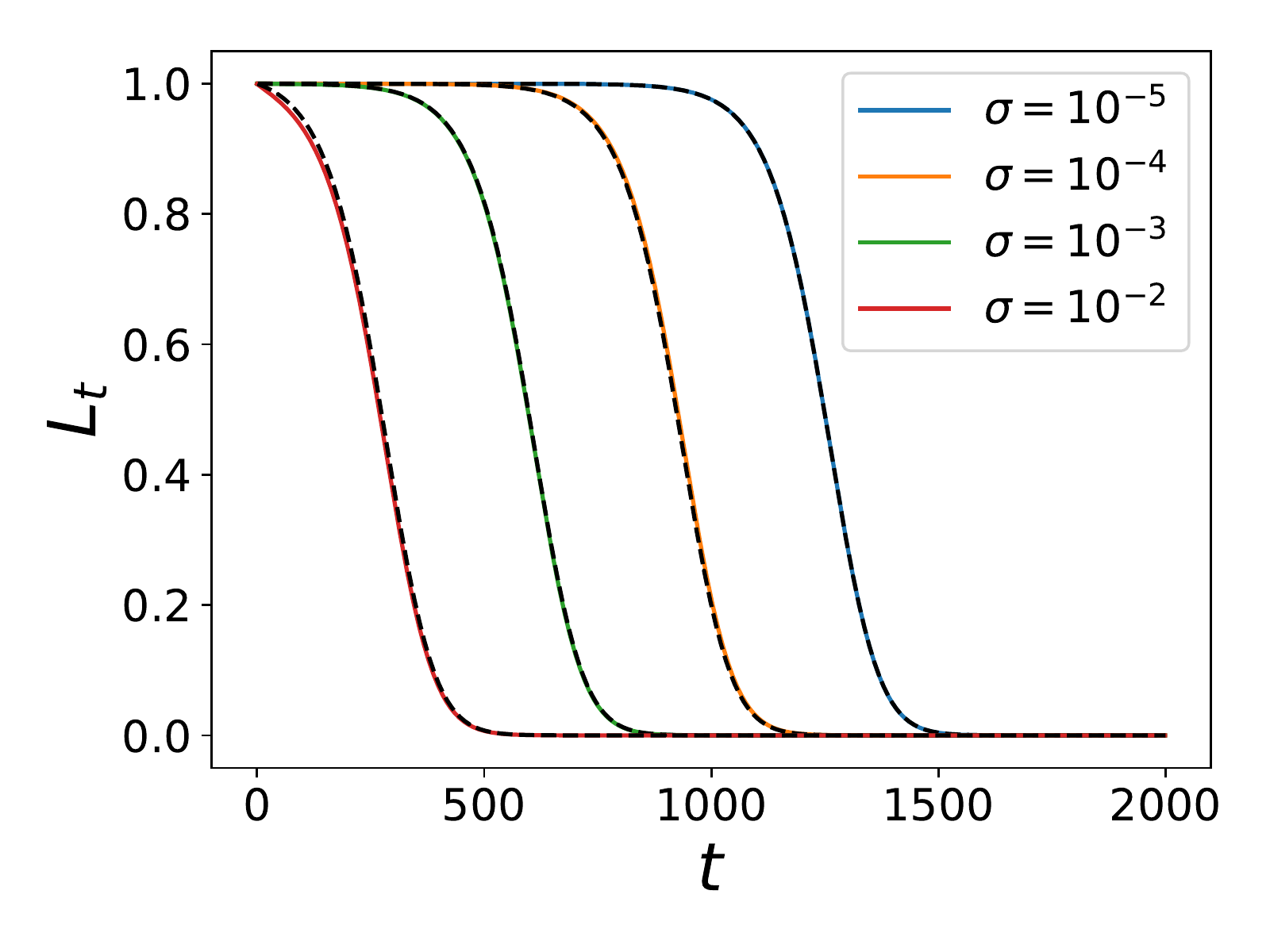}}
    \subfigure[Time to $L_t = 0.5$]{\includegraphics[width=0.3\linewidth]{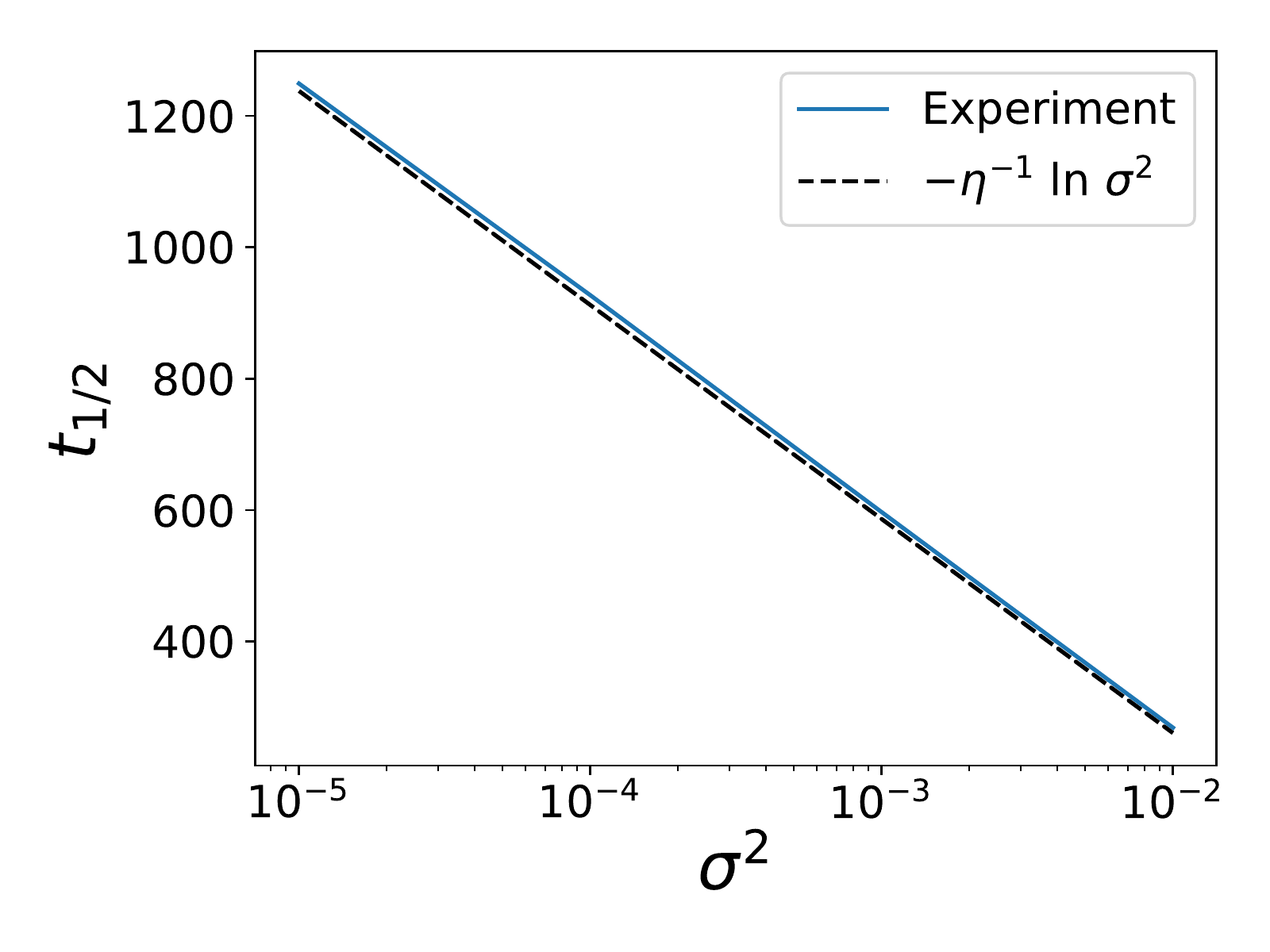}}
    \subfigure[Alignment $\sigma = 10^{-4}$.]{\includegraphics[width=0.3\linewidth]{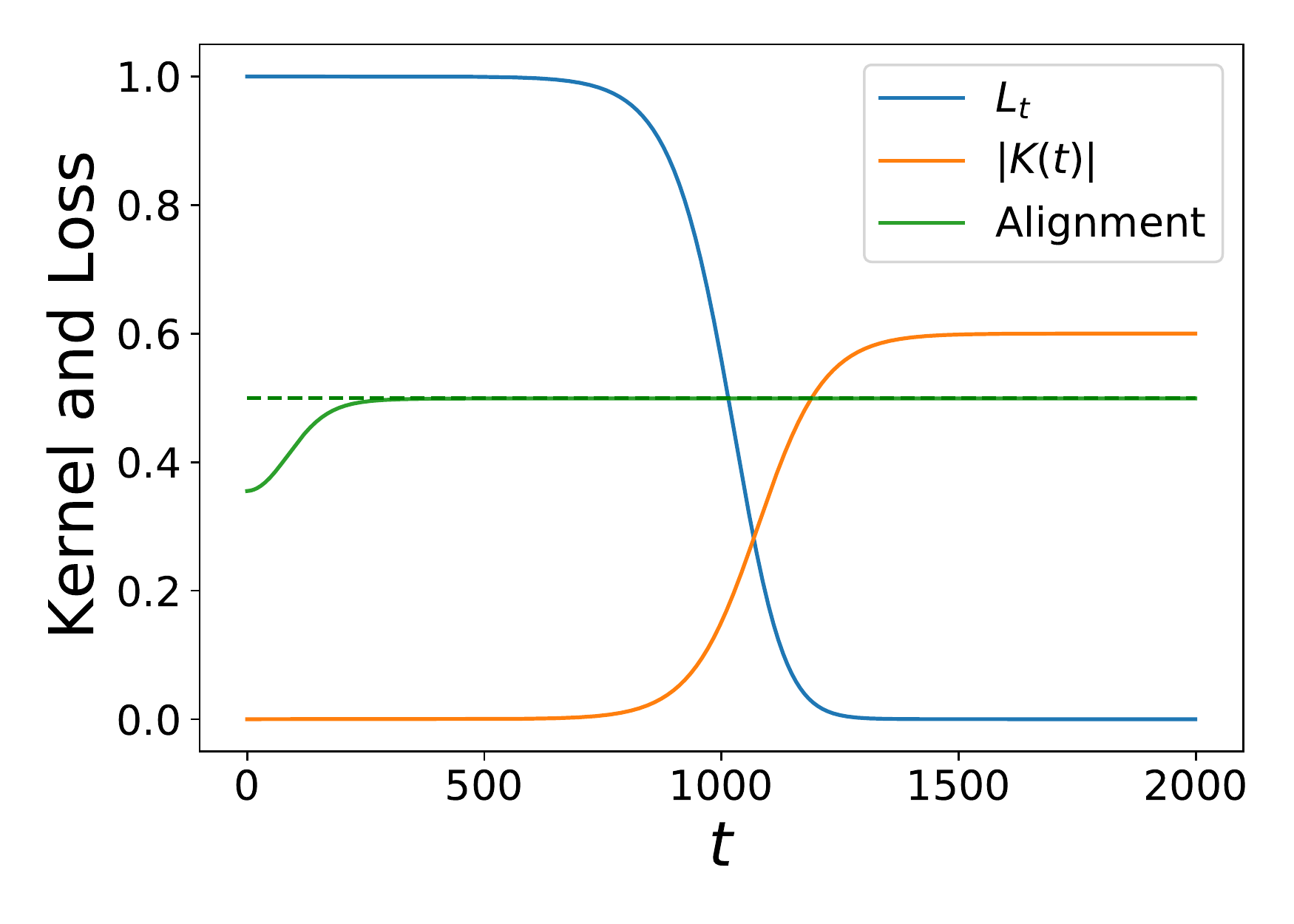}}
    
    \caption{Initialization scale controls the time spent in phase I, where the network escapes the saddle point near $\bm W, \bm a= \bm 0$ and the kernel aligns to the task. (a) The loss curves for two-layer linear networks with small initialization follow sigmoidal trajectories as in \cite{Saxe14exactsolutions} which transition from their maximum to minimum at a time which decreases with initialization scale. Theory is shown in black dashed lines. (b) Verification of the Phase I time $t_{1/2}$, measured as the time for the loss to reach one half its original value. This scales logarithmically with $\sigma^{2}$. (c) The alignment of the kernel eigenfunctions happens before the loss appreciably decreases for $\sigma = 10^{-4}$, evidenced by the kernel alignment curve. The analytically obtained maximum alignment value is overlayed in dashed green.}
    \label{fig:two_layer_theory_expt}
\end{figure}

\begin{figure}[h]
    \centering
    \subfigure[Synthetic]{\includegraphics[width=0.32\linewidth]{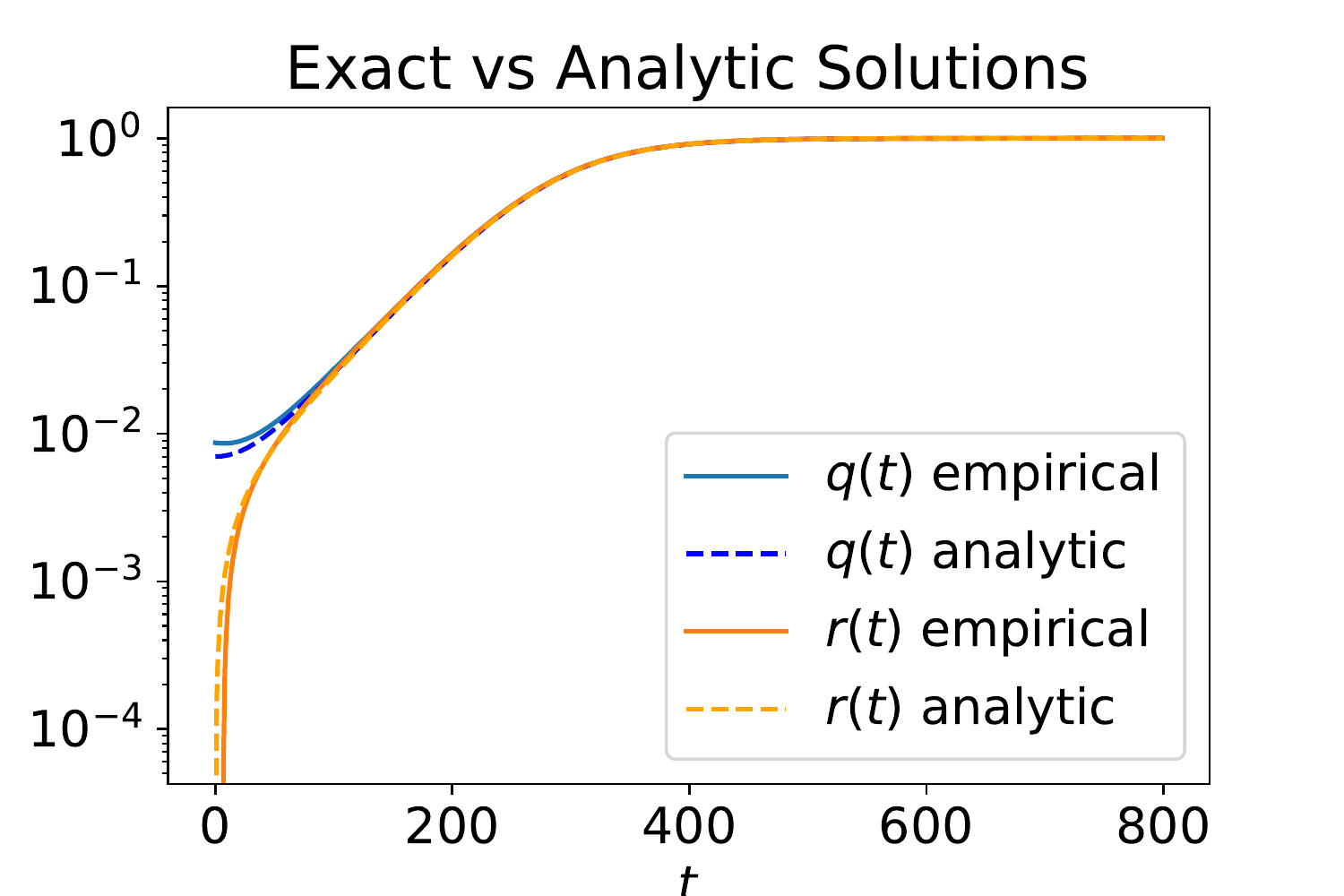}}
    \subfigure[Whitened MNIST]{\includegraphics[width=0.32\linewidth]{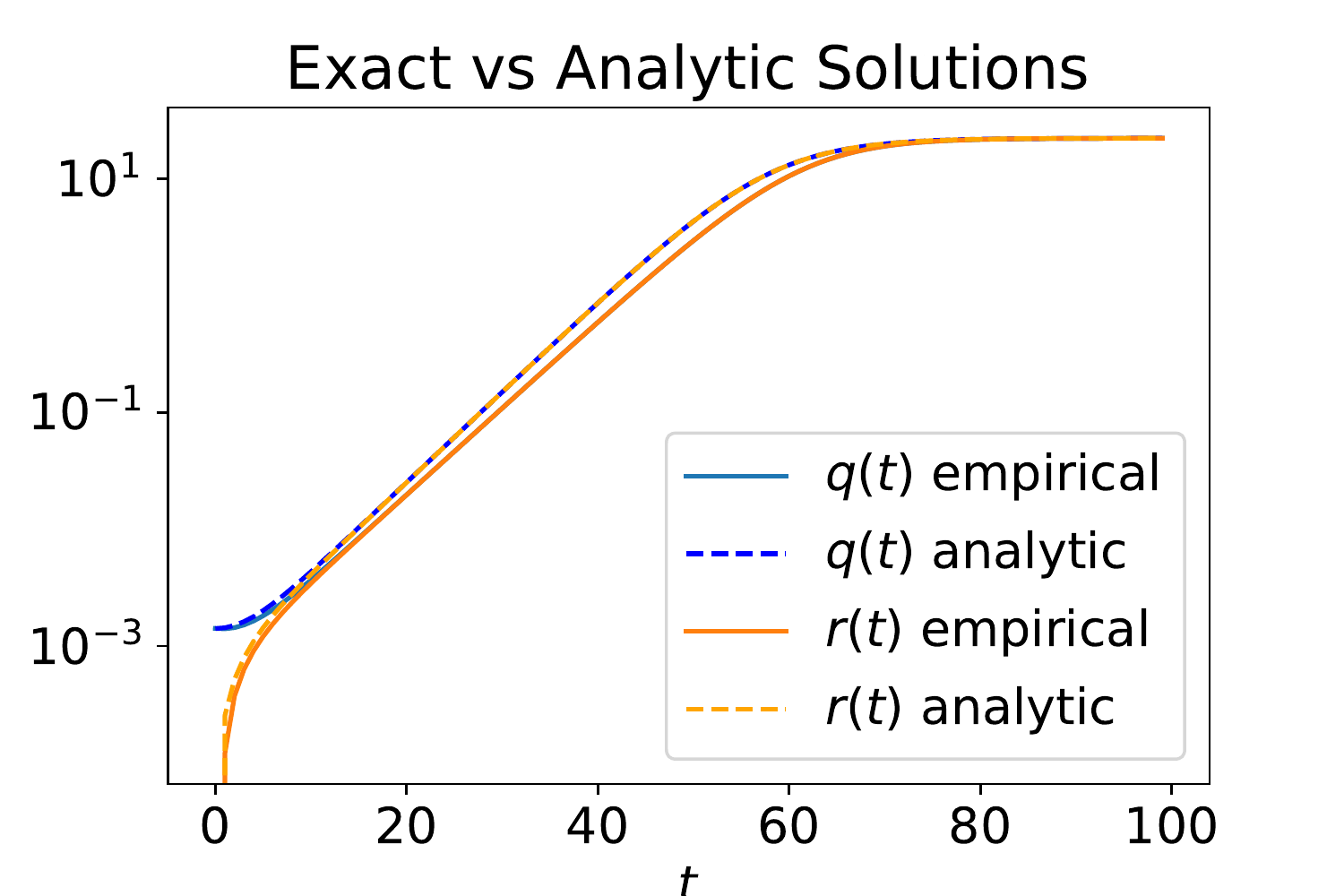}}
    \caption{Overlay of empirical and exact solutions for $q(t), r(t)$ in two layer linear feedforward networks for synthetic and two-class MNIST whitened datasets. (a) We take $D=25, N=10$.  (b) We take $D=784, N=100$.}
    \label{fig:qr}
\end{figure}

\section{Balancing of Weights in Deep Linear Networks}\label{app:other_loss_fns}

The balance condition discussed in the main text holds for deep linear trained with any loss function of the form $\mathcal{L} = \sum_\mu \ell(f^\mu,y^\mu)$ (not just MSE) since, as was shown by \cite{arora_cohen_linear_acc,Du2018AlgorithmicRI}
\begin{equation}
\begin{aligned}
    \frac{1}{\eta}\frac{d}{d t} \left(  \bm W^{\ell} \bm W^{\ell \top} \right)&= \sum_{\mu} \frac{\partial \ell}{\partial f^\mu} \left[ \frac{\partial f^\mu}{\partial \bm W^{\ell}} \bm W^{\ell \top} + \bm W^{\ell} \frac{\partial f^\mu}{\partial \bm W^{\ell}}^\top \right]
    \\
    &= \sum_{\mu} \frac{\partial \ell}{\partial f^\mu} \left[ \frac{\partial f^\mu}{\partial \bm W^{\ell+1}}^\top \bm W^{\ell+1} + \bm W^{\ell+1 \top} \frac{\partial f^\mu}{\partial \bm W^{\ell+1}} \right]\\
    &= \frac{1}{\eta} \frac{d}{d t} \left( \bm W^{\ell+1 \top} \bm W^{\ell+1} \right) .
\end{aligned}
\end{equation}

The second line follows from the first since the following quantities are identical
\begin{align}
    \frac{\partial f^\mu}{\partial \bm W^{\ell}} \bm W^{\ell \top} &= \bm W^{\ell+1 \top} ... \bm W^{L-1 \top}\bm w^L \left( \x^{\mu \top} \bm W^{1 \top} ... \bm W^{\ell-1 \top} \right) \bm W^{\ell \top},
    \\
    \bm W^{\ell+1 \top }\frac{\partial f^\mu}{\partial \bm W^{\ell+1}} &= \bm W^{\ell+1 \top} \left( \bm W^{\ell \top} ... \bm w^L \right) \x^{\mu\top} \bm W^{1\top} ... \bm W^{\ell \top}.
\end{align}
By inspection these two quantities are equal. Thus we have, for any loss function, a deep linear network has the following conservation laws
\begin{align}
    \frac{d}{dt} \left[ \bm W^{\ell} \bm W^{\ell\top} - \bm W^{\ell+1 \top} \bm W^{\ell+1} \right] = 0 .
\end{align}
We show in the next section that this balancing condition is very helpful in identifying the time evolution of the neural tangent kernel. In the case where the network has a single output, we can inductively prove that each layer's weight matrix is $\bm W^{\ell} = u(t) \bm r_{\ell+1}(t) \bm r_\ell(t)^\top$. We will assume this formula is true for layer $\ell+1$ and prove it must hold for layer $\ell$ since
\begin{align}
     \bm W^{\ell} \bm W^{\ell \top} = \bm W^{\ell + 1 \top} \bm W^{\ell+1} =  u(t)^2 \bm r_{\ell+1}(t) \bm r_{\ell+1}(t)^\top.
\end{align}
This implies that $\bm W^{\ell}$ is rank one with right singular vector equal to $\bm r_{\ell+1}(t)$. Thus, the decomposition for each layer the form $\bm W^{\ell} = u(t) \bm r_{\ell+1}(t) \bm r_{\ell}(t)^\top$ for some unit vector $\bm r_{\ell}(t)$. Similar analysis can be performed for the multi-class setting.

\section{NTK Formula for Deep Linear Networks}\label{app:ntk_formula_deep_linear}

The neural tangent kernel for a linear network $f(\x) = \bm w^{L \top} \bm W^{L-1} ... \bm W^1 \x$ is defined as an inner-product over all gradients
\begin{equation}
    \begin{aligned}
    K(\x,\x') &= \sum_{\ell=1}^L \left< \frac{\partial f(\x)}{\partial \bm W^{\ell }} , \frac{\partial f(\x')}{\partial \bm W^{\ell }} \right>
    \\
    &= \left| \prod_{\ell=2}^L \bm W^{\ell} \right|^2 \x \cdot \x' + \sum_{\ell=2}^L \left|\prod_{\ell' > \ell} \bm W^{\ell'} \right|^2  \x^\top \bm W^{1\top} ... \bm W^{\ell-1 \top} \bm W^{\ell-1} ... \bm W^{1} \x' .
\end{aligned}
\end{equation}
Under the balanced assumption that $\bm W^{\ell} = u(t) \bm r_{\ell+1}(t) \bm r_{\ell}(t)^\top + O(\sigma)$, expanding the kernel to leading order in $\sigma$ yields the following form:
\begin{align}
    K(\x,\x') = u(t)^{2L-2} \x^\top \left[ (L-1) \bm I + \bm r_1(t) \bm r_1(t)^\top \right] \bm x' + O(\sigma),
\end{align}

\begin{figure}[ht]
    \centering
    \includegraphics[width=0.32\linewidth]{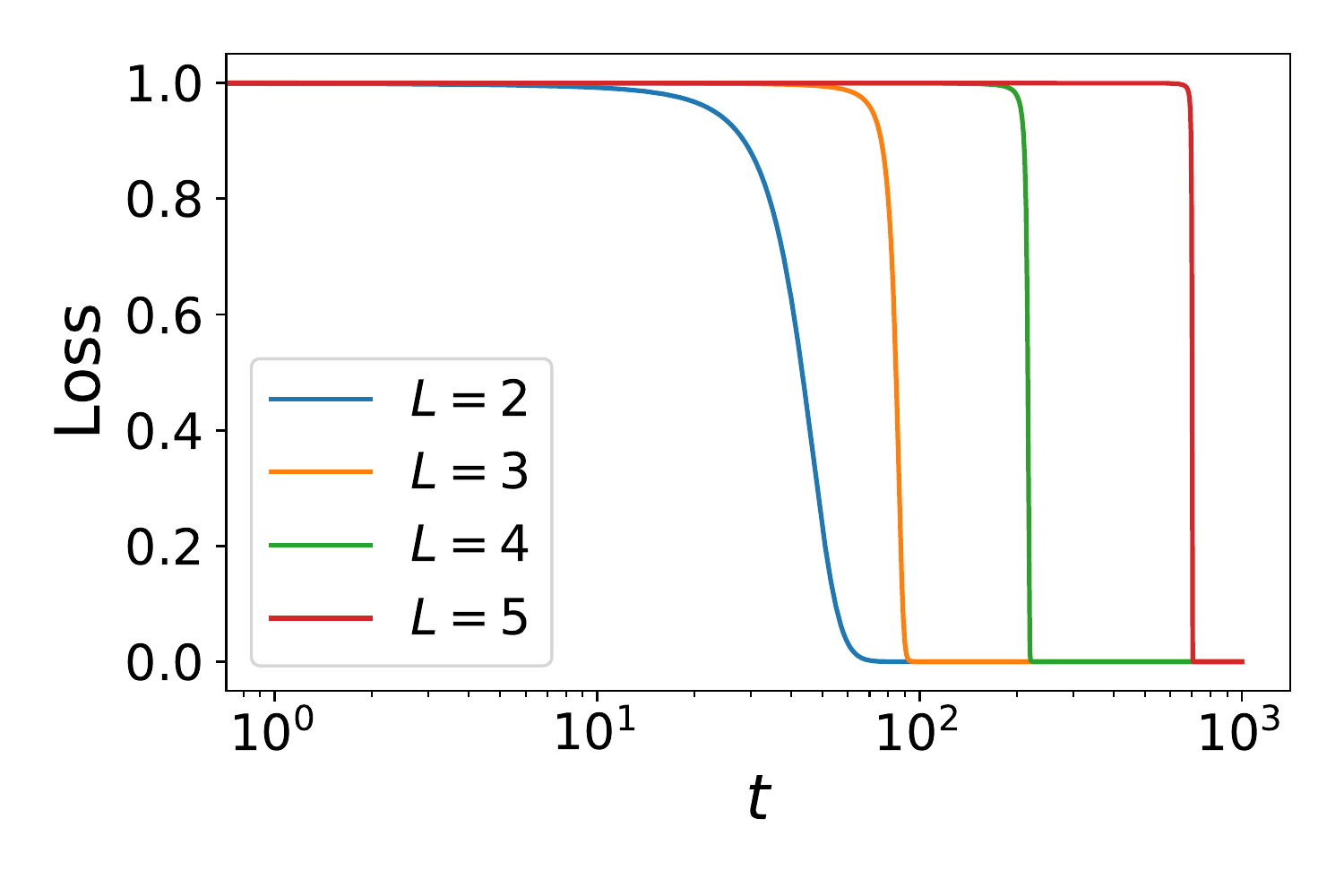}
    \includegraphics[width=0.32\linewidth]{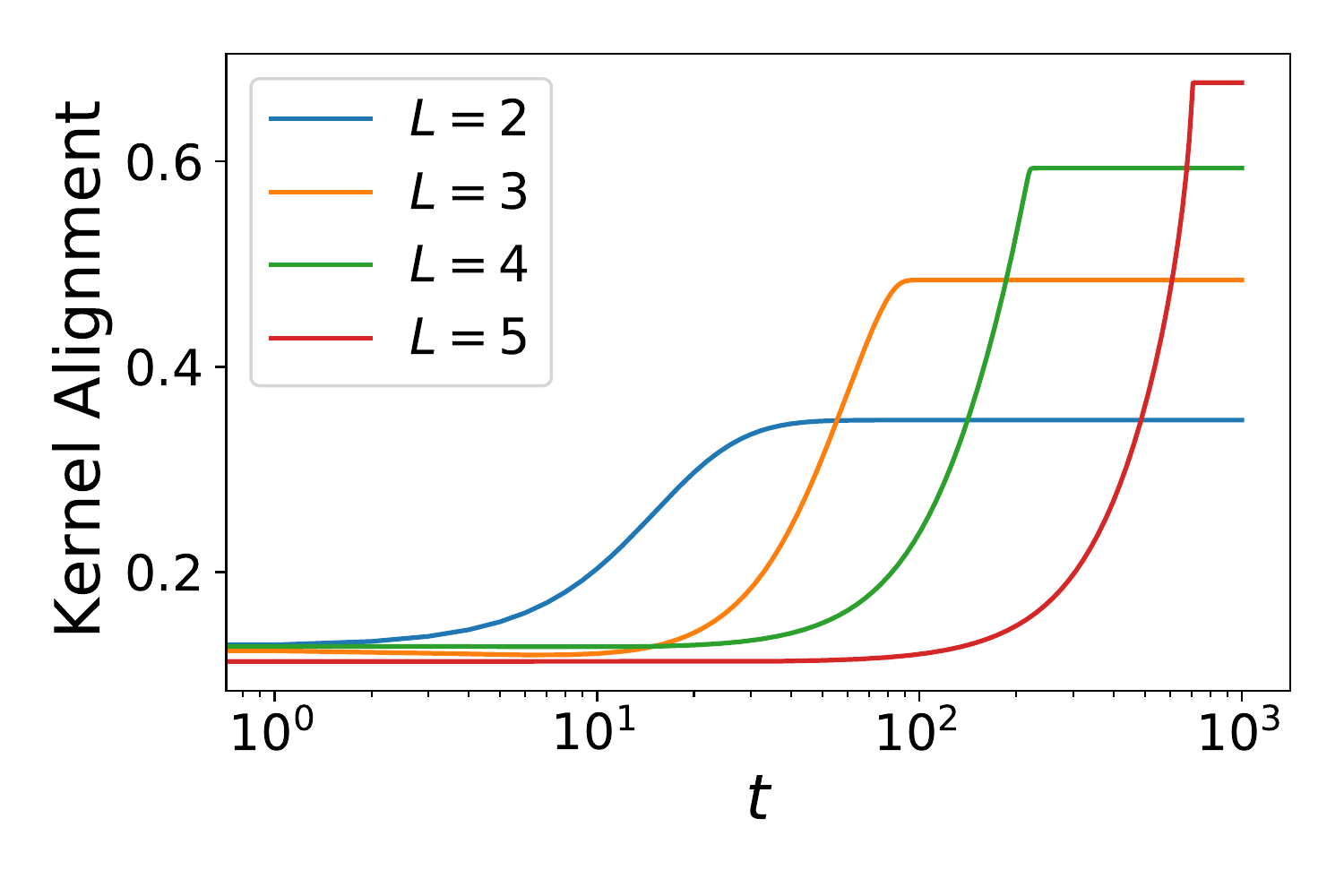}
    \includegraphics[width=0.32\linewidth]{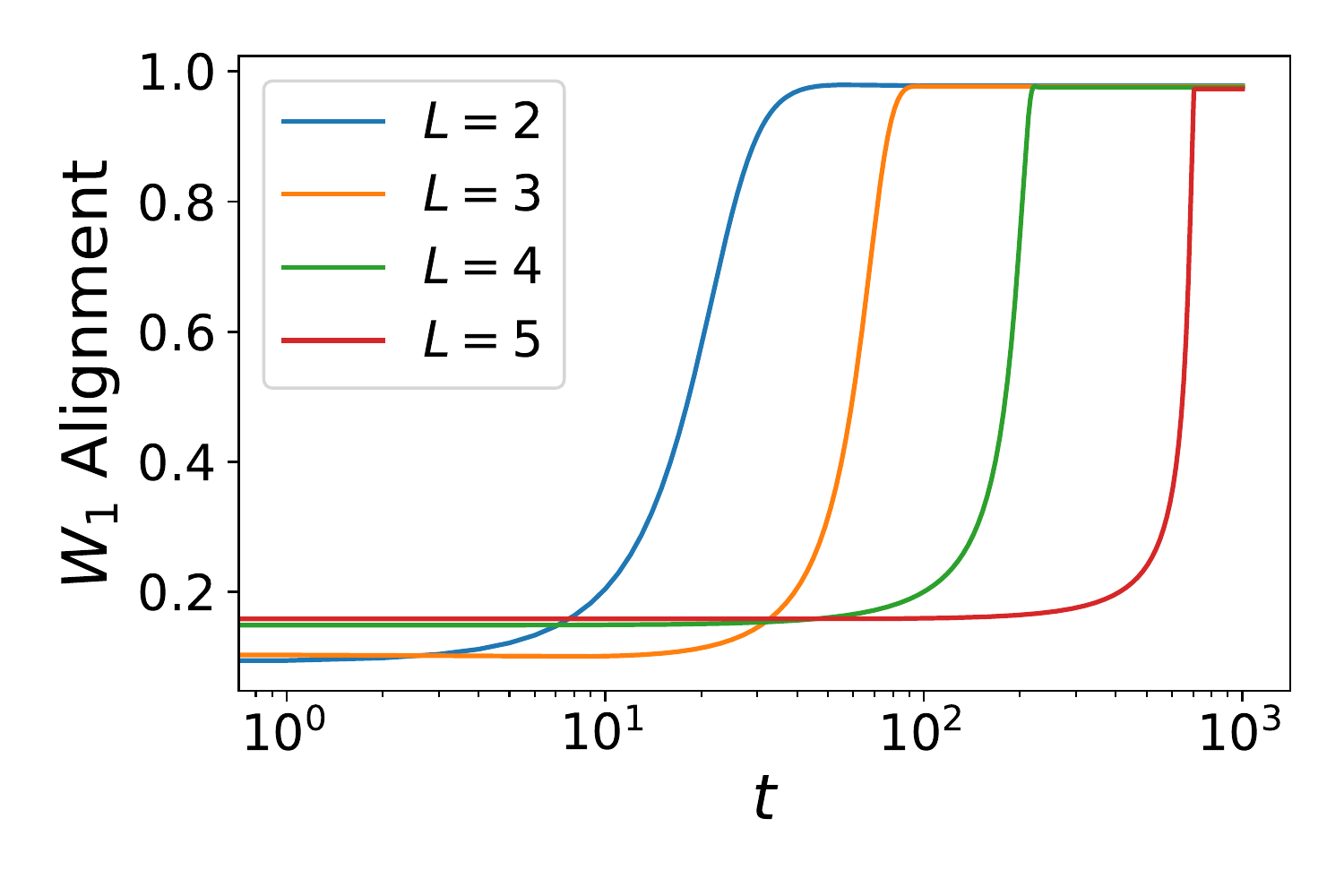}
    \caption{The depth dependence of loss dynamics, kernel alignment and the alignment of first layer weights in a linear network on synthetic whitened data in $D=30$ dimensions. (a) The loss reaches half its initial value after $t \sim \sigma^{-L + 2}$ steps for $L>2$. The decay rate of the loss becomes sharper with depth. (b) Final kernel alignment increases monotonically with depth $L$ and approaches $1$ as $L \to \infty$. (c) The alignment of the first layer weights $\bm W_1$ with the optimal direction $\bm\beta$ approaches $1$ for all models.  }
    \label{fig:my_label}
\end{figure}

\begin{figure}[ht]
    \centering
     \subfigure[Loss]{\includegraphics[width=0.32\linewidth]{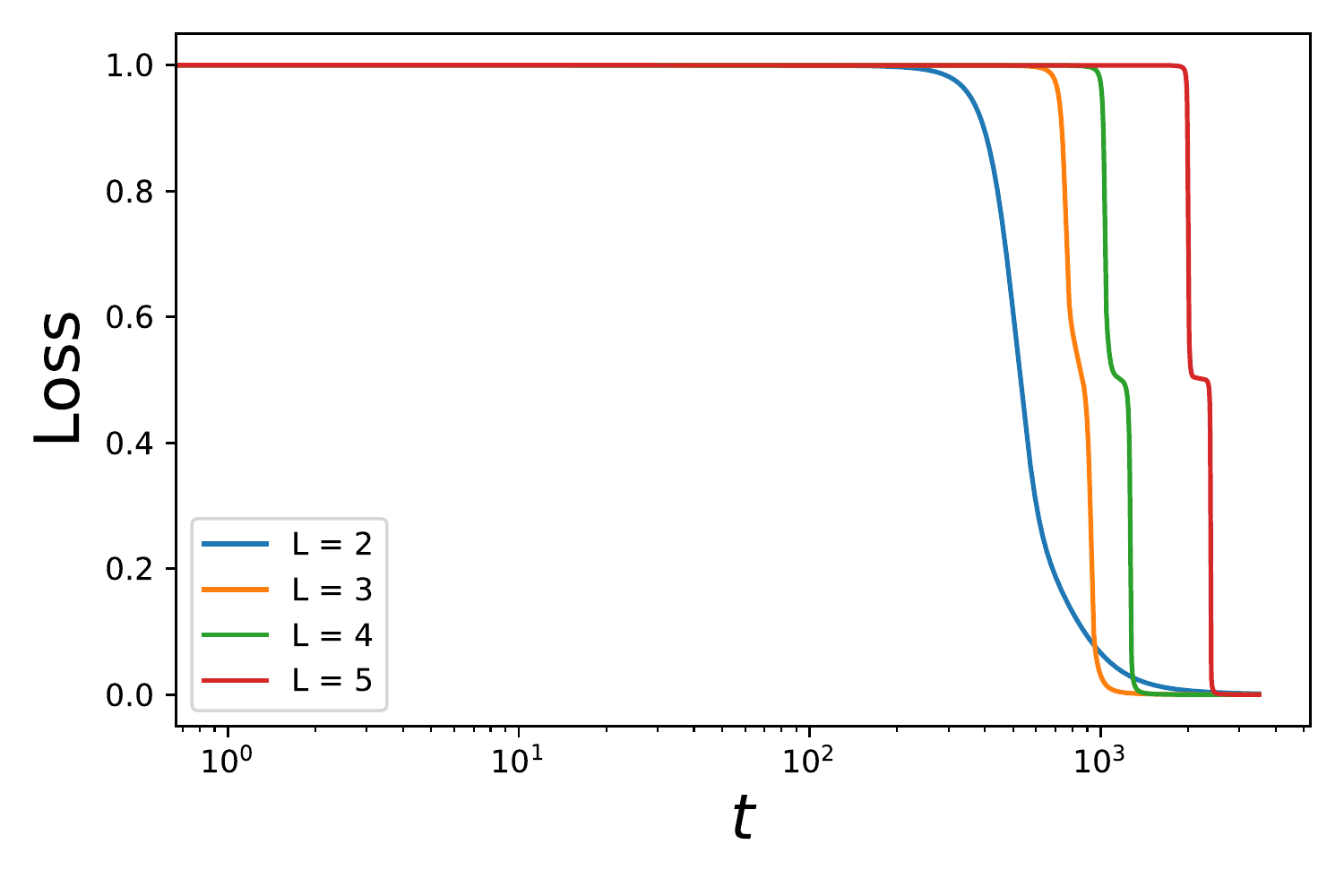}}
    \subfigure[Alignment]{\includegraphics[width=0.32\linewidth]{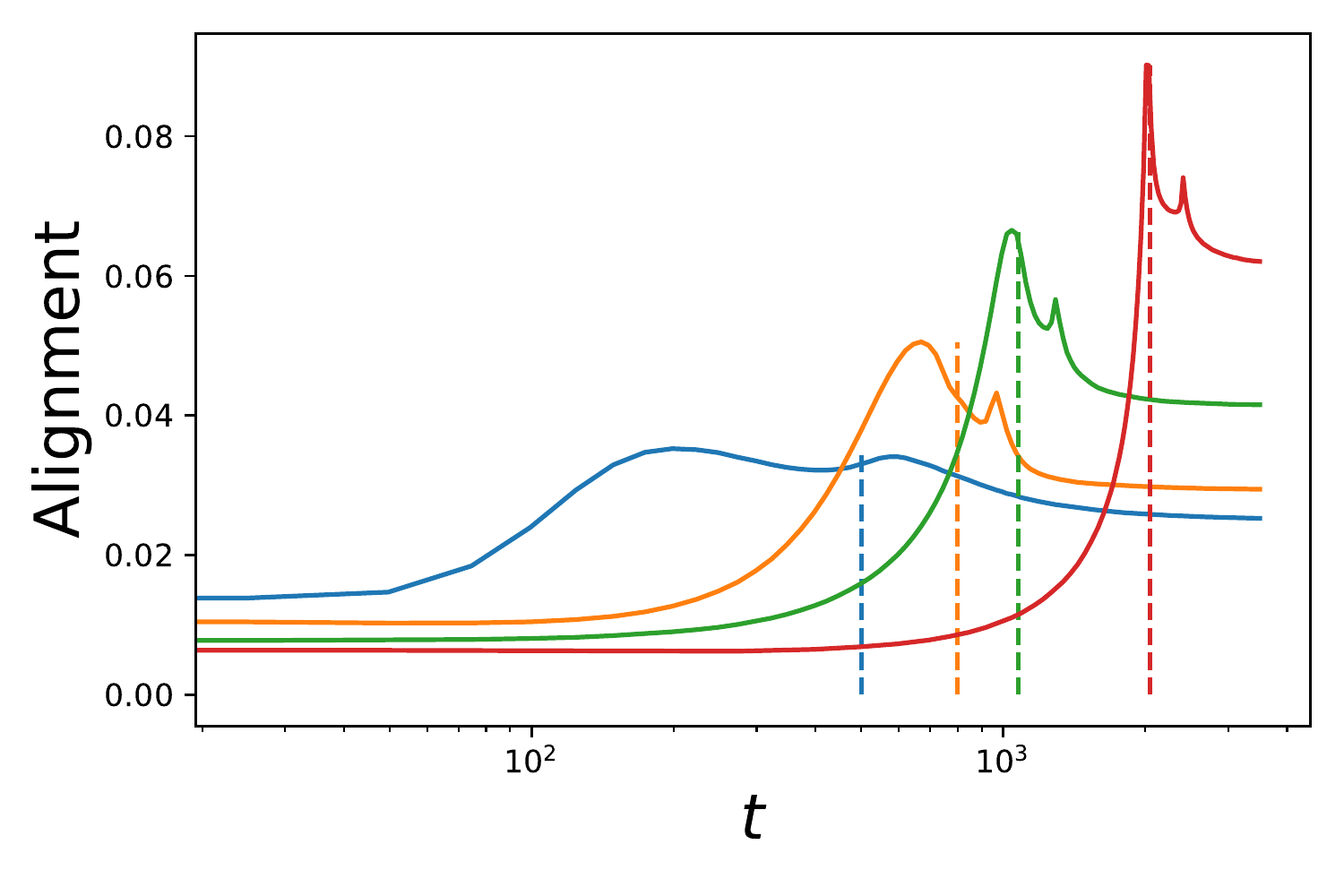}}
    \subfigure[Test Comparison]{\includegraphics[width=0.32\linewidth]{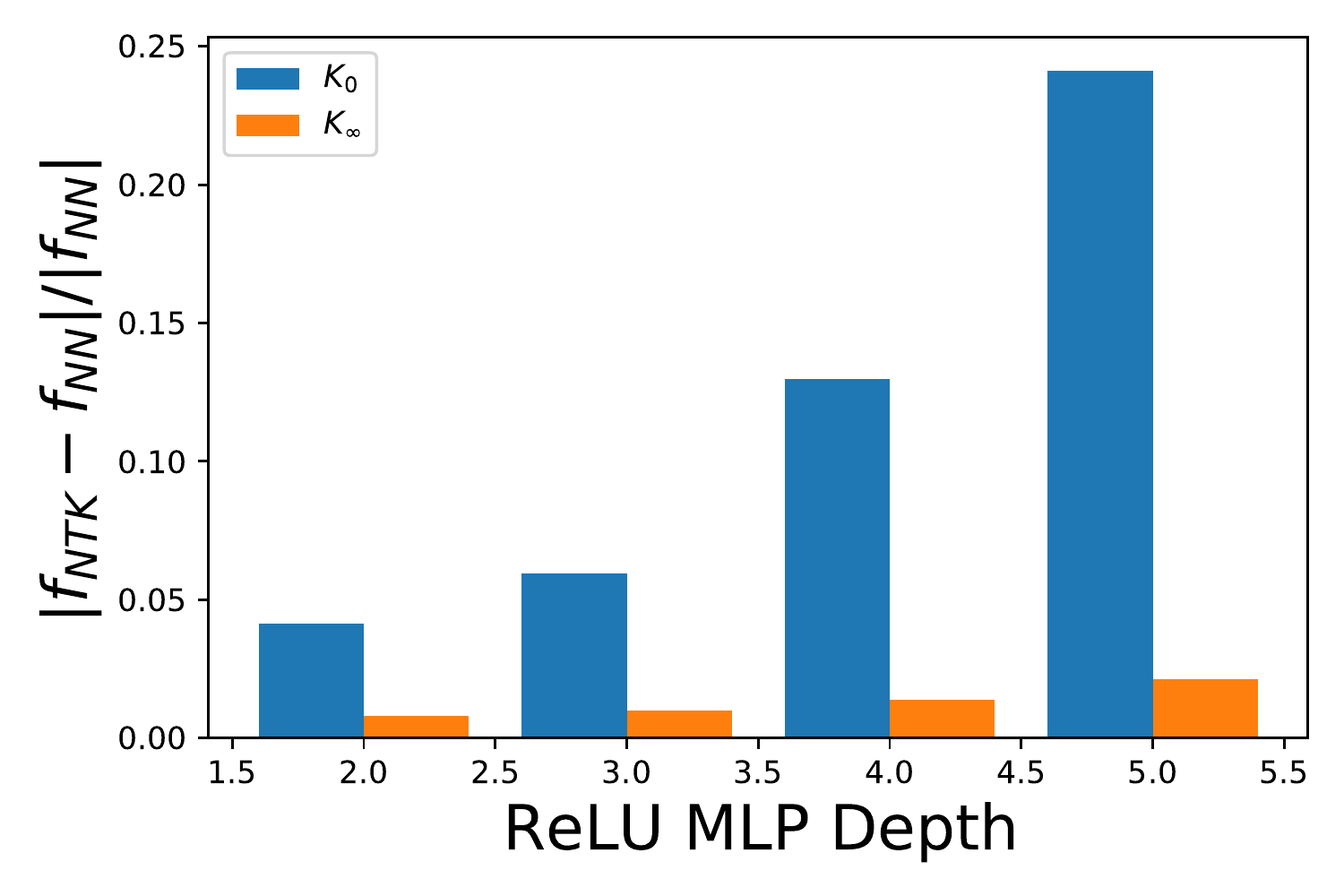}}
    \caption{ReLU networks across depths trained on two-class whitened CIFAR. The hidden widths are all of size 100. We use 3k train points and 7k test points. (a) The loss exhibits the same scaling as $\sigma^{-L+2}$ as in the linear setting. (b) Deep networks with nonlinearities are seen to undergo the silent alignment effect early on in training. The dashed lines indicate when the kernel has grown to 10\% of its final value. (c) The trained network outputs on test data match closely the kernel regression with the final learned kernel, but do not match regression the initial kernel.}
    \label{fig:two_layer_relu_mnist}
\end{figure}

\subsection{Deep Linear Network Dynamics Under Balance}\label{app:deep_linear_dynamics_ct}

In this section, we will consider the dynamics of the variable $u$ once the balance condition is satisfied. Let $\bm w_L = \bm{\hat w} u(t)$. Then the dynamics for $u(t)$ under the balancing assumption is
\begin{align}
    \frac{d}{dt} \bm w_L =  \bm{\hat w} \frac{d}{dt} u(t) = \left( \prod_{\ell=1}^{L-1} \bm W^{\ell} \right) ( s - u(t)^L )  \prod_{\ell=1}^{L-1} \bm W^{\ell} \bm\beta = u(t)^{2L-2} ( s - u(t)^L ) \bm{\hat w},
\end{align}
which implies the fact $\dot{u}(t) = u(t)^{L-1}(s-u(t)^L)$. Changing variables to $c(t) = u(t)^L$ we obtain 
\begin{align}
    \dot{c}(t) = u(t)^{L-1} \dot{u} = u(t)^{2L-2} (s-u(t)^L) = c(t)^{2-2/L} (s-c(t)).
\end{align}
When $c_0^L$ is initialized to a very small value compared to $s$ we can 
\begin{equation}
\begin{aligned}
    \frac{d}{dt} c(t) \sim c(t)^{2-2/L} s \ &\implies  \left[ c(t)^{-1+ 2/L} - c_0^{-1 + 2/L} \right] =  - \frac{L-2}{L} st
    \\
    \implies c(t) &= \left[ c_0^{- \frac{L-2}{L} } - \frac{\left(L-2\right)}{L} st  \right]^{- \frac{L}{L-2}}.
\end{aligned}
\end{equation}
This implies a timescale to learn of $t \sim s^{-1} \frac{L}{L-2} \sigma^{-L+2}$.

We can approximate the timescale for the first layer's singular vector $\bm r_1(t)$ to align to $\bm\beta$ as well. Let $\bm v$ be a vector orthogonal to $\bm\beta$. 
\begin{align}
    \frac{d}{dt} \frac{|\bm W^{(1)} \bm\beta|^2}{|\bm W^{(1)} \bm v|^2} = \frac{2 \bm\beta^\top \bm W^{(1) \top} ...\bm w_L}{|\bm W^{(1)} \bm v|^2} \sim O( \sigma^{L-2}).
\end{align}
This suggests that alignment in a deep network should also occur on a timescale of $t \sim \sigma^{-L + 2}$. While there is no strict separation of timescales in terms of the scaling of alignment and learning with $\sigma$, we find that alignment tends to precede a significant drop in the loss as we show in Figure \ref{fig:time_to_align_learn_deep}.


\subsection{Final NTK for Deep Linear Networks}\label{app:NTK_final_deep_linear}
Independent of the structure of the data, the first vector $\bm r_1 \to \bm\beta$ and the final NTK has the following form:
\begin{align}
    K(\x,\x') = s^{2L-2} \x^\top \left[ (L-1) \bm\beta \bm\beta^\top +  \bm I \right] \x' + O(\sigma).
\end{align}
This formula is merely a consequence of the balancing condition and convergence to the optimum $u(t)^L \to s$. We provide empirical support that final kernel alignment increases with depth in Figure \ref{fig:align_vs_depth}.

\subsection{Formula for the Intermediate NNGP Kernels}

Let $\bm\beta$ represent the unit vector pointing in the optimal direction for the scalar output case. To calculate this, all that needs to be assumed is that the network has converged to its optimum and that the weights satisfy the balance property so that $\bm W^{\ell} = u^* \bm r_{\ell+1}^* \bm r_{\ell}^*$. By the convergence assumption $u^* = s^{1/L}$ and $\bm r_1^* = \bm\beta$. We stress that this holds for arbitrary input correlation structure $\bm\Sigma$, the final NNGP kernel for layer $\ell$ can also be computed. Expanding the weights and collecting terms at leading order in $\sigma$ yields:
\begin{align}
    K_{\ell}(\x,\x') \sim s^{2\ell / L}  \x^\top \bm \beta \bm\beta^\top \x'  + O(\sigma).
\end{align}
Evaluating on the training set gives $\bm K_{\ell} = \left( \bm y \bm y^\top  \right)^{\ell/L} \in \mathbb{R}^{P \times P}$.

\begin{figure}[t]
    \centering
    \subfigure[Alignment Dynamics and Depth]{\includegraphics[width=0.4\linewidth]{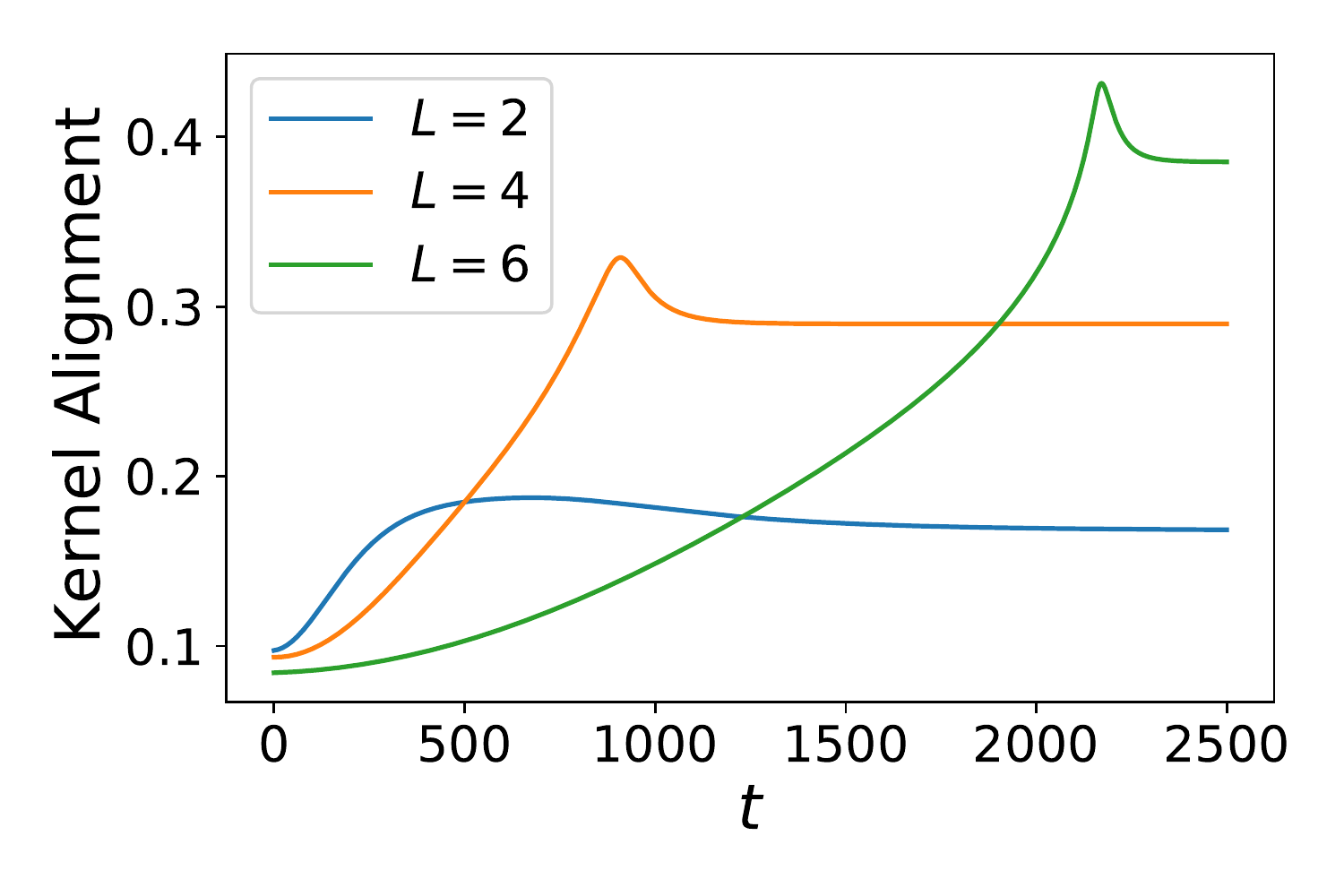}}
    \subfigure[Initial and Final NTKs]{\includegraphics[width=0.42\linewidth]{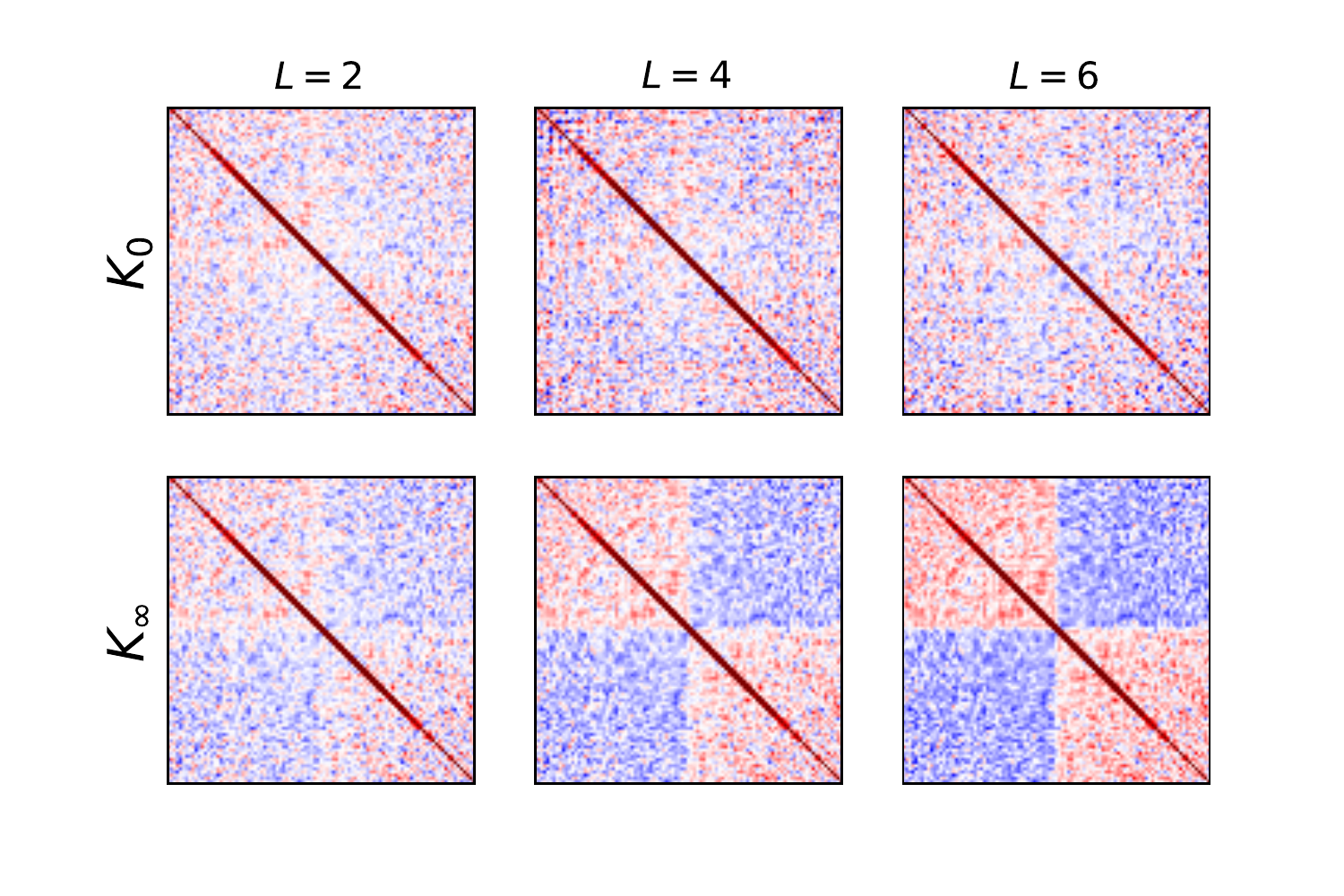}}
    \caption{The final NTK for a deep linear network aligns with the class specific directions with strength that depends on network depth $L$. This experiment is a partially whitened ($\gamma=0.25$ see Section \ref{sec:nonlinear_anisotropy}) subset of $100$ CIFAR-10 images. (a) The dynamics of alignment for different depth models. (b) The final gram matrix has the form $\bm K_{\infty} \propto (L-1) \bm y \bm y^\top + \bm K_0$, illustrating why alignment of the final NTK to class structure increases with depth $L$. }
    \label{fig:align_vs_depth}
\end{figure}

\section{Generalization Error in Transfer Learning Task}\label{app:transfer_theory}

The structure of the final kernel can alter the ability of the network to flexibly transfer to new tasks with a small amount of data. In this section, we examine how learned intermediate representations compares with the inductive bias of the original isotropic kernel $\x \cdot \x'$.  In particular, we study the offline generalization performance of kernels of the form
\begin{align}
    K(\x,\x'; A) = \x^\top \left[ A \bm\beta \bm\beta^\top + \bm I \right] \x'.
\end{align}

In Section \ref{sec:deep_linear} we showed that $A$ could be altered by changing the network depth. Concretely, our transfer learning problem consists of training a linear probe on one of the intermediate layers of the network (\cite{alain2016understanding, cohen2020separability}). This would also produce a kernel regression solution for kernel $K(\x,\x',A)$ with $A$ which depends on the chosen layer and the depth of the network. For simplicity, we  assume that the data are generated according to a simple Gaussian distribution $\x \sim \mathcal{N}(0,\bm I)$ and that the target values are generated with a linear function $y(\x) = \bm w \cdot \x$. We  decompose the new task vector $\bm w = \alpha \bm\beta + \sqrt{(1-\alpha^2)} \bm w_{\perp}$ where $\bm w_{\perp} \cdot \bm \beta = 0$. The expected generalization error after training with $P$ samples can be computed with methods from the physics of disordered systems \citep{bordelon_icml_learning_curve, Canatar2021SpectralBA, loureiro_lenka_feature_maps}.   For any $A>0$, the easiest transfer task is $\bm w = \bm\beta$ ($\alpha = 1$). If $\bm w = \bm \beta$, increasing the alignment $A$ strictly decreases the generalization error. This is illustrated in Figure \ref{fig:generalization}.

\begin{figure}[h]
    \centering
    \subfigure[$\alpha = 0.75$]{\includegraphics[width=0.32\linewidth]{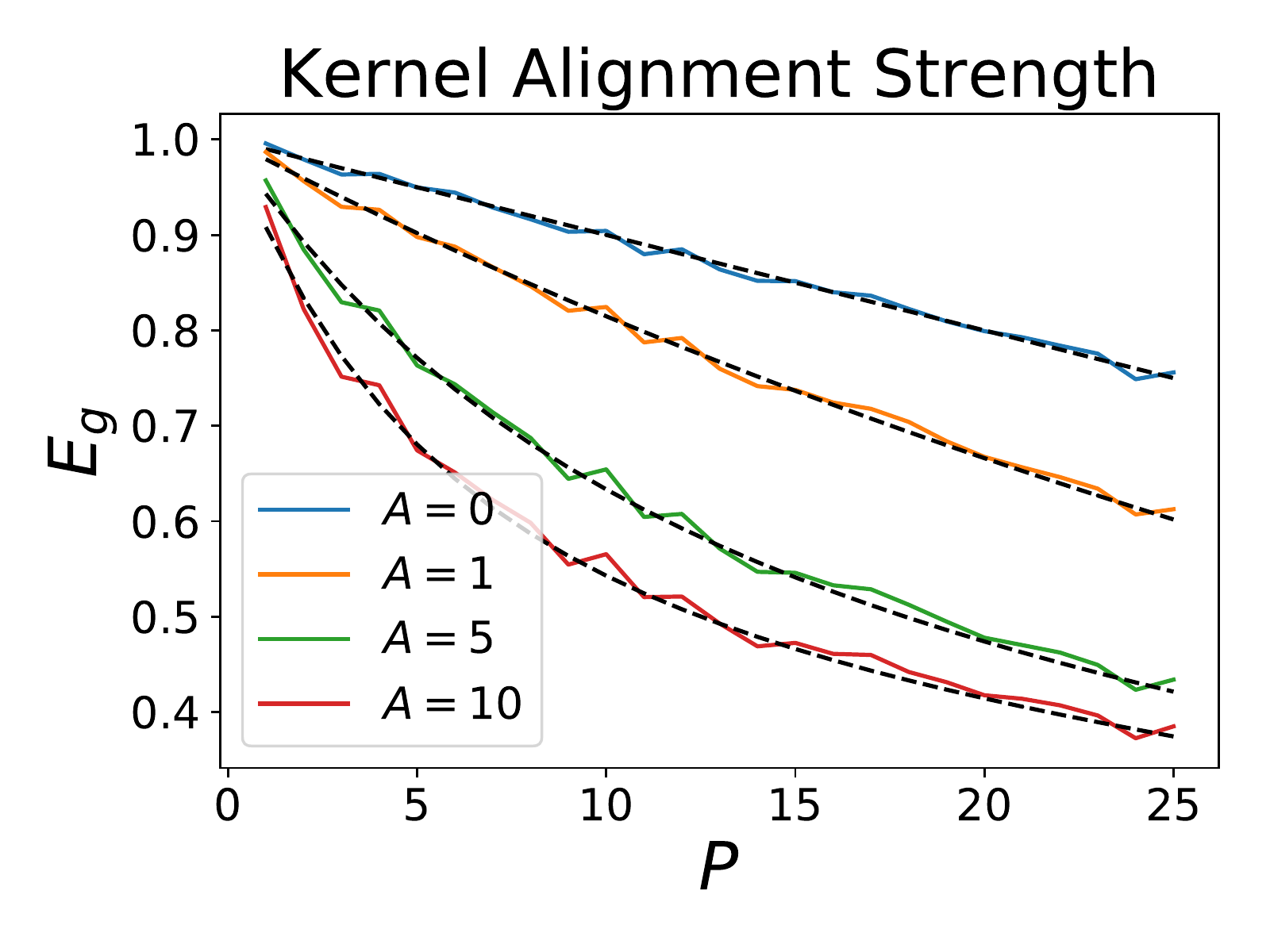}}
    \subfigure[$A=1$]{\includegraphics[width=0.32\linewidth]{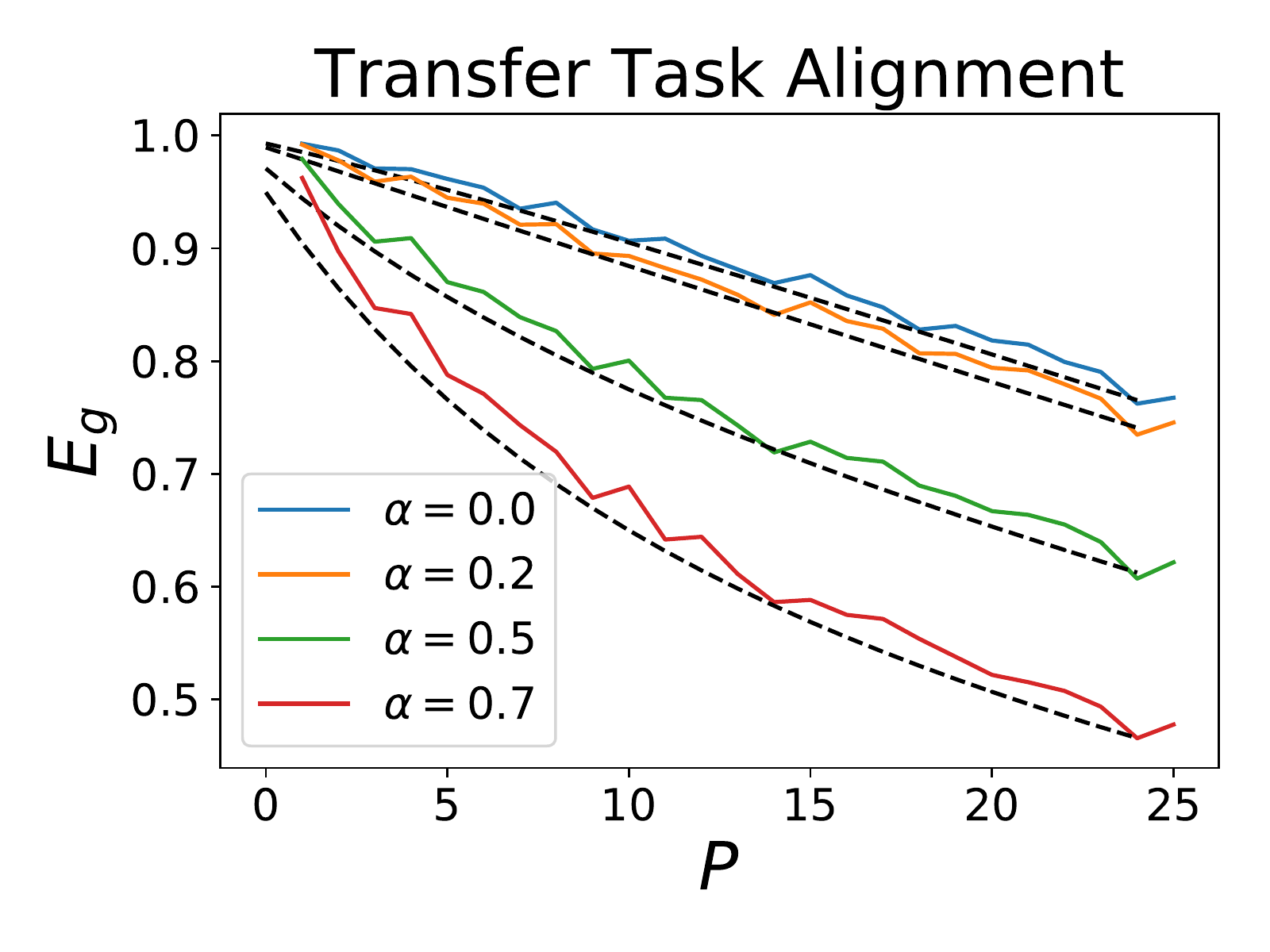}}
    \subfigure[Fixed $P$]{\includegraphics[width=0.32\linewidth]{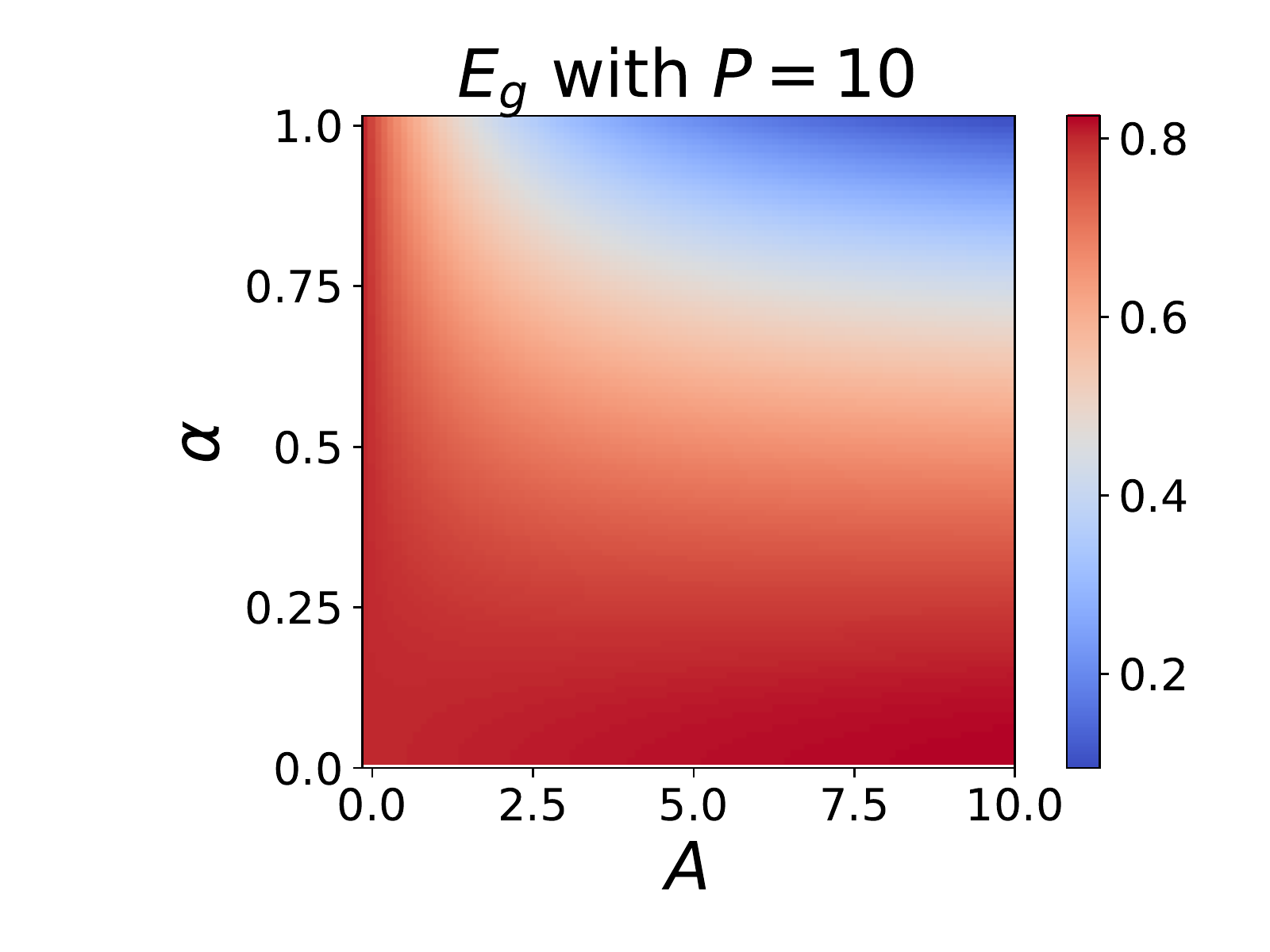}}
    \caption{The offline generalization error in a transfer task with the learned linear kernel $K = \x^\top \left[ A \bm \beta^\top \bm \beta^\top + \bm I \right] \x'$ and $y_{new} = \left[ \alpha \bm\beta + \sqrt{(1-\alpha^2)} \bm w_{\perp} \right] \cdot \x$. (a) For a new transfer task which is correlated with the learned function $\beta$, neural networks with large feature learning $A$ give lower generalization error at small sample sizes. (b) For fixed $A=1$, the tasks $\bm w$ which are strongly correlated with $\bm\beta$ are easier to learn during transfer. (c) The lowest generalization error in a transfer learning setup occurs when feature learning strength $A$ and correlation between tasks $\alpha$ are both large.}
    \label{fig:generalization}
\end{figure}

\subsection{Derivation of Learning Curves}
We will discuss the average case generalization error in the transfer learning setting. Prior work has shown that the generalization performance of kernel regression can be calculated through a kernel eigenvalue problem \citep{bordelon_icml_learning_curve,Canatar2021SpectralBA, loureiro_lenka_feature_maps}
\begin{align}
   \left< K(\x,\x') \phi_k(\x) \right>_{\x} = \lambda_k \phi_k(\x).
\end{align}
Once this integral eigenvalue problem is solved for eigenvalues $\lambda_k$ and orthonormal eigenfunctions $\phi_k$, the average case generalization error at $P$ training examples is
\begin{align}
    E_g = \frac{1}{1-\gamma} \sum_k \frac{\left< y(\x) \phi_k(\x) \right>^2}{(\lambda_k P + \kappa)^2} \ , \ \kappa = \lambda + \kappa \sum_{k} \frac{\lambda_k}{\lambda_k P + \kappa} \ , \ \gamma = P \sum_k \frac{\lambda_k^2}{(\lambda_k P + \kappa)^2}.
\end{align}

In our case, we are interested in the generalization performance of the linear kernel
\begin{align}
    K(\x,\x') = \x^\top \left[ \bm\beta \bm\beta^\top + \bm I \right] \x' .
\end{align}
Since $K$ is a linear kernel, its eigenfunctions should be linear functions $\phi_k(\x) = \bm\phi_k \cdot \x$. Assuming that the data distribution has identity covariance, we find
\begin{align}
    \left< \x^\top \left[ A \bm\beta \bm\beta^\top + \bm I \right] \x'  \bm\phi_k^\top \x  \right> = \bm\phi_k^\top \left[ A \bm\beta \bm\beta^\top + \bm I \right] \x' = \lambda_k \bm\phi_k^\top \x'.
\end{align}
This implies that the $\bm\phi_k$ vectors are eigenvectors of $\bm M$. The first eigenvector is $\bm\phi_1 = \bm \beta$ with eigenvalue $\lambda_1 = A + 1$. The other $D-1$ eigenvectors can be chosen as any frame in the $D-1$ dimensional subspace orthogonal to $\bm\beta$. Each of these $D-1$ eigenvectors has eigenvalue $\lambda_k = 1$. Using these results, and the fact that $\bm w = \alpha \bm \phi_1 + \sqrt{1-\alpha^2} \bm w_{\perp}$, we can calculate the expected generalization error.

\begin{equation}
\begin{aligned}
    E_g &= \frac{1}{1-\gamma} \left[ \frac{\alpha^2}{((1+A)P + \kappa)^2} + \frac{1-\alpha^2}{( P + \kappa )^2} \right] \nonumber
    \\
    \kappa &= \lambda + \kappa  \left[ \frac{1+A}{(1+A)P + \kappa} + \frac{D-1}{ P + \kappa} \right] \ ,  \ \gamma = P \left[ \frac{(1+A)^2}{((1+A)P + \kappa)^2} + \frac{D-1}{ (P + \kappa)^2} \right].
\end{aligned}
\end{equation}
By the result proven in \cite{Canatar2021SpectralBA}, the lowest possible error for fixed $A$ occurs by maximizing the fraction of variance along the large eigenvalue direction, corresponding to $\alpha = \pm 1$.

\section{Linear NTKs During GD Learn The Same Function}\label{app:all_ntk_learn_same}
In the overparameterized setting where $D > P$, all linear networks discussed in the Section \ref{sec:deep_linear} converge to the minimum norm interpolator when the data is whitened. Specifically, letting the learned neural network function be written as $f(x) = \bm{\hat\beta}^\top \x$, and the data matrix $\bm X \in \mathbb{R}^{D \times P}$ and target labels $\bm y\in\mathbb{R}^{P}$ represent the training data. The solution vector $\bm\beta$ solves the constrained optimization problem
\begin{align}
    \min_{\bm{\hat\beta}} ||\bm{\hat\beta}||^2 \ , \ \text{s.t.} \ \bm X^\top \bm{\hat\beta} = \bm y,
\end{align}
which is the kernel regression solution for the initial kernel $K_0(x,x') = \x \cdot \x'$. This is unsurprising due to a symmetry argument: when $\bm\beta_0 \approx 0$, the only privileged point on the affine space $\bm X^\top \bm \beta = \bm y$ is the point closest to the origin, which is precisely the solution above. Surprisingly, the final pseudo-inverse solution $s \bm\beta$ also minimizes the RKHS norms for any of the kernels throughout gradient descent. Up to an overall scale, the kernels throughout evolution take the form $K(\x,\x';t) = \bm x^\top \left[ A(t) \bm\beta \bm\beta^\top + \bm I \right] \bm x'$ (see Section \ref{sec:small_init}) which would induce the following kernel interpolation problems 
\begin{align}
    \min_{\bm{\hat \beta}} \ \bm{\hat \beta}^\top \left[ A(t) \bm\beta \bm\beta^\top + \bm I \right]^{-1} \bm{\hat \beta} \ , \ \text{s.t.} \ \bm X^\top \bm{\hat\beta} = \bm y.
\end{align}
The solution to this optimization problem is indeed the kernel regression solution with kernel $K(t)$ since the learned function takes the form $f(x) = \sum_{\mu} \alpha^\mu K(\x,\x^\mu,t)$ with $\bm\alpha = \bm K(t)^{-1} \bm y$. Using the Sherman-Morrison rule, we show that the solution to each of these problems $t \geq 0$ gives the same result, namely the pseudo-inverse solution. This can be seen from the following
\begin{align}
    \bm{\hat \beta}^\top \left[ A(t) \bm\beta \bm\beta^\top + \bm I \right]^{-1} \bm{\hat \beta} = |\bm{\hat \beta}|^2- \frac{A(t)}{1+A(t)} \left( \bm\beta \cdot \bm{\hat \beta} \right)^2.
\end{align}

Now, we let $\bm{\hat \beta} = s\bm \beta + \bm \beta_{\perp}$, where $\bm\beta \cdot \bm \beta_{\perp} = 0$. This is the general decomposition for the set of interpolators which have the property $\bm X^\top \left[ \bm \beta + \bm\beta_{\perp} \right] = \bm y$.  
\begin{align}
    \min_{\bm\beta_{\perp}} |\bm \beta_{\perp}|^2.
\end{align}
The solution is merely to set $\bm\beta_{\perp} = 0$. Thus the optimal solution is therefore the same for any finite value of $A$. However, the final RKHS norm of the learned function $\bm\beta^\top \left[A(t) \bm \beta \bm \beta^\top + \bm I \right]^{-1} \bm\beta$ decreases with time, indicating that the kernel becomes more aligned with the pseudo-inverse direction as $A$ increases.

\subsection{Deep Linear Networks From Small Initialization Learn Pseudo-Inverse}\label{app:inductive_deep_linear}

In this subsection of the appendix, we will use our theoretical technology for balanced linear networks to demonstrate the universal learned function for \textit{any data}, not just whitened input, providing an alternative derivation to the result proven in Theorem 7 of \cite{yun2020unifying}. This analysis is performed in the $\sigma \to 0$ limit, where $\bm W_{\ell} = u(t) \bm r_{\ell+1}(t) \bm r_{\ell}(t)^\top$ as we showed in Section \ref{app:other_loss_fns}. Under this condition the learned function $f(\x) = \bm{\hat \beta} \cdot \x$ is defined through weights $\bm{\hat{\beta}} = \bm W_1^\top \W_2^\top ... \w_L = u(t)^L \bm r_1(t)$. We see that the direction of the learned function is controlled entirely by $\bm r_1(t)$. It suffices to prove that $\bm r_1(t) \in \text{span}\{\x_1,...,\x_P\}$ for all $t$ to show that the network learns the pseudo-inverse solution $\bm \beta^* = \bm X (\bm X^\top \bm X)^{-1} \bm y$, where $\bm X \in \mathbb{R}^{D \times P}$ and $\bm y \in \mathbb{R}^{P}$ are the training data and targets respectively.  Note that by gradient descent, we have
\begin{align}
    \frac{d}{dt} \bm W_1(t) = \bm W_2^\top ... \w_L  \sum_\mu  (y_\mu - f_\mu(t)) \x_\mu^\top = u(t)^{L-1} \bm r_2(t) \sum_\mu  (y_\mu - f_\mu(t)) \x_\mu^\top.
\end{align}
From the balance condition $\bm W_1(t) = u(t) \bm r_2(t) \bm r_1(t)^\top$, we also have
\begin{align}
    \frac{d}{dt} \bm W_1(t) = \left[ \dot{u}(t) \bm r_2(t) + u(t) \dot{\bm r_2}(t) \right] \bm r_1(t)^\top + u(t) \bm r_2(t) \dot{\bm r_1}(t)^\top.
\end{align}

Equating the two above expressions for $\frac{d}{dt} \bm W_1(t)$ and taking an inner product with $\bm r_2(t)$ from the left gives the following
\begin{align}
    u(t) \dot{ \bm r}_1(t) = u(t)^{L-1} \sum_\mu (y_\mu - f_\mu(t) ) \x^\mu - \left[ \dot{u}(t) + u(t)  \bm r_2(t) \cdot \dot{\bm r}_2(t) \right] \bm r_1(t).
\end{align}

Thus, if $\bm r_1(t) \in \text{span}\{ \x_1,...,\x_P \}$ then $\dot{\bm r}_1(t) \in \text{span}\{ \x_1,...,\x_P \}$ so that the full dynamics of $\bm r_1(t)$ lie in the subspace spanned by the training data. At initialization, we have $\dot{\W}_1(t) \propto \bm z(t) \sum_\mu \bm  y^\mu x_\mu^\top + O(\sigma^3)$ so the initial $\bm r_1$ vector will indeed align with the span of the training data in the $\sigma \to 0$ limit.

By the fact that $\bm{\hat \beta} = u(t)^L \bm r_1(t)$, the learned linear coefficients $\hat{\bm{\beta}}$ must also be in $\text{span}\{ \x_1,...,\x_P \}$ so $\bm{\hat \beta} = \sum_\mu \alpha_\mu \x_\mu$.  These must also interpolate the data provided $D \geq P$, giving the following condition 
\begin{align}
    \bm{\hat \beta} \cdot \x^\nu = \sum_{\mu} \x_{\nu} \cdot \x_\mu \alpha_\mu = y_\nu \implies \bm\alpha = (\bm X^\top \bm X)^{-1} \bm y \implies \bm{\hat \beta} = \bm X (\bm X^\top \bm X)^{-1} \bm y.
\end{align}
This is exactly the minimum $\ell_2$ norm interpolating solution which solves 
\begin{align}
    \min_{\bm{\hat \beta}} |\bm{\hat \beta}|_2^2 \ , \ \text{s.t.} \ , \ \bm X^\top \bm{\hat \beta } = \bm y.
\end{align}

While anisotropy of the data makes no impact on what function is ultimately learned in the linear network case, the anisotropy can have a signficant influence on whether the preconditions for silent alignment are satisfied in a nonlinear network, which can prevent the final function from being a NTK regressor with final NTK.

\section{Final Kernel in Multi-Class Networks}\label{app:multi-class}

For a network with $C$ output channel, balancing and alignment guarantee that the configuration of the network is orthogonal and balanced as in the setting of \cite{Saxe14exactsolutions}. One can then integrate each mode separately to obtain the final kernel as
\begin{equation}
\begin{aligned}
    \bm W^{\ell} &=  \sum_{\alpha=1}^C u_{\alpha}(t) \bm r_{\ell+1}^{\alpha}(t) \bm r_{\ell}^{\alpha}(t)^\top \ , \ K_{c,c'}(\x,\x') = \x^\top \bm M_{c,c'} \x',
    \\
    M_{c,c'} &= \delta_{c,c'} \sum_{\alpha} u_{\alpha}^{2(L-1)}(t) \bm r_1^{\alpha} \bm r_1^{\alpha}  +  \sum_{\ell=1}^{L-1} \sum_{\alpha} u_{\alpha}^{2(L-\ell)}(t) \bm e_c^\top \bm r_L^{\alpha} \bm r_L^{\alpha \top} \bm e_{c'}  \sum_{\beta} u_{\beta}^{2(\ell-1)}(t) \bm r_1^{\beta} \bm r_1^{\beta \top},
\end{aligned}
\end{equation}
where the Cartesian unit vectors $\bm e_c \in \mathbb{R}^{C}$ are one-hot on class output $c$. This shows that the contributions to the kernel depend on how well the class output channels align with the unit vectors $\bm r_L^{\alpha}$. Further, the singular values $u_{\alpha}$ can evolve at different timescales depending on the structure of the data. 

Specifically, for a depth $L$ network, both the the alignment time $t_\alpha^{(L)}$ and the time to learn a given singular value $s_\alpha$ scale as $s_\alpha^{-1} \sigma^{2-L}$, as shown in appendix E.1. The differences in alignment times $\Delta t_{\alpha \beta} := t_{\alpha}^{(L)} - t_{\beta}^{(L)}$ for modes $s_\alpha, s_\beta$ therefores scales as $\Delta t_{\alpha \beta}^{(L)} = \sigma^{2-L} \Delta t_{\alpha \beta}^{(2)}$.

\begin{figure}[ht]
    \centering
    \subfigure[Linear depth 2]{\includegraphics[width=0.32\linewidth]{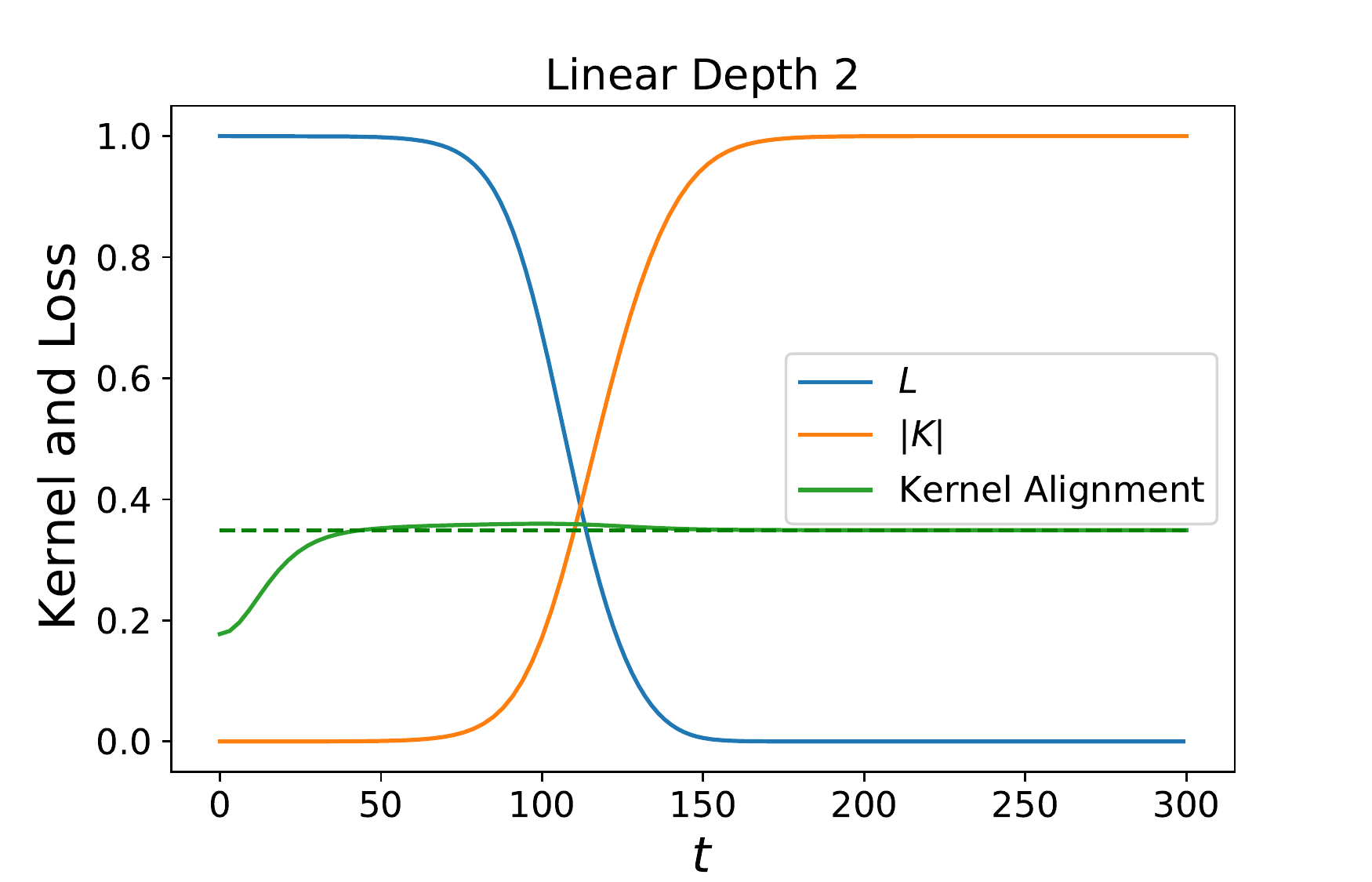}}
    \subfigure[Linear depth 4]{\includegraphics[width=0.32\linewidth]{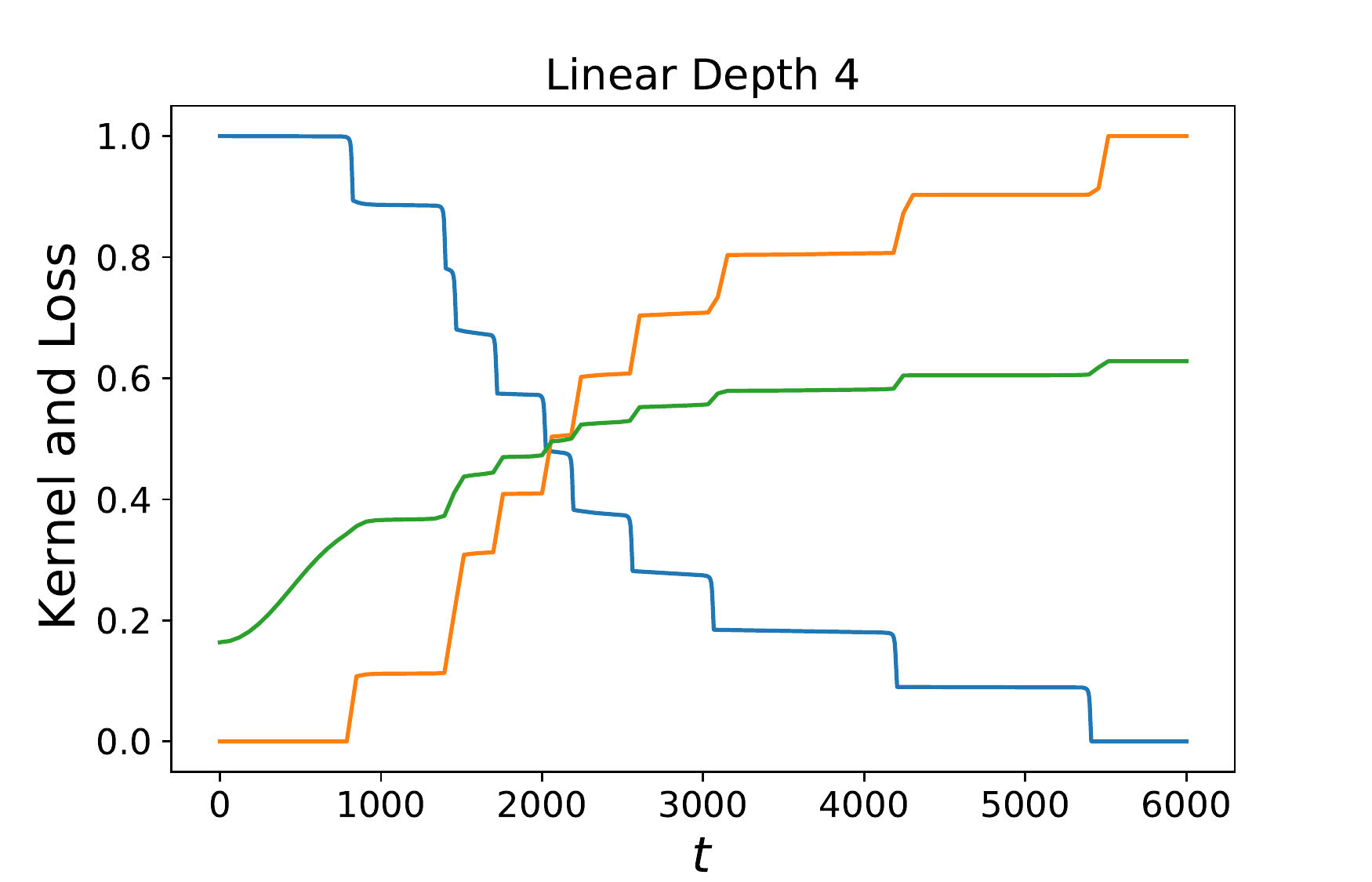}}
    \subfigure[Tanh depth 3]{\includegraphics[width=0.32\linewidth]{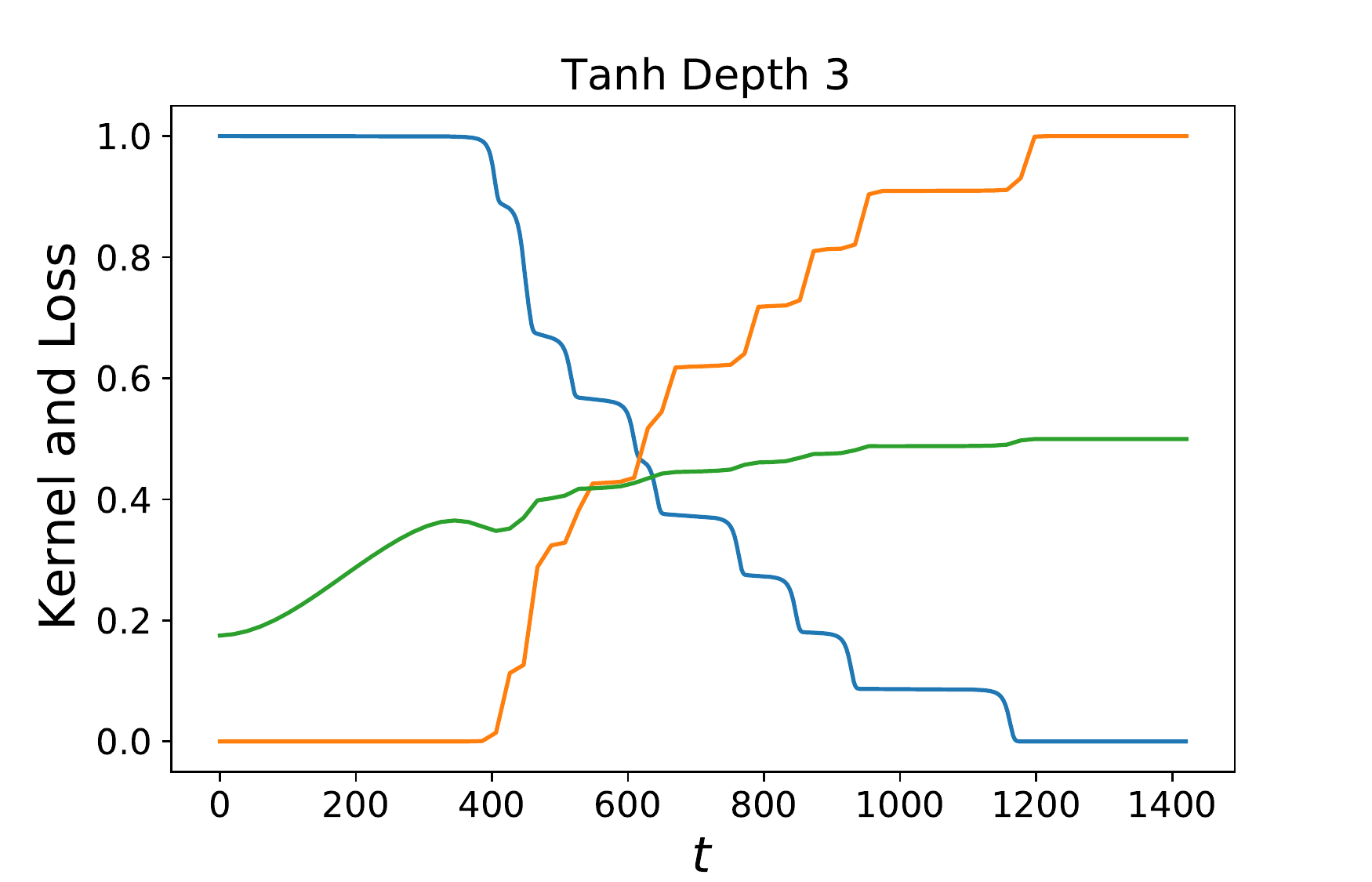}}
    
    \caption{Demonstration of a separation between alignment and spectral learning phases across networks trained on multi-class data. Here we train on whitened MNIST. Each network has $\sigma$ so that $\sigma^L = 10^{-4}$ for $L$ the depth of the network. (a) Depth two multi-class dynamics are very similar to single class. The analytically predicted final alignment is in dashed green. (b) For deeper multi-class networks, each singular values is learned far apart in time, and alignment does not as clearly precede loss decay.  (c) Similar dynamics are obtained for tanh networks. Note for deeper networks there is a stronger separation of the times to learn each singular value, resulting in the ten separated drops in the loss.}
    \label{fig:multi-class}
\end{figure}

\subsection{Final NNGP in multi-output case}

We can also gain intuition about the learned representations in each layer by looking at the NNGP kernels, which merely take inner-products between layer activations for different inputs. Let $\bm\beta \in \mathbb{R}^{C \times D}$ represent the optimal weight matrix which has the property (in the over-parameterized case) $\bm\beta \bm x^\mu = \bm y^\mu$. At the optimum, the neural network must learn $\bm\beta = \bm W^{L} ... \bm W^1$. Computing the SVD of $\bm\beta = \sum_\alpha \beta_{\alpha} \bm z_{\alpha} \bm v_{\alpha}^\top$ reveals that $u^*_\alpha = \beta_\alpha^{1/L}$ and $\bm r^1_\alpha= \bm v_{\alpha}$ and $\bm r^{L}_\alpha = \bm z_\alpha$. Using these facts, it is easy to derive the final NNGP kernel for layer $\ell$. 

\begin{equation}
\begin{aligned}
    K_{\ell}(\x,\x') &= \x^\top \bm W^{1 \top } ... \bm W^{\ell \top} \bm W^{\ell } ... \bm W^{1} \x' \\
    &= \x^\top \left[  \sum_\alpha \left( u_\alpha^* \right)^{2\ell} \bm r_\alpha^1 \bm r_\alpha^{1\top} \right] \x' = \x^\top \left[ \bm\beta^\top \bm\beta \right]^{\ell/L} \x'.
\end{aligned}
\end{equation}

Evaluating on the training set $\bm X \in \mathbb{R}^{D \times P}$ gives $\bm K_{\ell} = \bm X^\top \left[ \bm\beta^\top \bm\beta \right]^{\ell/L} \bm X$, which interpolates between $\bm X^\top \bm X$ at layer $\ell=0$ and $\bm Y^\top \bm Y$ at layer $L$.

\subsection{More Refined Balancing Analysis}\label{app:refined_balance}

We can derive corrections to our decomposition of the kernel by including the initial conditions in our derived conservation laws. In particular, we will consider balancing for large width networks. Note that the network does not need to be in the lazy regime. The balancing condition is
\begin{align}
    \bm W^{\ell}(t) \bm W^{\ell \top}(t) - \bm W^{\ell + 1 \top}(t) \bm W^{\ell + 1}(t) = \bm W^{\ell}_0\bm W^{\ell \top}_0 - \bm W^{\ell + 1 \top}_0 \bm W^{\ell + 1}_0.
\end{align}

Note that $\bm W^{\ell}_0 \in \mathbb{R}^{N_{\ell+1} \times N_{\ell}}$ has entries with zero mean and variance $\sigma^2/N_{\ell}$ in the standard parameterization. The products of initial matrices are therefore Wishart distributed. For sufficiently large widths, we can approximate the initial weight matrix products with their expectation over the random initialization
\begin{align}
    \bm W^{\ell}_0\bm W^{\ell \top}_0 - \bm W^{\ell + 1 \top}_0 \bm W^{\ell + 1}_0 \approx \sigma^2 \left(1 - \frac{N_{\ell+2}}{N_{\ell+1}} \right) \bm I.
\end{align}
This concentration becomes more accurate as the widths $N_{\ell}, N_{\ell+1} \to \infty$. For the last layer if the number of classes $C = N_L$ is sufficiently large, we also obtain similar concentration for the last layer $\bm W^{L \top} \bm W^{L} \approx \sigma^2 \frac{N_L}{N_{L-1}} \bm I$. Repeating the backward induction on the conservation law, we find the following recursively defined singular value decompositions 
\begin{equation}
    \begin{aligned}
    \bm W^{\ell} &= \sum_{\alpha} u_\alpha^{\ell}(t) \bm r_{\ell+1}^{\alpha}(t) \bm r_{\ell}^{\alpha}(t)^\top,
    \\ u^{\ell}_{\alpha}(t)^2 &= u_{\alpha}^{\ell+1}(t)^2 + \sigma^2 \left( 1 - \frac{N_{\ell+2}}{N_{\ell+1}} \right) = u_{\alpha}^{L}(t)^2 + \sigma^2 g_\ell ,
    \\
    g_{\ell} &= L - \ell - \sum_{k=\ell}^{L-1} \frac{N_{k+1}}{N_k} .
    \end{aligned}
\end{equation}
We note that the corrections $g_\ell$ vanish if all layers $k > \ell$ have the same width. Let $u_{\alpha} = u^{(L)}_{\alpha}(t)$. The condition for convergence is 
\begin{align}
    \prod_{\ell} \left[ u_{\alpha}^{\left(\ell\right)} \right]^2 = \prod_{\ell} \left[ (u_{\alpha})^2 + \sigma^2 g_{\ell} \right] = s_{\alpha}^2.
\end{align}
In the $\sigma^2 \to 0$ limit, we can solve that $u_{\alpha} \sim s^{1/L}$ as before. However, we can now obtain leading order corrections (in $\sigma^2$) which take the form of the form
\begin{align}
    \left[ u_{\alpha} \right]^2 \sim s^{2}_{\alpha} - \frac{g \sigma^2}{2L s_{\alpha}^{1/L}} \ , \ \sigma^2 \to 0  \ , \ g = \sum_{\ell=1}^L g_{\ell} = \frac{L(L-1)}{2} - \sum_{\ell=1}^L \sum_{k=\ell}^{L} \frac{N_{k+1}}{N_k},
\end{align}
which reveals that the size of the correction depends not only on $\sigma$ but also on the depth and network widths. Suppose all network widths were equal $N_{\ell} = N_k$, then the term $g = 0$ and there is no contribution from the first moment of the random weights.

\section{Laziness in Homogenous Networks}\label{app:laziness}

In this section we recapitulate the argument found in  \cite{Chizat2019OnLT}. The goal is to estimate how rapidly the gradient features on a test point $\nabla f(\x)$ change compared to the loss early in training. Let $\bm f \in \mathbb{R}^P$ represent the function outputs on the training set. We will compute the time derivatives of the loss and the network gradients.
\begin{align}
    \left|\frac{d}{dt} \nabla_{\bm\theta} f(\x) \right| &= \left|\nabla^2 f(\x) \cdot \frac{d \bm\theta}{dt} \right| = \left| \nabla^2 f(\x) \cdot \sum_\mu (y^\mu  - f^\mu) \nabla f^\mu \right|   ,
    \\ \left|\frac{d}{dt} \mathcal L \right| &= \left|\frac{d \bm\theta}{dt} \right|^2 = \left|\nabla \bm f \cdot (\bm y - \bm f) \right|^2.
\end{align}
Here, $|\cdot |_{op}$ denotes the operator norm of a matrix.
We are interested in the ratio of the loss' time derivative to the gradient's time derivative. With an initialization scale of $\bm f \sim O(\sigma^L)$ we find
\begin{align}
    \frac{\left|\frac{d}{dt} \nabla f (\x) \right|}{|\nabla \bm f|} \frac{\mathcal L}{\left| \frac{d}{dt} \mathcal L \right|} = \frac{|\nabla^2 f(\x) \cdot \sum_\mu (y^\mu - f^\mu) \nabla f^\mu| |\bm y - \bm f|^2}{|\nabla \bm f| |\nabla \bm f \cdot (\bm y - \bm f)|^2} \approx \frac{|\nabla^2 f(\x) \cdot \sum_\mu y^\mu \nabla f^\mu| |\bm y|^2}{|\nabla \bm f| |\nabla \bm f \cdot \bm y|^2},
\end{align}
where in the last step we approximated $y^\mu - f^\mu \approx y^\mu$ for small initialization scale since $y^\mu \sim O(1)$ and $f^\mu \sim O(\sigma^L)$. Now we will estimate the scale of each of the terms above. For a homogenous model $\nabla f \sim O(\sigma^{L-1})$ and $\nabla^2 f \sim O(\sigma^{L-2})$. Counting powers of $\sigma$ in numerator and denominator, we find that this quantity of interest scales as
\begin{align}
    \frac{\left|\frac{d}{dt} \nabla f (\x) \right|}{|\nabla \bm f|} \frac{\mathcal L}{\left| \frac{d}{dt} \mathcal L \right|} = O(\sigma^{-L}).
\end{align}
This result indicates that, from small initialization, the gradient NTK features and thus the kernel itself will evolve much more rapidly than the loss. This effect can be amplified by increasing depth and decreasing initialization scale.
\end{appendix}

\section{ResNet Experimental Details}\label{app:res_net_experiment}

Below, we provide the alignment and loss dynamics for wide resnet for CIFAR-10 with $100$ training points. Because the loss decreases significantly before the kernel reaches its final alignment value, the final NTK is not perfectly correlated with the final neural network function. The wide ResNet model is taken from \cite{Novak2020Neural} and is based on the original architecture of \cite{zagoruyko2017wide} with  a widening factor of $k=4$ and a single block per ResNet group $b=1$, giving a final network with $8$ trainable layers. For both Figure \ref{fig:relu_demo_silent_alignment} (d) and (e) as well as Figure \ref{fig:unwhitened_relu} use Adam with a learning rate of $\eta = 10^{-5}$ and initial weight scale of $\sigma = 0.3$ in standard parameterization for all intermediate blocks. For the first conv layer, we used $\sigma = 6.0$. We find that small initial weight variance in the first layer gives rise to less stable learning and worse kernel alignment. 
\begin{figure}[h]
    \centering
    \subfigure[Dynamics]{\includegraphics[width=0.4\linewidth]{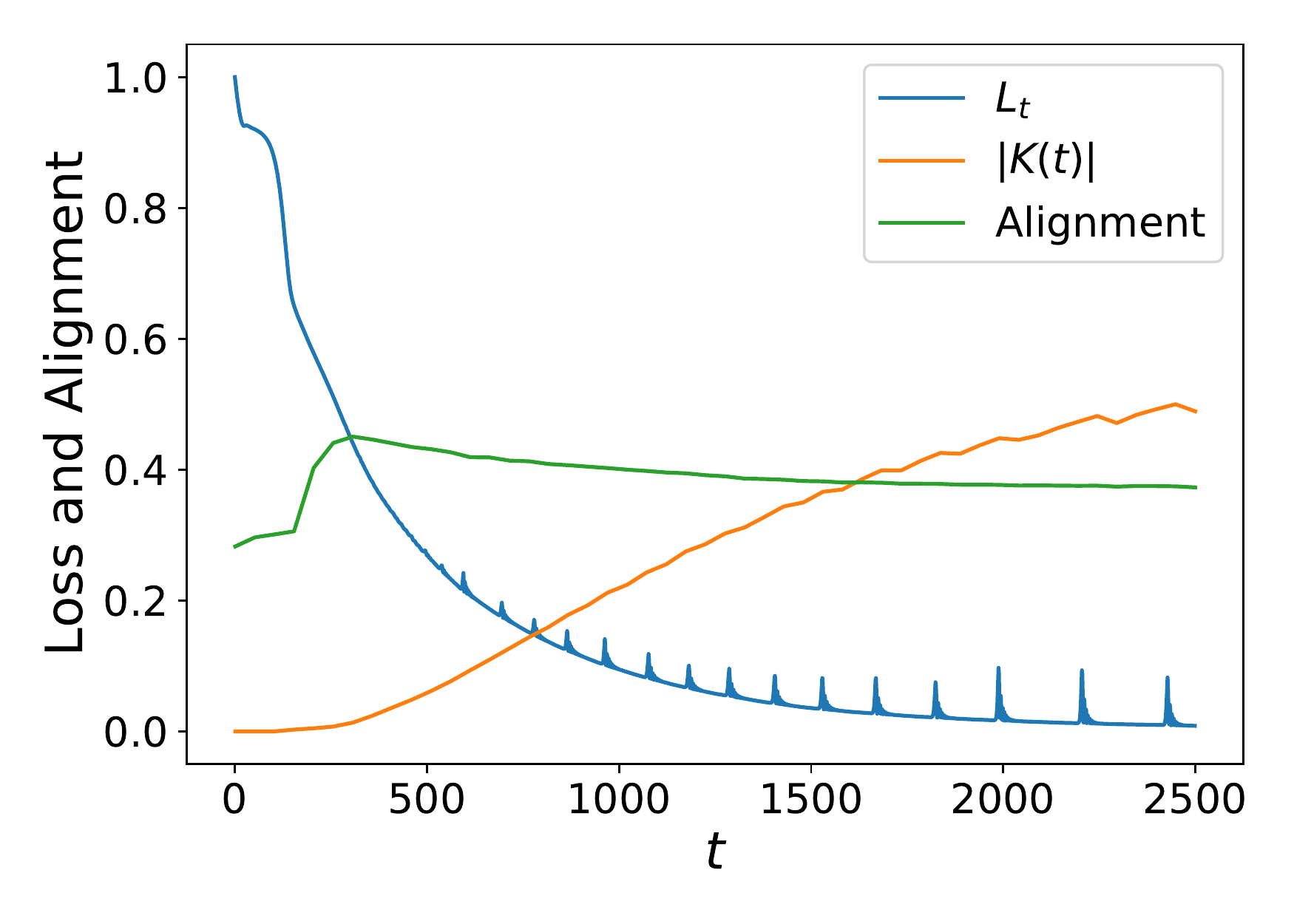}}
    \subfigure[Predictions]{\includegraphics[width=0.4\linewidth]{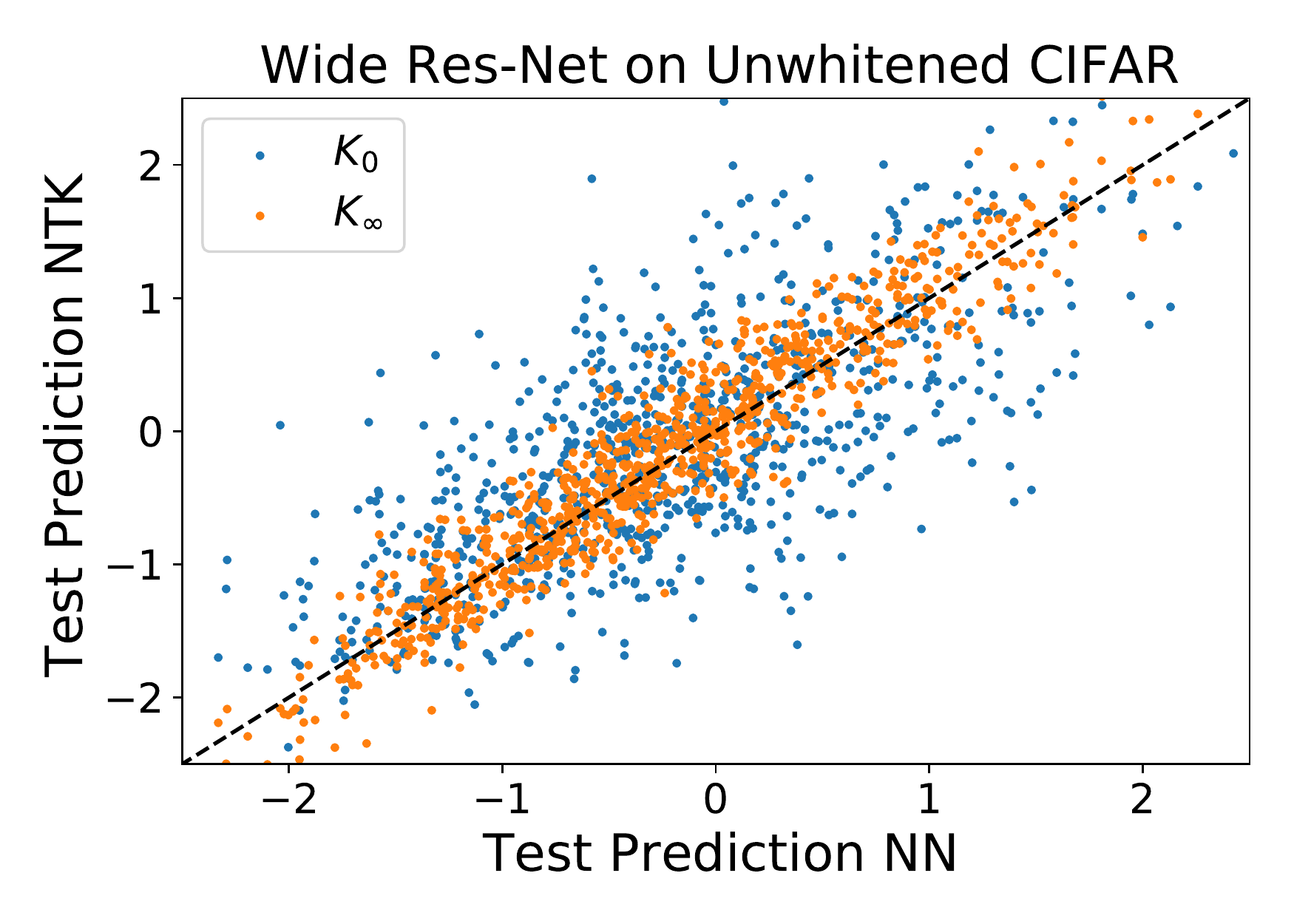}}
    \caption{The dynamics and predictions of a Wide-Resnet with $k=4$ and $b=1$ on $P=100$ unwhited CIFAR-10 images from the first two classes.}
    \label{fig:unwhitened_relu}
\end{figure}

Below, in Figure \ref{fig:deeper_res_net}, we provide comprehensive results for different depths which we control by increasing the number of blocks per group $b$, corresponding to WRNs with $6b+1$ trainable conv layers.  
\begin{figure}
    \centering
    \subfigure[Training Loss]{\includegraphics[width=0.4\linewidth]{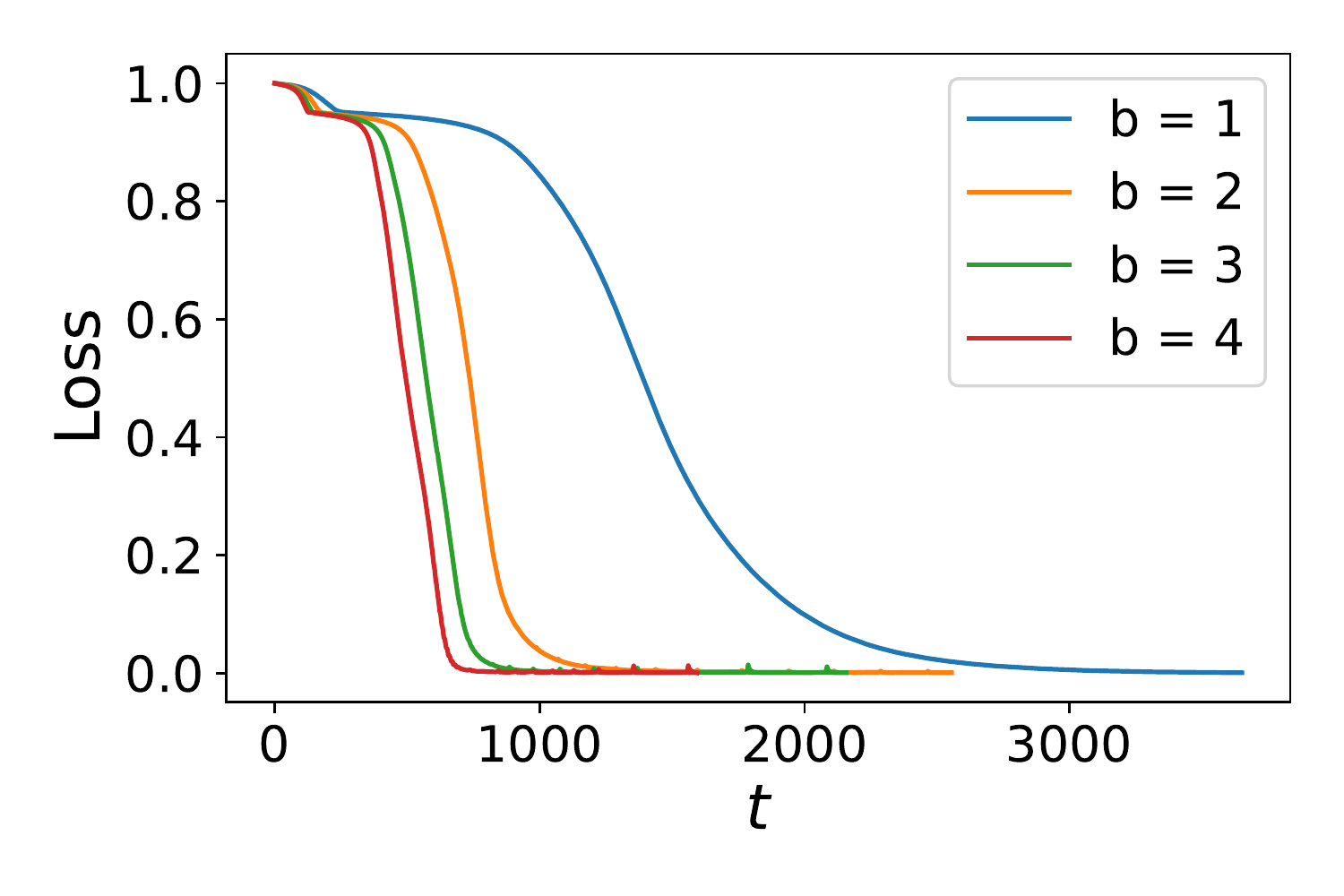}}
    \subfigure[Kernel Norm]{\includegraphics[width=0.4\linewidth]{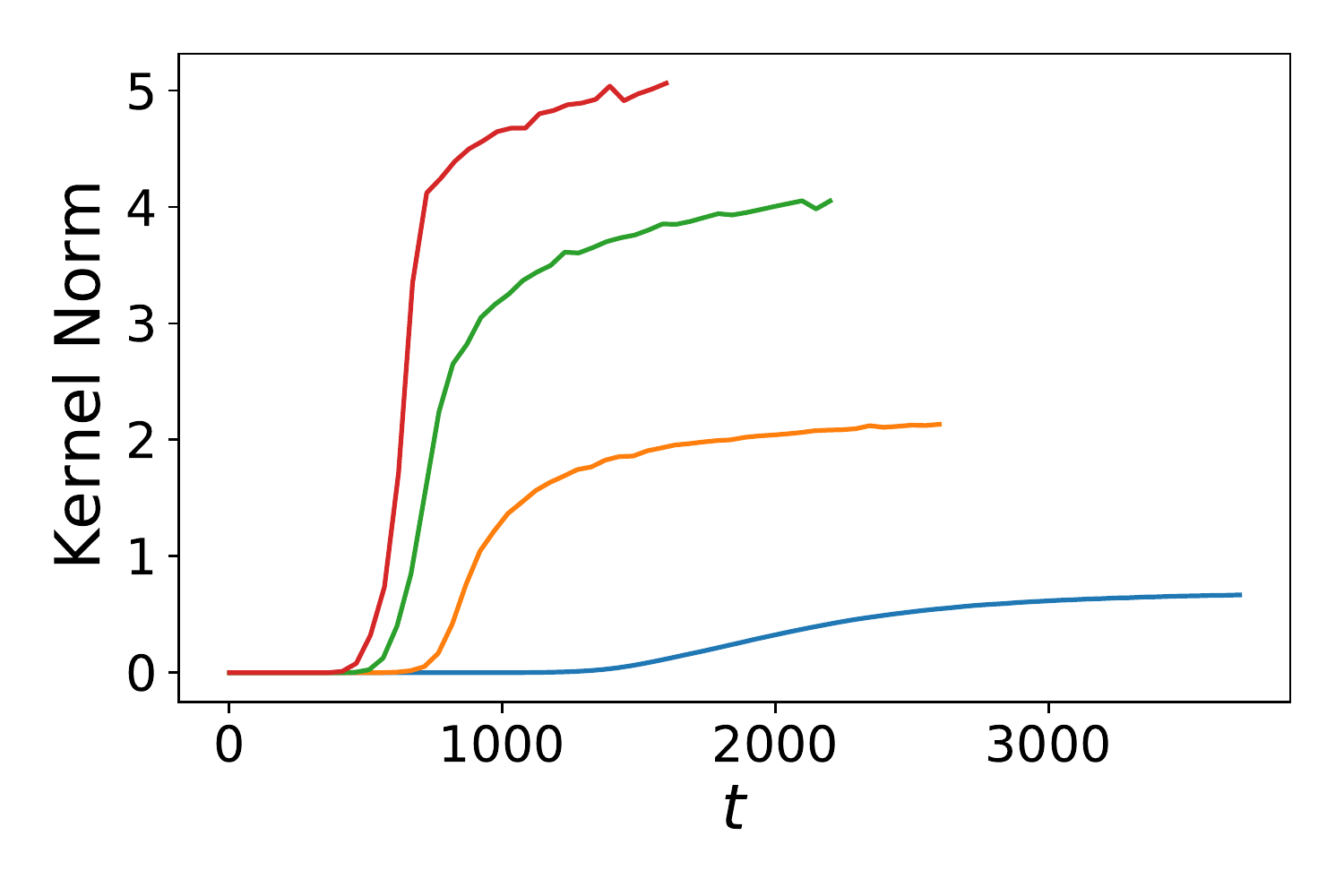}}
    \subfigure[Alignment Dynamics]{\includegraphics[width=0.4\linewidth]{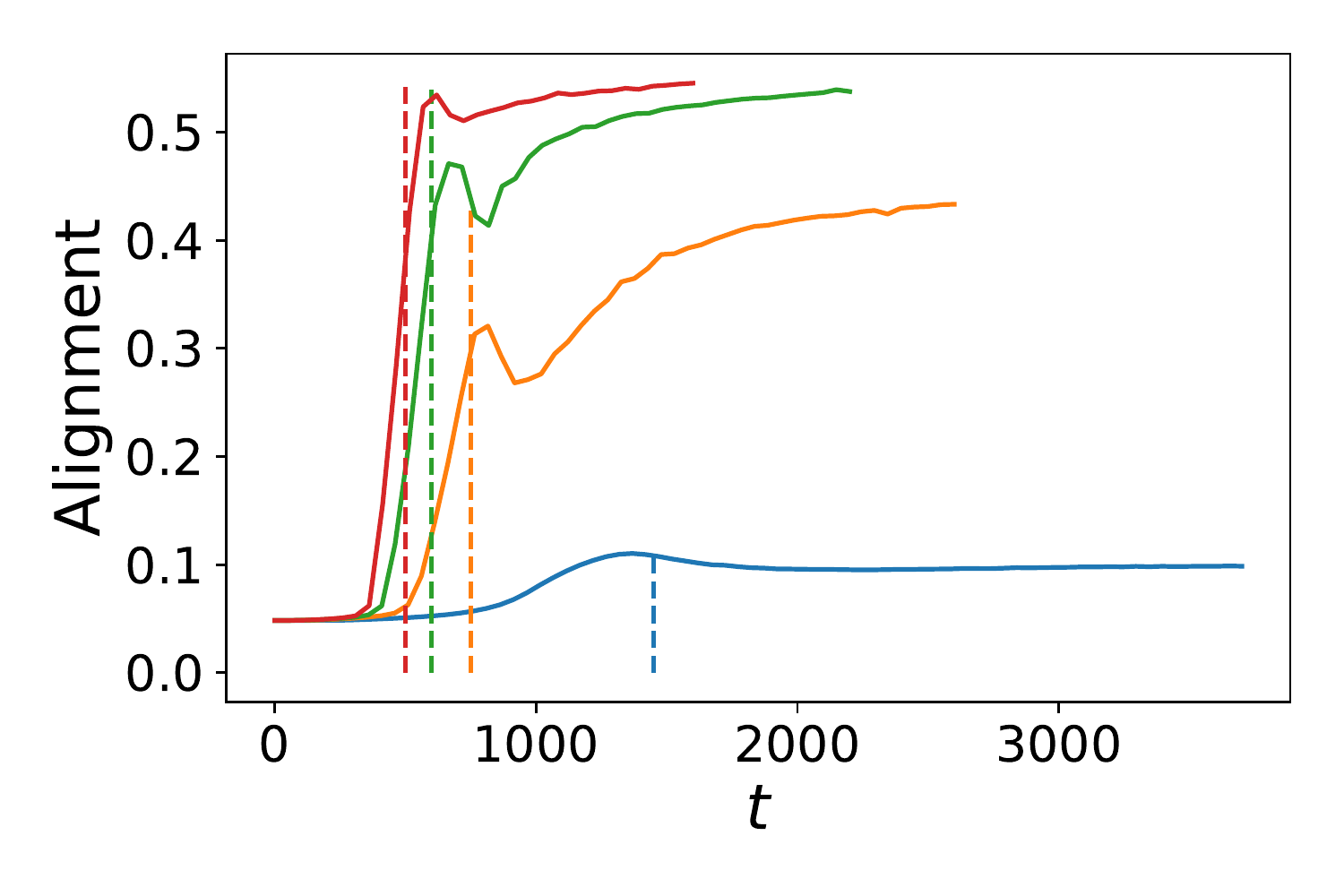}}
    \subfigure[Predictor Comparison]{\includegraphics[width=0.4\linewidth]{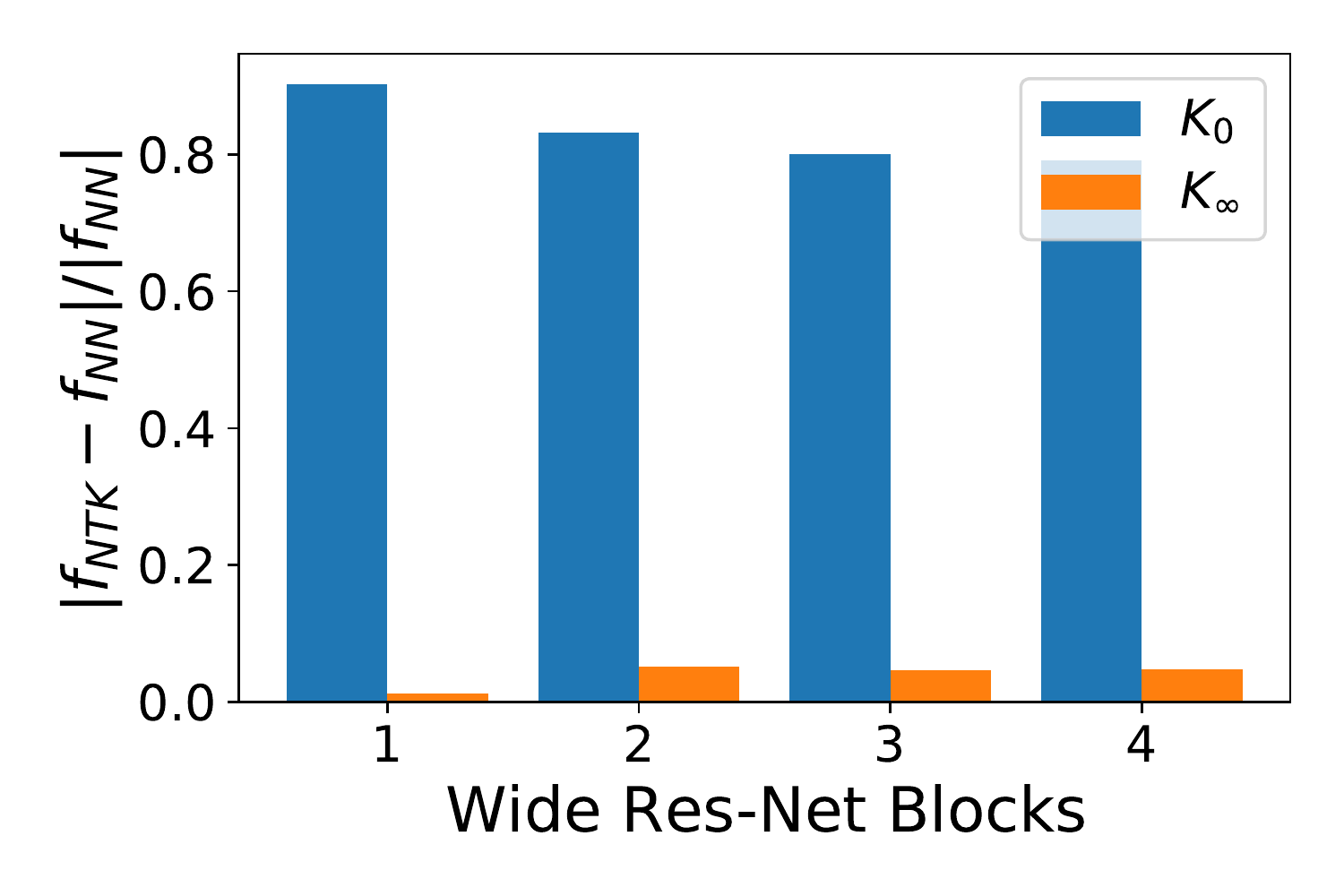}}
    \caption{The silent alignment effect is preserved across a large range of depths in WideResNet trained on whitened CIFAR-10 images. The number of blocks per group $b$ alters the total number of conv layers ($6b + 1$ total conv layers). (a) The deeper models train faster with Adam. (b) The final NTK norm increases with depth. (c) The alignment achieves close to its final value by the time the kernel norm reaches $10\%$ of its final value (dashed line), indicating successful silent alignment. (d) The neural network predictions are very close to the predictions of the final NTK but is not accurately predicted by the initial NTK. }
    \label{fig:deeper_res_net}
\end{figure}

\begin{figure}
    \centering
    \subfigure[Train Loss Dynamics]{\includegraphics[width=0.4\linewidth]{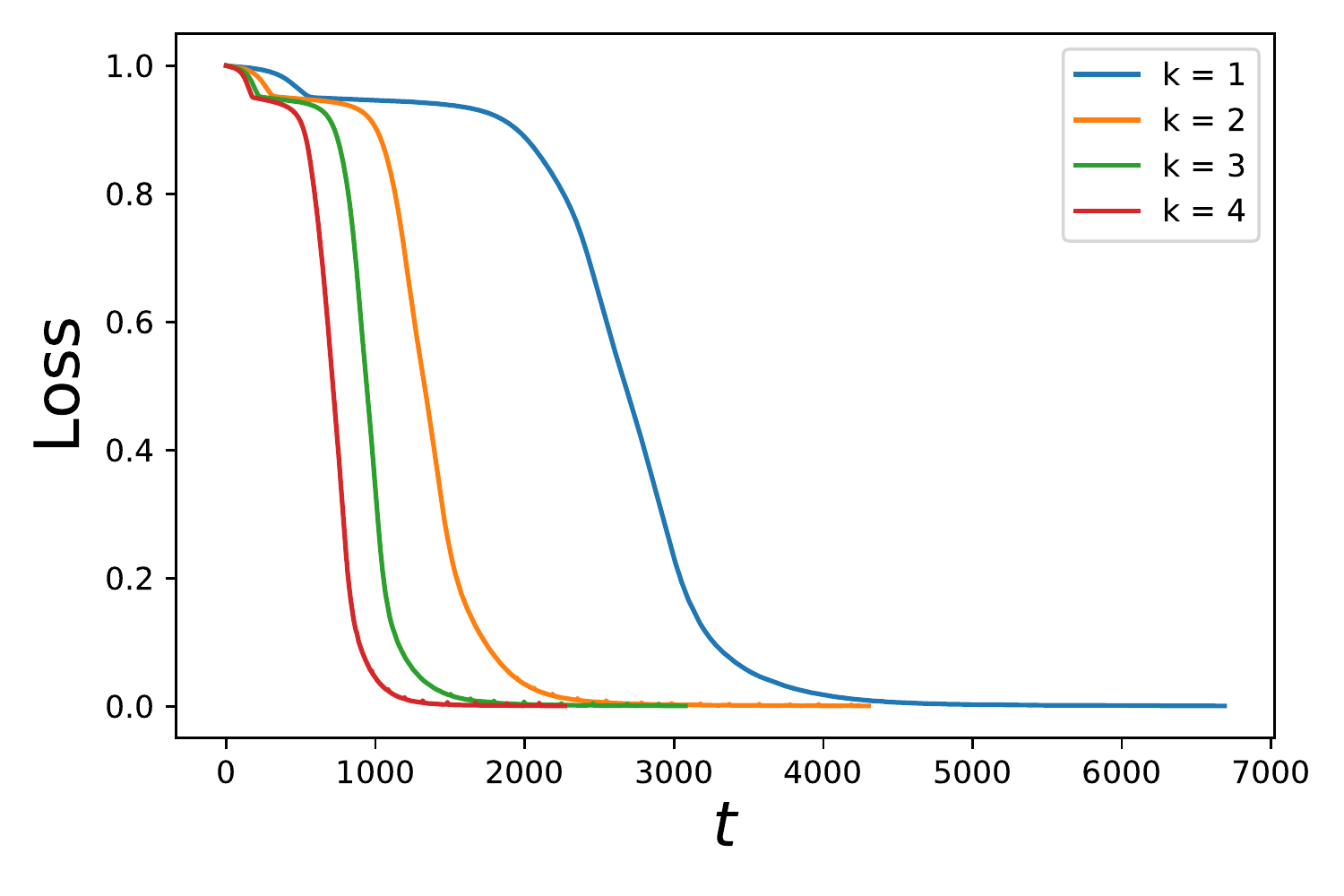}}
    \subfigure[Kernel Norm]{\includegraphics[width=0.4\linewidth]{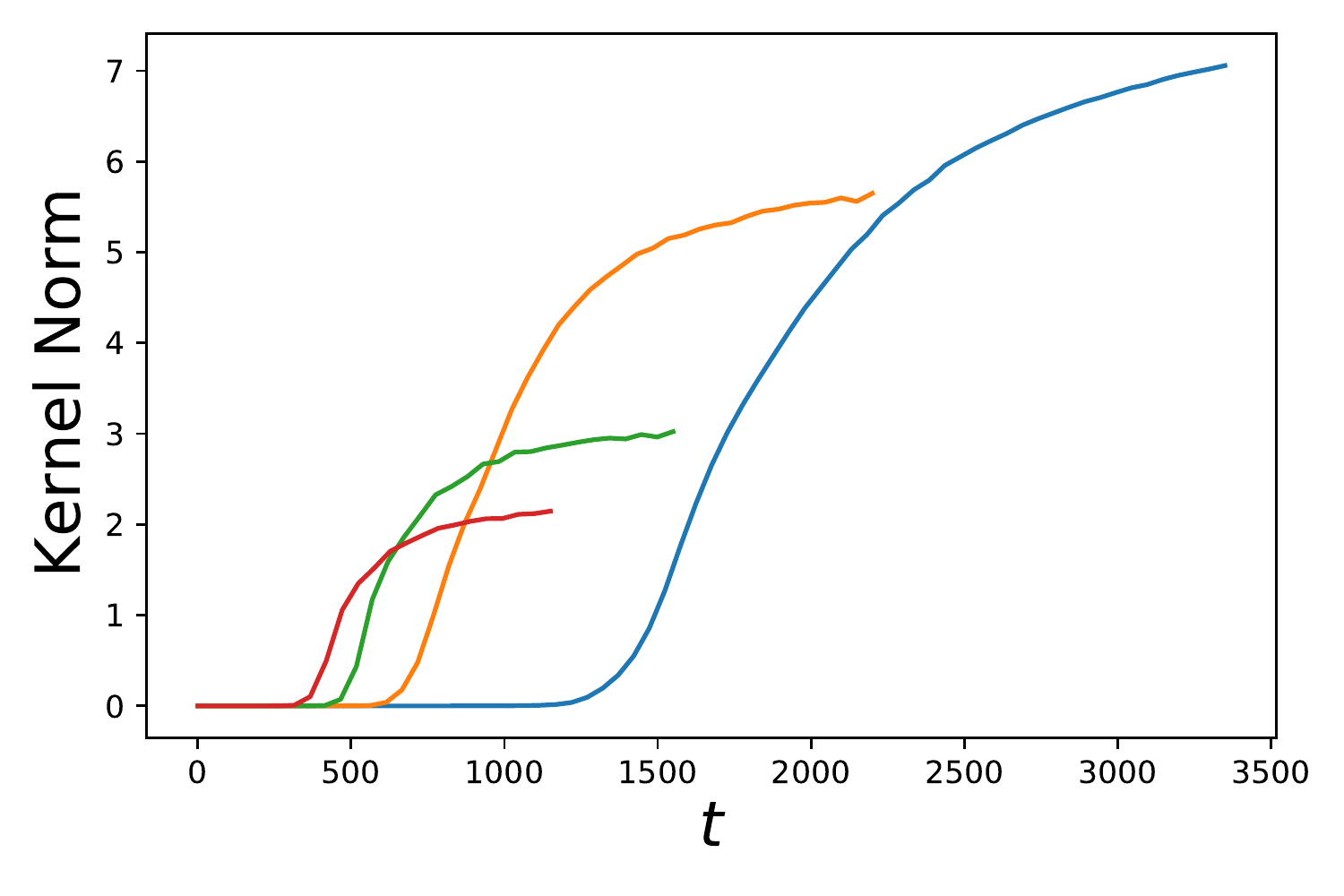}}
    \subfigure[Alignment]{\includegraphics[width=0.4\linewidth]{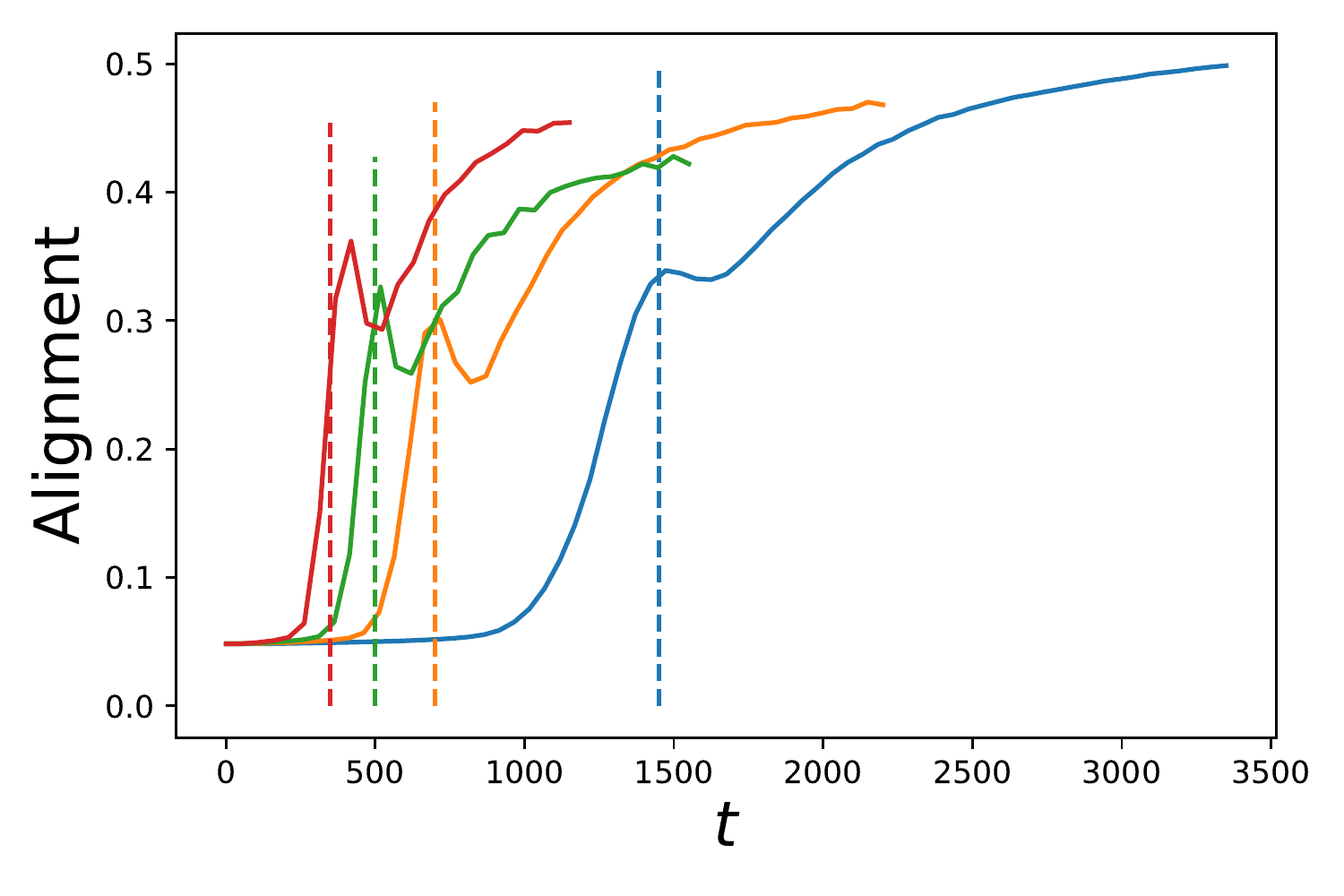}}
    \subfigure[Predictor Comparison]{\includegraphics[width=0.4\linewidth]{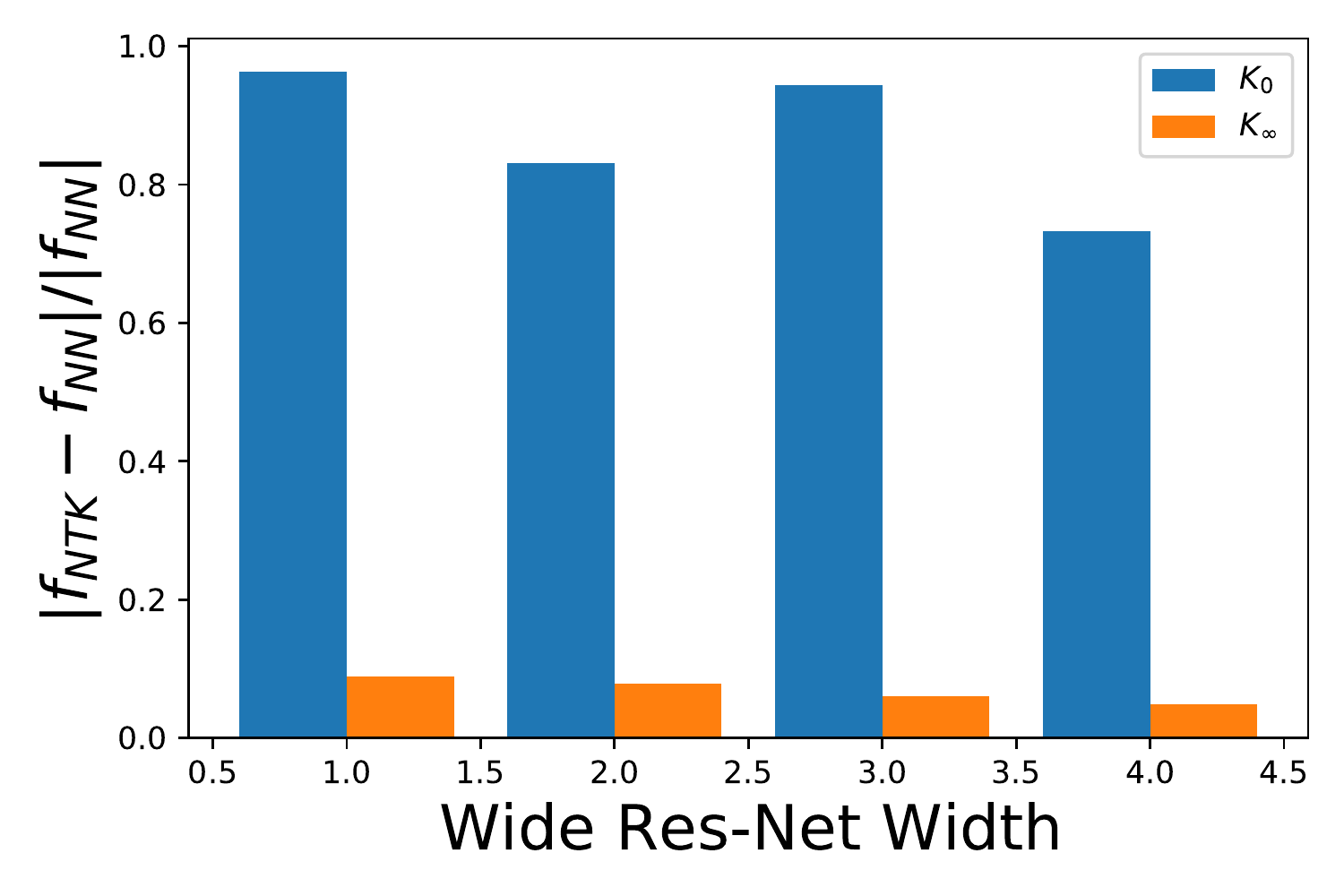}}
    \caption{Varying the ResNet widening parameter $k$ also alters the kernel and loss dynamics. (a)  The loss curve for $b=2$ WRNs with widening factor $k$. Wider networks train more quickly. (b) The kernel norm increases more rapidly for wider networks but changes by a smaller amount. (c) Alignment reaches close to its asymptote by the time the kernel norm grows to $10\%$ its final value (dashed). (d) The final kernel is a much better predictor of the NN function than the initial kernel. }
    \label{fig:my_label}
\end{figure}

\subsection{Adaptive Optimizers and the Relevant Kernel}
Many adaptive gradient methods compute updates to parameters $\theta_j$ according to
\begin{align}
    \dot{\theta}_{j}(t) =  - \eta_{j}(t) \frac{\partial \mathcal L}{\partial \theta_{j,t}} 
\end{align}
where $\eta_{j}(t)$ are time-varying functions which are computed in terms of the history of gradient moments for parameter $\theta_j$ or in terms of its instantaneous gradient. The relevant kernel at time $t$ which governs instantaneous evolution of network predictions is
\begin{align}
    K(\x,\x', t) = \sum_j \eta_j(t) \frac{\partial f(\x, t)}{\partial \theta_j} \frac{\partial f(\x',t)}{\partial \theta_j} 
\end{align}
since $\dot{f}(\x) =\sum_\mu K(\x,\x^\mu,t) (y^\mu - f(\x^\mu, t))$. Though we do not calculate this kernel which is relevant to the adaptive learning rate scheme since it is not supported in Neural Tangents API, this could be a worthy future investigation.

\end{document}